\newcommand{\OMIT}[1]{}
\newcommand{\refeq}[1]{Equation~(\ref{#1})}
\newcommand{\LDOTS}{\, ,\ \ldots\ ,}
\newcommand{\KL}{\mathrm{KL}}
\DeclareMathOperator*{\supp}{{\rm supp}}
\definecolor{darkgreen}{rgb}{0,0.5,0}
\definecolor{darkred}{rgb}{0.7,0,0}
\definecolor{teal}{rgb}{0.3,0.8,0.8}
\newcommand{\kibitz}[2]{\ifnum\Comments=1\textcolor{#1}{\textsf{\footnotesize #2}}\fi}
\newcommand{\version}{arxiv}
\newcommand{\df}[1]{\textcolor{blue}{#1}}
\newcommand{\dfc}[1]{\textcolor{red}{\protect{\sout{#1}}}}
\newcommand{\df}[1]{#1}
\newcommand{\dfc}[1]{}
\newcommand{\term}[1]{\ensuremath{\mathtt{#1}}\xspace}
\newcommand{\indicator}[1]{\one_{\left\{#1\right\}}}
\newcommand{\bench}{\term{Bench}} \newcommand{\ball}{\term{B}} \newcommand{\ExpL}{\lambda} \newcommand{\base}{\nu}
\newcommand{\eps}{\epsilon}
\newcommand{\smooth}{\term{Smooth}}
\newcommand{\expf}{\term{EXP4}}
\newcommand{\expfour}{\expf}
\newcommand{\corral}{\term{Corral}}
\newcommand{\pe}{\term{PolicyElimination}}
\newcommand{\smoothexp}{\term{ContinuousEXP4}}
\newcommand{\corralexpf}{\term{Corral\!+\!EXP4}}
\newcommand{\smoothpe}{\term{SmoothPolicyElimination}}
\newcommand{\zpelip}{\term{SmoothPolicyElimination.L}}
\DeclareMathOperator\median{median}
\newcommand{\Reg}{\term{Regret}}
\newcommand{\DTV}{\mathrm{d}_{\text{TV}}}
\newcommand{\alg}{\textsc{Alg}\xspace}
\newcommand{\Rs}{R_{\textrm{S}}}
\newcommand{\Rl}{R_{\textrm{Lip}}}
\newcommand{\setdiff}{\Delta}
\newcounter{protocol}
\title{Contextual Bandits with Continuous Actions:\\ Smoothing, Zooming, and Adapting}
\date{}
\author[1]{Akshay Krishnamurthy}
\author[1]{John Langford}
\author[1]{Aleksandrs Slivkins}
\author[2]{Chicheng Zhang\thanks{\{akshaykr,jcl,slivkins\}@microsoft.com, chichengz@cs.arizona.edu}}
\affil[1]{Microsoft Research, New York, NY}
\affil[2]{University of Arizona, Tucson, AZ}
\begin{document}

\maketitle

\begin{abstract}
We study contextual bandit learning with an abstract policy class and
continuous action space.  We obtain two qualitatively different regret
bounds: one competes with a smoothed version of the policy class under
no continuity assumptions, while the other requires standard Lipschitz
assumptions. Both bounds exhibit data-dependent ``zooming'' behavior
and, with no tuning, yield improved guarantees for benign problems.
We also study adapting to unknown smoothness parameters, establishing
a price-of-adaptivity and deriving optimal adaptive algorithms that
require no additional information.
\end{abstract}
 
\newpage
\setcounter{tocdepth}{2}
\tableofcontents
\newpage

\section{Introduction}
\label{sec:intro}

We consider contextual bandits, a setting in which a learner
repeatedly makes an action on the basis of contextual information and
observes a loss for the action, with the goal of minimizing cumulative
loss over a series of rounds. Contextual bandit learning has received
much attention, and has seen substantial success in
practice~\citep[e.g.,][]{auer2002nonstochastic,Langford-nips07,
  agarwal2014taming,DS-arxiv}.  This line of work mostly considers small,
finite action spaces, yet in many real-world problems actions are chosen from an interval, so the action space is continuous and infinite. \df{Therefore, we ask:}

\vspace{-0.2cm}
\begin{quote}
\emph{How can we learn to make decisions from continuous action spaces,\\ {using (only) bandit feedback?}}
\end{quote}
\vspace{-0.2cm}

We could assume that nearby actions have similar losses, for example
that the losses are Lipschitz continuous as a function of the
action \citep[following][and a long line of subsequent
  work]{agrawal1995continuum}.  Then we could discretize the action space
and apply generic contextual bandit techniques \citep{Bobby-nips04} or
more refined ``zooming''
approaches~\citep{kleinberg2013bandits,bubeck2011x,slivkins2014contextual}
that are specialized to the Lipschitz structure.

However, this approach has several drawbacks.
A global Lipschitz assumption is crude and limiting; actual
problems exhibit more complex loss structures where smoothness varies
with location, often with discontinuities.
Second, prior works incorporating context --- including the zooming approaches --- employ a
nonparametric benchmark set of policies, which yields a poor dependence on the
context dimension and prevents application beyond low-dimensional
context spaces.  Finally, existing algorithms require knowledge of the
Lipschitz constant {or other pertinent parameters}, which are typically unknown.

Here we show that it is possible to avoid all of these drawbacks with
a conceptually new approach, resulting in a more robust solution for managing
continuous action spaces.
The key idea is to \emph{smooth} the actions: each action $a$ is mapped to a well-behaved distribution over actions.
{When the action space is the interval $[0,1]$, this distribution can be a uniform distribution over a narrow band around $a$: an interval $[a-h,a+h]$, where $h>0$ is a given \emph{bandwidth} parameter.
Rather than restrict the loss function, we posit a different, ``smoothed'' benchmark.}
This approach leads to provable guarantees with no assumptions on the loss function,
since the loss for smoothed actions is always well-behaved.
Essentially, we may focus on estimation considerations while ignoring approximation issues.
We recover prior results that assume a small Lipschitz constant, but the guarantees are meaningful in much broader scenarios.

Our algorithms work with any competitor policy set $\Pi$ of
mappings from context to actions, which we smooth as above. We measure
performance by comparing the learner's loss to the loss of the best
smoothed policy, and our guarantees scale with $\log |\Pi|$,
regardless of the dimensionality of the context space.
\df{Compared with prior work, this recovers some known worst-case results that can only accommodate nonparametric policy sets~\citep{slivkins2014contextual,cesa2017algorithmic}, but, more importantly, our results accommodate \emph{parametric} policy sets that scale to high-dimensional context spaces.}
\dfc{This recovers
results for nonparametric policy sets, but more importantly
accommodates \emph{parametric} policies that scale to high-dimensional
context spaces. }Further, we are able to exploit benign
structure in the policy set and the instance to obtain {better regret} rates.

We also design algorithms that require no knowledge of problem parameters. {Particularly, our algorithm works for all bandwidths $h$ at once, and} is  \emph{optimally adaptive}, matching lower bounds that we prove here. We {accomplish this}
with a unified algorithmic approach.

\begin{table}
\begin{center}
\begin{tabular}{|l|l|l|l|l|l|}
\hline
Type & Setting & Params & Regret Bound & Status & Sec.\\ \hline Smoothed & Worst-case & $h \in (0,1]$ & $\Theta\rbr{\sqrt{\nicefrac{T}{h}}}$ & New &~\sref{sec:smooth_instance}\\
Smoothed & \df{Instance}-dependent & $ h \in (0,1]$ & $O\rbr{\min_{\epsilon} T\epsilon + \theta_h(\epsilon)}$ & New &~\sref{sec:smooth_instance}\\
Smoothed & Adaptive: $h \in (0,1]$ & None & $\Theta\rbr{\sqrt{T}/h}$ & New &~\sref{sec:smooth_adapt}\\
Lipschitz & Worst-case & $L \geq 1$ & $\Theta\rbr{T^{2/3}L^{1/3}}$ & {``Old"} &~\sref{sec:lip_instance}\\
Lipschitz & \df{Instance}-dependent & $L \geq 1$ & $O\rbr{\min_{\epsilon} TL\epsilon + \nicefrac{\psi_{L}(\epsilon)}{L} }$ & {New} &~\sref{sec:lip_instance}\\
Lipschitz & Adaptive: $L \geq 1$ & None & $\Theta(T^{2/3}\sqrt{L})$ & New &~\sref{sec:lip_adapt}\\
\hline
\end{tabular}
\vspace{-0.2cm}
\caption{A summary of results for stochastic contextual bandits,
  specialized to action space $[0,1]$. For notation, $T$
  is the number of rounds, $h$ is the smoothing bandwidth, and
  $\theta_h(\epsilon) \leq 1/(h\epsilon)$ is the \emph{smoothing
    coefficient}. For the Lipschitz results, $L$ is the Lipschitz
  constant and $\psi_L(\epsilon) \leq \nicefrac{1}{\epsilon^2}$ is the
  \emph{policy zooming coefficient}. All algorithms take $T$ and $\Pi$
  as additional inputs. Logarithmic dependence on $|\Pi|$ and $T$ is
  suppressed in all upper bounds.  }
\label{tab:results}
\vspace{-0.5cm}
\end{center}
\end{table}

Our contributions, specialized to the interval $[0,1]$ action space for clarity, are:
\begin{enumerate}\item We define a new notion of \emph{smoothed regret} where policies
  map contexts to distributions over actions. These distributions are parametrized by
  a bandwidth $h$ governing the spread. We show that the optimal
  worst-case regret bound with bandwidth $h$ is
  $\Theta(\sqrt{\nicefrac{T}{h} \log |\Pi|})$, which requires no smoothness assumptions on the losses  (first row
  of~\pref{tab:results}).
\item We obtain \df{instance}-dependent guarantees in terms of a
  \emph{smoothing coefficient}, which can yield much faster rates in
  favorable instances (second row of~\pref{tab:results}).
\item We obtain an adaptive algorithm with $\sqrt{T}/h$
  regret bound for all bandwidths $h$ simultaneously. Further we
  show this to be optimal, demonstrating a price of adaptivity (third row of~\pref{tab:results}).
\end{enumerate}

We obtain analogous results when the losses are $L$-Lipschitz (see rows 3-6 of~\pref{tab:results}). First, we obtain an instance-dependent result {with improved regret rates when near-optimal arms are confined to a relatively small region of the action space. We capture the improvements via a new quantity called the \emph{policy zooming coefficient}, generalizing the \emph{zooming dimension} from prior work on the non-contextual case. Our regret bounds generalize and improve those from prior work on ``zooming'' in Lipschitz bandits, whereby the algorithm gradually ``zooms in'' on more promising regions of the action space. Second, we design an algorithm that adapts to an unknown $L$ and obtain matching lower bounds, thus} demonstrating the ``price of adaptivity'' in the Lipschitz case.

{Our results hold in much more general settings: for higher-dimensional and (almost) arbitrary action spaces and arbitrary smoothing distributions. Our results} also apply to the non-contextual case, where we obtain several new guarantees.

Our algorithms are not computationally efficient, with running times that scale polynomially in $|\Pi|$. The significance lies is in the new conceptual approach and the regret bounds. However, our algorithms \emph{are} computationally efficient in the non-contextual case.

\paragraph{\df{Our techniques.}}
\df{Our core conceptual contribution is the new definition of smoothed
  regret for continuous-action contextual bandits, which, as we have
  mentioned, offers many advantages over previous discretization based
  approaches. While many of our results are based on adapting
  techniques from prior work to the smoothing framework, there are
  many technical challenges that we pause now to highlight.}

\df{Our instance dependent guarantees are based on the \pe algorithm
  of~\citet{dudik2011efficient}, which was originally designed for
  discrete action stochastic contextual bandits. Here we provide a
  refined analysis of this algorithm, showing that it adapts to the
  effective size of the action space, which informally corresponds to
  the number of actions selected by the near-optimal policies. To
  obtain this adaptivity property, we crucially use the
  median-of-means technique to avoid an unfavorable range dependence
  in our estimates of the expected loss of each policy. We believe
  these robust estimation techniques will be broadly useful in other
  bandit settings. Indeed, since the preliminary version of this
  paper, robust estimators have been successfully used
  by~\citet{wei2020taking} to incorporate loss predictors into
  contextual bandit algorithms.}

\df{Our adaptive algorithms are based on aggregating instances of
  \expf~\citep{auer2002nonstochastic} using the \corral algorithm
  of~\citet{agarwal2016corralling}. The key challenge here is that
  \corral can only aggregate over a finite number of base algorithm,
  but we would like our final bound to hold for all bandwidths $h$
  taking continuous values. We address this with a discretization
  argument, using smoothing to show that a single instance of \expf
  obtains the desired guarantee for a small interval of $h$ values,
  which then allows us to use \corral with a finite number of base
  algorithms.}

\paragraph{Roadmap.}
For the majority of the paper, we focus on the setting where the
action space is the unit interval, which simplifies the discussion
while preserving all of the key ideas. The setup and key definitions
are described in~\pref{sec:defns-smooth}. Assumption-free results for
smoothed regret are developed in~\pref{sec:results-smooth} and results
for Lipschitz problems are developed
in~\pref{sec:results-lip}. General theorems extending beyond the unit
interval action space are presented in~\pref{sec:extensions}, where we
also introduce the necessary additional definitions. {The algorithms are analyzed in \pref{sec:proofs-instance-dependent} and \pref{sec:proofs-corral}.}
The lower bounds are presented in~\pref{sec:lower}. We close the paper with
some future directions.
 \section{Related work}
\label{sec:related-work}

With small, discrete action spaces, contextual bandit learning is
quite mature, with rich theoretical results and successful deployments
in practice. To handle large or infinite action spaces, two high-level
approaches exist
{\citep[see books][for surveys and  background]{bubeck2012regret,slivkins-MABbook,lattimore2018bandit}}.
The parametric approach, including work on linear or
combinatorial bandits, posits that the loss is a parametric function
of the action, e.g., a linear function. The
nonparametric approach, which is closer to our results, typically makes much weaker continuity assumptions.\footnote{However, we emphasize that for smoothed regret, we make no assumptions on the loss.}

Bandits with Lipschitz assumptions were introduced
in~\citet{agrawal1995continuum}, and optimally solved in the worst
case by \citet{Bobby-nips04}. \citet{LipschitzMAB-stoc08,kleinberg2013bandits,bubeck2011x}
achieve data-dependent regret bounds via {``zooming'' algorithms which gradually} ``zoom in''
on the more promising regions of the action space. {\citet{LipschitzMAB-stoc08,kleinberg2013bandits,DichotomyMAB-soda10} consider regret rates with instance-dependent constant,
analogous to the well-known $\log(t)$ instance-dependent rates for finitely many arms,
and use zooming algorithms to characterize the corresponding worst-case optimal regret rates for any given metric space.} \asedit{Further work focused on relaxing the smoothness assumptions and adapting to unknown smoothness parameters, as well as extensions to contextual bandits
\citep[see Ch. 4][for  a more comprehensive background]{slivkins-MABbook}.}

Several papers
relax global smoothness assumptions with various local
definitions~\citep{auer2007improved,LipschitzMAB-stoc08,kleinberg2013bandits,bubeck2011x,ImplicitMAB-nips11,valko2013stochastic,minsker2013estimation,grill2015black,shang2019general}. While
the assumptions \df{and results} vary, our smoothing-based approach can be used in many
of these settings. More importantly, in contrast with these
approaches, our guarantees remain meaningful even in pathological
instances, for example when the global optimum is a discontinuity as
in the top panel of~\pref{fig:examples} (See~\pref{ex:discontinuous}).

While most of this literature focuses on the non-contextual version,
three papers consider contextual settings, albeit only with fixed policy
sets $\Pi$.  \asedit{\citet{Pal-Bandits-aistats10} and} \cite{slivkins2014contextual} posit that the mean loss
function is Lipschitz in both context $x$ and action $a$ and the
learner must compete with the best mapping from $\Xcal$ to
$\Acal$. \asedit{While \citet{Pal-Bandits-aistats10} focus on worst-case regret bounds, the algorithm and guarantees in \cite{slivkins2014contextual}} exhibit ``zooming'' behavior in
the action space, which is qualitatively similar to ours. However, his
regret bound also has a zooming-dependence on the context dimension,
whereas our regret bound applies to arbitrary policy sets and defines
packing numbers via expectation over contexts rather than supremum.
\citet{cesa2017algorithmic} competes with policies that are themselves
Lipschitz (w.r.t. a given metric on contexts). We can recover their
result via~\pref{corr:stoch_ring_lip_instance} and a suitable
discretized policy set.

Turning to adaptivity,~\citet{bubeck2011lipschitz} develops an
algorithm that adapts to the Lipschitz constant in the non-contextual
setting given a bound on the second derivative.
\citet{locatelli2018adaptivity} obtain optimal adaptive
algorithms, but require knowledge of either the value of the minimum,
or a sharp bound on the achievable
regret.
\citet{ImplicitMAB-nips11,Bull-bandits14} achieve optimal regret bounds in terms of the zooming dimension, but their regret bounds depend on a certain ``quality parameter.''
\df{A line of work studying the non-contextual setting~\citep{valko2013stochastic,grill2015black,shang2019general}, establishes adaptive guarantees when performance is measured in terms of optimization error, which is the difference between the best action selected and the globally optimal action. However, these results do not translate to our performance measure, cumulative regret.}
Moreover, \df{all of the above} results concern the stochastic setting, while our optimally adaptive guarantees carry through to the adversarial setting.
\citet{locatelli2018adaptivity} also
obtain lower bounds against adapting to the smoothness exponent, and
we build on their construction for our lower bounds.

\df{A parallel line of work on Bayesian optimization, considers the
  related problem of maximizing either a sample from a Gaussian
  process, or a function with bounded norm in some Reproducing Kernel
  Hilbert Space (RKHS)~\citep{srinivas2012information}. The conceptual
  difference with our work is that these results impose regularity
  assumptions on the problem, in the same vein as prior work with
  Lipschitz assumptions, while we make no assumptions and instead
  provide guarantees in terms of smoothed regret. On the more
  technical side, ~\citet{krause2011contextual} consider a contextual
  Bayesian optimization setting where there is a kernel over the joint
  context-action space, which is analogous to the Lipschitz contextual
  bandits setting studied by~\citet{slivkins2014contextual}. As
  mentioned above, these results consider a specific ``nonparametric''
  policy set, while our results apply to arbitrary policy
  sets.~\citet{berkenkamp2019no} establish adaptive guarantees for
  Bayesian optimization, but they obtain incomparable results using
  very different techniques from ours.}

Finally, our smoothing-based importance weighted loss
estimator~\pref{eq:ips_estimate} was analyzed
by~\citet{kallus2018policy, chen2016personalized} in the related
off-policy evaluation problem, but they do not consider the smoothed
regret benchmark or the online setting, so the results are
considerably different.  We also use the median-of-means approach from
robust statistics --- specifically a result of~\citet{hsu2016loss} ---
to avoid an unfavorable range dependence in our loss estimator.
This
estimator has been used by~\citet{sen2018contextual} for contextual
bandits with discrete actions, but their results are incomparable to
ours.

 \section{Smoothed regret}
\label{sec:defns-smooth}

We work in a standard setup for stochastic contextual bandits. We have
a context space $\Xcal$, action space $\Acal$, a (possibly large but finite) policy set $\Pi: \Xcal \to
\Acal$, and a distribution $\Dcal$ over context/loss pairs
    $\Xcal \times \{\text{functions } \Acal \to [0,1]\}$.
The protocol proceeds
for $T$ rounds where in each round $t$: (1) nature samples
$(x_t,\ell_t) \sim \Dcal$; (2) the learner observes $x_t$ and chooses
an action $a_t\in \Acal$; (3) the learner suffers loss $\ell_t(a_t)$,
which is observed.  For simplicity, we focus on the case when \df{$\Dcal_X$,} the
marginal distribution \df{of $\Dcal$} over $\Xcal$ is known.\footnote{We mention how this can be relaxed in the next section.}
The learner's goal is to minimize
regret relative to the policy class.

\paragraph{Key new definitions.}

We depart from the standard setup by positing a \emph{smoothing operator}
    \[ \smooth_h : \Acal \to \Delta(\Acal),\]
where \df{$\Delta(\Acal)$ is the set of probability distributions over $\Acal$ and} $h \geq 0$ is the \emph{bandwidth}: a parameter that determines the spread of the distribution.\footnote{The term \emph{bandwidth} here is in line with the nonparametric statistics literature.}
Bandwidth $h=0$ corresponds to the Dirac distribution.
Each action $a$ then maps to the \emph{smoothed  action}
    $\smooth_h(a)$,
and each policy $\pi\in \Pi$ maps to a randomized \emph{smoothed policy}
    $\smooth_h (\pi): x \mapsto \smooth_h(\pi(x)) $.
We compete with the \emph{smoothed policy class}
    \[ \Pi_h \defeq \{\smooth_h(\pi):\; \pi\in \Pi\} .\]

\noindent We then define the \emph{smoothed loss} of a given policy $\pi \in \Pi$ and the \emph{benchmark} optimal loss as
\begin{align}\label{eq:defn-smooth-loss}
    \ExpL_h(\pi) \defeq \EE_{(x,\ell)\sim \Dcal}\; \EE_{a \sim \smooth_h(\pi(x))} \sbr{ \ell(a) }, \quad\textrm{and} \quad  \df{\bench(\Pi_h) \defeq \inf_{\pi \in \Pi} \lambda_h(\pi) = \inf_{\pi \in \Pi_h} \lambda_0(\pi)}.
\end{align}
\df{Note that there is a duality between smoothing the policy class and smoothing the loss function, as $\lambda_h(\pi) = \lambda_0(\pi_h)$. }
We are interested in \emph{smoothed regret}, which compares the learner's total loss against the benchmark:
\begin{align*}
  \Reg(T,\Pi_h) \defeq  \textstyle \EE\sbr{ \sum_{t=1}^T \ell_t(a_t) }
    - T\cdot\bench(\Pi_h).
\end{align*}
Our regret bounds work for an arbitrary policy set $\Pi$, leaving the choice of $\Pi$ to the practitioner. For comparison, a standard benchmark for contextual bandits is $\bench(\Pi)$, the best policy in the original policy class $\Pi$, and one is interested in $\Reg(T,\Pi)$.

For the first several sections of the paper,
we posit that the actions set is a unit interval:
    $\Acal \defeq [0,1]$,
endowed with a metric
    $\rho(a,a') \defeq |a-a'|$.
    $\smooth_h(a)$
is defined as a uniform distribution over the closed ball
    $\ball_h(a) \defeq \{a'\in \Acal:\; \rho(a,a')\leq h\} = [a-h,a+h]\cap [0,1]$.
Let $\base$ denote the Lebesgue measure, which corresponds to the uniform distribution over $[0,1]$.
\df{As notation,
$\smooth_{\pi,h}(a|x)$ is the probability density, w.r.t., $\base$, for
$\smooth_h(\pi(x))$ at action $a$.}
In~\pref{sec:extensions} we present results that apply to a more general setting where the action space $\Acal$ is embedded in some ambient space and the smoothing operator is given by a \df{probability} kernel\dfc{ function}.
However, all of the key ideas appear in the case of the unit interval.

For some intuition, the bandwidth $h$ governs a bias-variance tradeoff
inherent in the continuous-action setting:
for small $h$ the smoothed loss $\lambda_h(\pi)$
closely approximates the true loss $\lambda_0(\pi)$, but small $h$
also admits worse smoothed regret guarantees.

\begin{example}
The well-studied non-contextual version of the problem fits into our
framework as follows: there is only one context $\Xcal \defeq \{x_0\}$ and
policies are in one-to-one correspondence with actions: $\Pi \defeq
\cbr{x_0 \mapsto a: a \in \Acal}$. A problem instance is characterized
by the expected loss function $\lambda_0(a) \defeq \EE[\ell(a)]$ and the
smoothed benchmark is simply $\bench(\Pi_h) \defeq \inf_{a \in \Acal}
\lambda_h(a)$.
\label{ex:non_contextual}
\end{example}

\begin{figure}
\captionsetup{format=plain}
\begin{minipage}[c]{0.5\textwidth}
\includegraphics[width=\textwidth]{./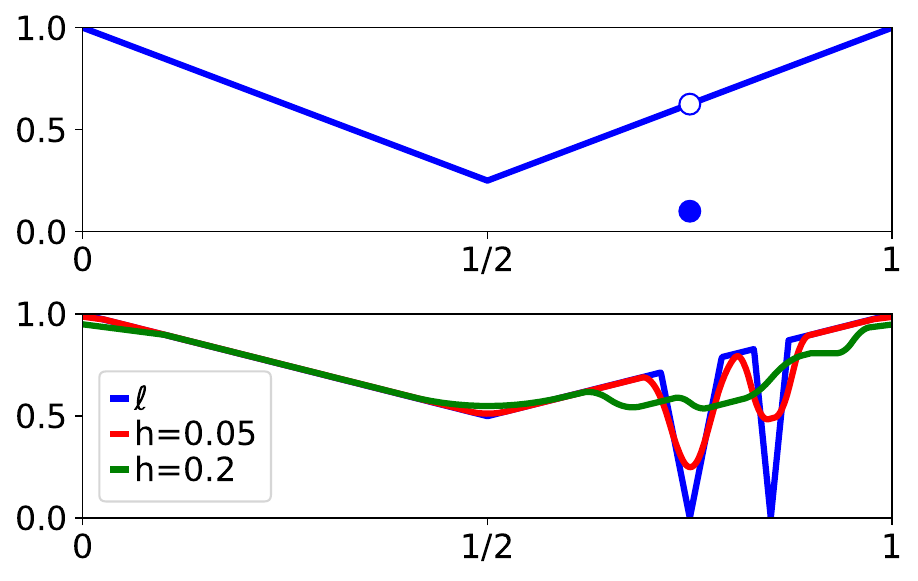}
\end{minipage}
\begin{minipage}{0.5\textwidth}
\vspace{0.5em}\caption{ The discontinuous function
  in~\pref{ex:discontinuous}. Smoothed regret provides a meaningful
  guarantee, competing with $a_h^\star = 1/2$. \vspace{2.5em}
  \\ The loss function (in blue) has large Lipschitz constant
  and ``needles" that are hard to find. Smoothing with small bandwidth
  does not change the optimum while a large bandwidth
  does.}\label{fig:examples}
\end{minipage}
\vspace{-0.75cm}
\end{figure}

Smoothing the policy class enables meaningful guarantees in much more
general settings than prior work assuming global continuity (e.g.,
Lipschitzness). \df{Our results require no smoothness assumptions on
  the loss function, in the spirit of the assumption-free analyses
  typical in the online learning literature. Our smoothed regret
  guarantees can be translated to standard regret bounds under
  significantly weaker assumptions than global smoothness; for example
  smoothness around the actions taken by the optimal policy suffices.} \df{Moreover, } the
guarantees remain meaningful even when the expected loss function has
discontinuities, as demonstrated by the following example.
\begin{example}
\label{ex:discontinuous}
Consider a family of non-contextual settings with expected loss function
\begin{align*}
 \ExpL_0(a) =
\left(\nicefrac{1}{4}+1.5\, \rho(a,\nicefrac{1}{2})\right) \cdot
\indicator{a \,\ne\, a'} +
\nicefrac{1}{10}\cdot\indicator{a \,=\, a'},
\quad \asedit{a' \in [0,1]}
\end{align*}
(see~\pref{fig:examples}). The optimal action $a^\star=
a'$ cannot be found in finitely many rounds due to
the discontinuity, so any algorithm is doomed to linear regret. However,
the smoothed loss function $\ExpL_h$ for any $h > 0$ essentially
ignores the discontinuity (and is minimized at $a^*_h =
\nicefrac{1}{2}$). Accordingly, as we shall prove, it admits
algorithms with sublinear smoothed regret.
\end{example}

\asedit{While the above example is pathological, discontinuous loss
  functions are common in applications. One generic example is, when the algorithm controls the system parameters in a computer or a data center, even a small change can make a large difference when resources are close to saturation. For a more mathematically concrete example, consider 
  the well-studied dynamic pricing problem~\citep{kleinberg2003value}, where the algorithm is a seller with an infinite inventory of identical goods. In each round the algorithm sets a price $p_t\in [0,1]$ for an item, a buyer arrives with value $v_t\in[0,1]$, and  purchases the item if only if $p_t \leq v_t$. The algorithm's goal is to maximize\footnote{To reformulate the problem in terms of losses, posit
    $\ell(p_t,v_t) = v_t - p_t\cdot \indicator{p_t\leq v_t}$.}
 the total revenue,
    $\sum_{t=1}^T\; p_t\cdot \indicator{p_t\leq v_t}$.
                  So, we have a discontinuity at $v_t=p_t$, even though the payoffs are $1$-Lipschitz everywhere else. More complex discontinuity structures can arise if the algorithm is selling multiple products at once, as the buyers can switch from one product to another.}

The bottom panel of~\pref{fig:examples} provides further intuition for the $\smooth_h$ operator.

\df{
\paragraph{Adversarial losses.}
Some of our results carry over as is to the adversarial setting in
which the context-loss pairs are chosen by an adaptive adversary. The benchmark is redefined as
\begin{align*}
\bench(\Pi_h) \defeq \tfrac{1}{T}\;\textstyle \inf_{\pi\in\Pi_h} \EE\sbr{\sum_{t\in [T]} \ell_t(\pi(x_t))}.
\end{align*}
where the expectation accounts for any randomness. We will always
explicitly specify which results apply to this setting.}

\df{
\paragraph{Additional notation.}
We use $\EE_{x \sim \Dcal_X}\sbr{\cdot}$ to denote expectation over the
marginal distribution over contexts. We use the standard big-Oh
notation and use the notation $g = \otil(f)$ to denote that $g =
\order(f \cdot \textrm{polylog}(f))$.}

 \section{Smoothed regret guarantees}
\label{sec:results-smooth}
In this section we obtain smoothed-regret guarantees without imposing any
continuity assumptions on the problem.

\subsection{\df{Instance}-dependent and worst-case guarantees}
\label{sec:smooth_instance}
Our first result is an \df{instance}-dependent smoothed regret bound for a given
bandwidth $h \geq 0$.

An important part of the contribution is setting up the definitions.
Recall the definition of the smoothed loss $\lambda_h(\cdot)$ \asedit{and optimal smooth loss $\bench(\Pi_h)$}
from~\pref{eq:defn-smooth-loss}.
The
\emph{version space} of $\epsilon$-optimal policies (according to the smoothed loss) is
\begin{align*}
\Pi_{h,\epsilon} \defeq \cbr{\pi \in \Pi:\; \lambda_h(\pi) \leq \bench(\Pi_h) + \epsilon}.
\end{align*}
For a given context $x \in \Xcal$, a policy subset $\Pi' \subset \Pi$
maps to an action set $\Pi'(x) \defeq \cbr{\pi(x):\; \pi \in \Pi'}$. We
are interested in $\Pi_{h,\epsilon}(x)$, the subset of actions chosen
by the $\epsilon$-optimal policies on context $x$, and specifically the
expected packing number of this set:
\begin{align}
M_h(\epsilon,\delta) \defeq \EE_{x \sim \Dcal} \sbr{\Ncal_\delta\rbr{\Pi_{h,\epsilon}(x)}},
\label{eq:packing_def}
\end{align}
where $\Ncal_{\delta}(A)$ is the $\delta$-packing number of subset $A\subset \Acal$ in the ambient metric space $(\Acal,\rho)$.\footnote{A subset $S$ of a set $A$ is a $\delta$-packing if any
  two points in $S$ are at a distance of at least $\delta$. The
  $\delta$-packing number of a set $A$ is the maximum
  cardinality of a $\delta$-packing of $A$.}  The \emph{smoothing coefficient} $\theta_h : \RR
\to \RR$ measures how the packing numbers
$M_h(12\epsilon,h)$ shrink with $\epsilon$:
\begin{align}\label{eq:smoothing-coeff}
\theta_h(\epsilon_0) \defeq \sup_{\epsilon \geq \epsilon_0} M_h(12\epsilon,h)/\epsilon.
\end{align}
For the unit interval, observe that $\theta_h(\epsilon_0) \leq
(h\epsilon_0)^{-1}$ always, but in favorable cases we might expect
$\theta_h(\epsilon_0) \leq \max\{\nicefrac{1}{h},
\nicefrac{1}{\epsilon_0}\}$, as demonstrated by the following
example. \df{Note that the constant $12$ is not fundamental, but is
  consistent with prior work on instance-dependent guarantees for continuous action spaces~\citep{slivkins2014contextual}.}

\begin{example}[Small smoothing coefficient]
\label{ex:small_smoothing}
Consider a non-contextual problem, where the expected loss function is
$\lambda(a) \defeq \EE[\ell(a)\mid x_0] = \abr{a - a^\star}$ for some
$a^\star \in [2h,1-2h]$.  Then $M_h(\epsilon, h) \leq O(\max\cbr{1,
  \nicefrac{\epsilon}{h}})$. Consequently, $\theta_h(\epsilon_0) \leq
O(\max\cbr{\nicefrac{1}{h}, \nicefrac{1}{\epsilon_0}})$. (See~\pref{sec:calculations} \df{for a derivation}.)
\end{example}

Our first result is in terms of this smoothing coefficient.
\begin{theorem}
\label{thm:stoch_ring_weak}
For any given bandwidth $h >0$, \df{in the stochastic setting,} \smoothpe (\pref{alg:ZPE-body}) with
parameter $h$ achieves
\begin{align*}
\Reg(T,\Pi_h)
\leq O\rbr{\inf_{\epsilon_0 > 0}
\cbr{ T\epsilon_0 + \theta_h(\epsilon_0)\;\log(|\Pi|\df{T})\;\log(1/\epsilon_0) } }.
\end{align*}
\end{theorem}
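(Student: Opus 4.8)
The plan is to analyze \smoothpe{} as an epoch-based elimination algorithm in the style of \pe{} / successive elimination, but run on a \emph{smoothed} action set and with a per-context adaptive discretization (the ``zooming'' mechanism). I would organize the argument around three ingredients: (i) a uniform estimation guarantee for smoothed losses of all surviving policies; (ii) a bound on the per-round cost of playing within the current version space; and (iii) a charging argument tying the total cost to the smoothing coefficient $\theta_h$.

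First I would set up the epochs. Divide the $T$ rounds into $m = O(\log_2 T)$ epochs of geometrically increasing length, with a target accuracy $\epsilon_k \asymp 2^{-k}$ at epoch $k$. At the start of each epoch the algorithm maintains a version space $\hat\Pi_k \subseteq \Pi$; the key invariant to establish (on a high-probability event, via a union bound over $\Pi$ and over the $m$ epochs, whence the $\log(|\Pi| \log_2 T)$ factor) is that $\hat\Pi_k$ contains the optimal policy and is contained in $\Pi_{h, c\epsilon_k}$ for a universal constant $c$. To make this work one needs an unbiased, low-variance estimator of $\lambda_h(\pi)$ for every $\pi$ simultaneously. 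This is exactly where smoothing pays off: because we play a distribution that is a mixture over $\smooth_h(\cdot)$ evaluated at a $h$-packing of the relevant action set, the importance weights are bounded by roughly $1/h$ times the packing size, so an off-policy (inverse propensity) estimate of $\lambda_h(\pi)$ for each surviving $\pi$ has variance controlled by the number of cells in the discretization. A Freedman/Bernstein concentration bound then gives that, after collecting $n_k$ samples in epoch $k$, all estimates are accurate to within $\epsilon_k$ provided $n_k \gtrsim (\text{packing size}) / (h \epsilon_k^2)$ --- up to the log factors above. Eliminating every policy whose estimated smoothed loss exceeds the running minimum by more than $\Theta(\epsilon_k)$ maintains the invariant.

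Next I would bound the regret. On the good event, every action played in epoch $k$ comes from a policy in $\Pi_{h, c\epsilon_k}$, so each such round contributes at most $c\epsilon_k$ to smoothed regret against $\bench(\Pi_h)$; summing the first epoch's ``pay-the-accuracy'' term gives the $T\epsilon_0$ contribution after we stop refining at scale $\epsilon_0$ (equivalently, the $\inf_{\epsilon_0}$ is obtained by terminating the elimination once $\epsilon_k$ drops below $\epsilon_0$). The remaining contribution is the total number of exploration samples needed across epochs, $\sum_k n_k$. Here the crucial observation is that the relevant ``packing size'' in epoch $k$ is not the packing number of all of $\Acal$ but $\EE_{x}[\Ncal_h(\Pi_{h,c\epsilon_k}(x))] = M_h(c\epsilon_k, h)$ --- the algorithm only discretizes the actions that surviving policies actually select on the sampled context. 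Therefore $n_k \lesssim M_h(c\epsilon_k,h)/(h\epsilon_k^2)$, and after absorbing the constant $12$ into $c$ we get $\sum_k n_k \lesssim \sum_k \theta_h(\epsilon_k)/\epsilon_k \cdot (\log\text{ factors})$. Since $\epsilon_k$ decreases geometrically and $\theta_h$ is monotone nonincreasing, this geometric sum is dominated by its last term up to a $\log(1/\epsilon_0)$ factor, yielding $O(\theta_h(\epsilon_0) \log(1/\epsilon_0) \cdot \log(|\Pi|\log_2 T))$. Combining the two contributions and optimizing over $\epsilon_0$ gives the claimed bound.

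The main obstacle, I expect, is step (i): constructing a single randomized action distribution per round that simultaneously yields low-variance off-policy estimates of $\lambda_h(\pi)$ for \emph{all} surviving policies, while having support of size only $O(M_h(\cdot,h))$ on a typical context. The subtlety is that different surviving policies may select different actions on the same context, and the estimator for $\lambda_h(\pi)$ needs adequate ``coverage'' of the ball $\ball_h(\pi(x))$; one has to argue that sampling from a $h$-net of $\Pi_{h,\epsilon_k}(x)$ and then smoothing gives every such ball density $\Omega(h / M_h)$, so importance weights stay $O(M_h / h)$. A second, more technical point is handling the boundary effects of $\ball_h(a) = [a-h,a+h]\cap[0,1]$ and the fact that packing numbers are random (depending on $x$), which is why the definition uses the \emph{expected} packing number $M_h$ and why a martingale concentration argument (rather than a fixed-sample-size Chernoff bound) is needed to control $\sum_k n_k$.
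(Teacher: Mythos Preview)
Your epoch structure, inductive invariant ($\pi^\star\in\hat\Pi_k$ and $\hat\Pi_k\subset\Pi_{h,c\epsilon_k}$), and regret decomposition into a ``tail'' $T\epsilon_0$ plus a sum over early epochs are all correct and match the paper. Two points deserve comment.

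\textbf{A different exploration construction.} The paper does \emph{not} build the sampling distribution from an $h$-net of $\hat\Pi_k(x)$. Instead it solves the convex program~\eqref{eq:pe_optimization-body} over $Q\in\Delta(\hat\Pi_k)$ and bounds its value via Sion's minimax theorem by the characteristic volume $V_m=\EE_x\,\base\bigl(\bigcup_{\pi\in\hat\Pi_k}\ball_h(\pi(x))\bigr)$; this yields a second-moment bound $\kappa_h V_m\le \alpha\,\EE_x\Ncal_h(\hat\Pi_k(x))\le \alpha\,M_h(12\epsilon_k,h)$. Your $h$-net scheme is a legitimate alternative and would give the same variance bound (density $\gtrsim 1/(M(x)h)$ on the relevant support, hence second moment $\lesssim M(x)$, averaging to $M_h$). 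The minimax route is cleaner because it sidesteps the explicit covering argument and the boundary issues you flag; your route is more constructive. Either way, note the correct scaling is variance $\lesssim M_h$, not $M_h/h$, so the sample requirement is $n_k\gtrsim M_h\log(\cdot)/\epsilon_k^2$ and the per-epoch regret is $n_k\epsilon_k\lesssim M_h\log(\cdot)/\epsilon_k\le \theta_h(\epsilon_k)\log(\cdot)$; your text has both an extra $1/h$ in $n_k$ and sums $n_k$ rather than $n_k\epsilon_k$, which happen to partially cancel but obscure the argument.

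\textbf{The genuine gap: concentration.} The paper does \emph{not} use Freedman/Bernstein. It uses a median-of-means estimator (batching the epoch into $\delta_T=\Theta(\log(|\Pi|\log_2 T))$ pieces and taking the median of batch means), precisely because Bernstein-type bounds carry a range term. With the paper's scheme the importance weight can be as large as $\kappa_h/\mu_m\asymp 1/(h\,r_m)$; with your $h$-net it is $\max_x \Ncal_h(\hat\Pi_k(x))$. Neither quantity is controlled by the \emph{expected} packing $M_h$ that defines $\theta_h$, so the Freedman range term would add a contribution of order $1/(h\epsilon_0)$ (or $\max_x M(x)/\epsilon_0$) to the final bound, which in general strictly exceeds $\theta_h(\epsilon_0)$ and destroys the data-dependent improvement over~\pref{corr:stoch_ring_instance}. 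The paper is explicit that median-of-means is ``crucial for obtaining our data-dependent bound, since we need the error of our loss estimator to scale with the characteristic volume $V_m$.'' Your martingale remark at the end correctly identifies randomness in $M(x)$ as an issue, but Freedman does not resolve it; switching to median-of-means (which depends only on variance) does.
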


Since $\theta_h(\epsilon_0) \leq (h\epsilon_0)^{-1}$, we obtain a
worst case guarantee as a corollary.
\begin{corollary}
\label{corr:stoch_ring_instance}
Fix any bandwidth $h > 0$, \df{in the stochastic setting,} \smoothpe with parameter $h$ achieves
\begin{align*}
\Reg(T,\Pi_h) \leq \tilde{O} \rbr{\sqrt{\nicefrac{T}{h} \log|\Pi| }}.
\end{align*}
\end{corollary}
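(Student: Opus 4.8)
The plan is to obtain Corollary~\ref{corr:stoch_ring_instance} directly from Theorem~\ref{thm:stoch_ring_weak} by substituting the worst-case bound on the smoothing coefficient and then optimizing the free parameter $\epsilon_0$. As noted in the text just before the corollary, on the unit interval $\Acal = [0,1]$ every subset has $h$-packing number at most $\lceil 1/h\rceil$, so $M_h(12\epsilon, h) \le \lceil 1/h \rceil$ for all $\epsilon$, and therefore $\theta_h(\epsilon_0) = \sup_{\epsilon \ge \epsilon_0} M_h(12\epsilon, h)/\epsilon \le \lceil 1/h\rceil/\epsilon_0 = O\!\rbr{1/(h\epsilon_0)}$. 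This is the only place the structure of the interval is used.

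Plugging this into Theorem~\ref{thm:stoch_ring_weak} gives, for every $\epsilon_0 > 0$,
\begin{align*}
\Reg(T,\Pi_h) \le O\!\rbr{ T\epsilon_0 + \frac{\log(|\Pi|\log_2 T)\,\log(1/\epsilon_0)}{h\epsilon_0} }.
\end{align*}
I would then balance the two terms. Treating the slowly varying factor $\log(1/\epsilon_0)$ as a constant for the moment, setting $T\epsilon_0 = \tfrac{1}{h\epsilon_0}\log(|\Pi|\log_2 T)$ yields the choice $\epsilon_0 = \sqrt{\tfrac{\log(|\Pi|\log_2 T)}{Th}}$; since the infimum in Theorem~\ref{thm:stoch_ring_weak} is over all $\epsilon_0 > 0$ this is a legal value, and substituting it back produces a bound of order $\sqrt{\tfrac{T}{h}\log(|\Pi|\log_2 T)}$.

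Finally I would dispose of the logarithmic overhead. With the above $\epsilon_0$ we have $\log(1/\epsilon_0) = \tfrac12\log\!\tfrac{Th}{\log(|\Pi|\log_2 T)} \le \tfrac12\log(Th) = O(\log T)$, using $h \le 1 \le T$; this is polylogarithmic in $T$ and is absorbed into the $\tilde{O}(\cdot)$ notation (the mild self-reference, from $\log(1/\epsilon_0)$ itself appearing in the term being balanced, only changes $\epsilon_0$ and the bound by further polylogarithmic factors, which are likewise absorbed). This gives $\Reg(T,\Pi_h) \le \tilde{O}\!\rbr{\sqrt{\tfrac{T}{h}\log|\Pi|}}$, as claimed; in the degenerate regime where this exceeds $T$ the bound is vacuous since regret is always at most $T$. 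There is essentially no obstacle in this derivation — all the work is in Theorem~\ref{thm:stoch_ring_weak} — the only care needed is to confirm that the suppressed factors are genuinely polylogarithmic rather than polynomial in the problem parameters, which is immediate from the computation above.
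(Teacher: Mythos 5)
Your proposal is correct and follows the same route as the paper: the corollary is obtained by plugging the worst-case bound $\theta_h(\epsilon_0) \leq O\rbr{(h\epsilon_0)^{-1}}$ (from the interval's $h$-packing number being $O(1/h)$) into Theorem~\ref{thm:stoch_ring_weak} and choosing $\epsilon_0 \approx \sqrt{\log(|\Pi|\log_2 T)/(Th)}$ to balance the two terms, with the residual $\log(1/\epsilon_0)$ and $\log\log T$ factors absorbed into $\tilde{O}(\cdot)$. (The exact constant $\lceil 1/h\rceil$ for the packing number is off by one when $1/h$ is an integer, but this is immaterial inside the $O(\cdot)$.)
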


\asedit{Contrasting with} the standard $\Theta(\sqrt{T |\Acal| \log |\Pi|})$ regret bound for finite action spaces, \asedit{we see} that the $\nicefrac{1}{h}$ term can be viewed as the effective number of actions.

In fact, this worst case bound can also be achieved by a simple
variation of \expfour~\citep{auer2002nonstochastic}, which can operate
in the adversarial version of our problem and actually achieves
$O(\sqrt{\nicefrac{T}{h}\log |\Pi|})$ regret, eliminating the
logarithmic dependence on $T$. The pseudocode for this algorithm is
displayed in~\pref{alg:smooth_exp4}.
\begin{theorem}
\label{thm:smooth_exp4}
In the adversarial setting, \smoothexp with policy set $\Xi = \Pi_h$ \asedit{and learning rate
    $\eta = \sqrt{\frac{2\,h\;\ln|\Xi|}{T}}$}
achieves
$\Reg(T,\Pi_h) \leq
\order\rbr{\sqrt{\nicefrac{T}{h}\log|\Pi|}}$ .
\end{theorem}

\noindent \asedit{Both algorithms are not computationally efficient in general, as the per-round running time scales as $|\Pi|$. For the non-contextual case, one can take $|\Pi| = \nicefrac{T}{h}$, see \pref{sec:cases}(c).}

\paragraph{Remarks.}

It is not hard to show a $\Omega(\sqrt{\nicefrac{T}{h}\log |\Pi|})$
lower bound on smoothed regret. Specifically, every $K$ arm contextual
bandit instance can be reduced to a continuous action instance with
bandwidth $h=1/(2K)$ by using piecewise constant loss functions and by
mapping actions $a \in \{1,\ldots,K\}$ to $h\cdot(2a-1)$. Thus, we may
embed the lower bound construction for contextual bandits with finite
action space into our setup to verify
that~\pref{corr:stoch_ring_instance} is optimal up to logarithmic
factors (and~\pref{thm:smooth_exp4} is optimal up to constants).

While not technically very difficult, the worst-case bound showcases
the power and generality of the new definition.  In particular, we
obtain meaningful guarantees for discontinuous losses as
in~\pref{ex:discontinuous}. As we will see in the next section, under
global smoothness assumptions, we can also obtain a bound on the
more-standard quantity $\Reg(T,\Pi)$.

Turning to the instance-specific bound in~\pref{thm:stoch_ring_weak},
we obtain a more-refined dependence on the effective number of actions
$\nicefrac{1}{h}$, which can be thought of as a ``gap-dependent'' bound.
In the most favorable setting, we have
$\theta_h(\epsilon_0) =
\max\cbr{\nicefrac{1}{h},\nicefrac{1}{\epsilon_0}}$ which yields
$\Reg(T,\Pi_h) \leq \tilde{O}\rbr{\sqrt{T \log |\Pi|} +
  \frac{1}{h}\log |\Pi|}$, eliminating the dependence on $h$ in the
leading term (Recall that \pref{ex:small_smoothing} has this favorable
behavior). Further, via the correspondence with the finite action
setting, we also obtain a new \df{instance}-dependent bound for standard
stochastic contextual bandits, which improves on prior worst case results by
adapting to the effective size of the action
space~\citep{dudik2011efficient,agarwal2014taming}. This result for
the finite-action setting follows from our more general theorem
statement, given in~\pref{sec:extensions}.

\df{We also note that, while smoothing induces a Lipschitz loss
  function, a na\"{i}ve application of a Lipschitz bandits algorithm
  yields a suboptimal regret rate. For example, in the non-contextual
  version, the smoothed loss function is $\lambda_h: a \mapsto
  \EE_{\Dcal} \EE_{a' \sim \smooth_h}\sbr{\ell(a')}$, is
  $\nicefrac{1}{h}$-Lipschitz, so we may apply a Lipschitz bandits
  algorithm in a black box fashion.\footnote{\df{Formally, when the
      Lipschitz bandits algorithm recommends action $a_t'$, we sample
      $a_t \sim \smooth_h(a_t')$, observe $\ell_t(a_t)$ --- which has
      expectation $\lambda_h(a_t')$ --- and pass this value back to
      the algorithm.}} However, this reduction gives a smoothed regret
  bound of $O(T^{2/3}h^{-1/3})$, which is suboptimal when compared
  with our $\tilde{O}(\sqrt{\nicefrac{T}{h}})$ result. Our guarantees exploit
  additional information sharing between actions enabled by the
  smoothing operator, in particular the fact that when we choose a
  particular action, we learn about all smoothed actions in an
  interval of size $h$.}

Finally, we remark that~\pref{alg:ZPE-body} actually achieves a high
probability regret bound, which we have simplified to the stated expected
regret bound.

\begin{algorithm}[t]
\begin{algorithmic}
\State {\bf Parameters}: Bandwidth $h>0$, policy set $\Pi$, number of rounds $T$.
\State {\bf Initialize}: $\Pi^{(1)} = \Pi$, Batches $\delta_T = 5\lceil \log(T|\Pi|\log_2(T)) \rceil$, Radii $r_m = 2^{-m}, m=1,2,\ldots$.
    \For{each epoch $m = 1,2,\ldots$}
\State // Before the epoch: compute distribution $Q_m$ over policy set $\Pi^{(m)}$.
\State Set $V_m \gets \EE_{x \sim \Dcal}\base\rbr{\bigcup_{\pi \in \Pi^{(m)}}\ball_h(\pi(x))}$
    \hspace{1cm} // \emph{characteristic volume} of $\Pi^{(m)}$
\State Set batch size $\tilde{n}_m = \frac{320 V_m}{r_m^2 h}$, epoch length $n_m = \tilde{n}_m \delta_T$.
\State Find distribution $Q_m$ over policy set $\Pi^{(m)}$ which minimizes
\begin{align}
    & \max_{\text{policies }\pi \in \Pi^{(m)}} \quad
    \EE_{\text{context } x \sim \df{\Dcal_X}} \quad
    \EE_{\text{action }a\sim \smooth_{h}(\pi(x))}\;
        \sbr{\frac{1}{q_m(a \mid x) }},
        \label{eq:pe_optimization-body}\\     & \textrm{\df{where density} } q_m(a \mid x) \defeq \EE_{\pi \sim Q_m} \smooth_{\pi,h}(a|x) .\notag
\end{align}
\vspace{-0.25cm}
\For{each round $t$ in epoch $m$}
\State Observe context $x_t$,
    sample action $a_t$ from \df{density} $q_m(\cdot \mid x_t)$,
    observe loss $\ell_t(a_t)$.
\EndFor
\State // After the epoch: update the policy set.

\For{each batch $i =1,2,\ldots,\delta_T$}
\State Define $S_{i,m}$ as the indices of the
$(i-1)\tilde{n}_m+1, \ldots, i \tilde{n}_m^{\textrm{th}}$ examples in epoch $m$.
\State Estimate $\lambda_h(\pi)$ with
    $ \hat{L}_m^i(\pi) =
        \frac{1}{\tilde{n}_m}\sum_{t\in S_{i,m}} \hat{\ell}_{t,h}(\pi)$
for each policy $\pi\in\Pi^{(m)}$ where
\begin{align}
 \hat{\ell}_{t,h}(\pi) \defeq
    \tfrac{\smooth_{\pi,h}(a_t|x_t) \; \ell_t(a_t)}{q_m(a_t \mid x_t)}. \label{eq:ips_estimate}
 \end{align}
\EndFor
\State Estimate the loss $\hat{L}_m(\pi) = \median \rbr{\hat{L}_m^1(\pi), \hat{L}_m^2(\pi), \ldots,
\hat{L}_m^{\delta_T}(\pi)}$.

\State $\Pi^{(m+1)} = \cbr{\pi \in \Pi^{(m)} : \hat{L}_m(\pi) \leq \min_{\pi' \in \Pi^{(m)}} \hat{L}_m(\pi') + 3\,r_m}$.
\EndFor
\end{algorithmic}
\caption{\smoothpe}
\label{alg:ZPE-body}
\end{algorithm}

\paragraph{The algorithm.}
The algorithm is an adaptation of \pe from~\citet{dudik2011efficient},
with pseudocode displayed in~\pref{alg:ZPE-body}. It is epoch based,
maintaining a version space of good policies, denoted $\Pi^{(m)}$ in
the $m^{\textrm{th}}$ epoch, and pruning it over time by eliminating
the provably suboptimal policies. In the $m^{\textrm{th}}$ epoch,  the algorithm computes a distribution $Q_m$ over
$\Pi^{(m)}$ by solving a convex
program~\pref{eq:pe_optimization-body}.  The objective function is
related to the variance of the loss estimator we use, and so $Q_m$
ensures high-quality loss estimates for all policies in $\Pi^{(m)}$.
We use $Q_m$ to select actions at each round in the epoch by sampling
$\pi \sim Q_m$ and playing $\smooth_h(\pi(x))$ on context $x$.
 To compute $\Pi^{(m+1)}$ for the next epoch, we use importance
 weighting to form single-sample unbiased estimates for
 $\lambda_h(\pi)$ in~\pref{eq:ips_estimate}, and we aggregate these
 via a median-of-means approach (see
 e.g.,~\citet{hsu2016loss}). $\Pi^{(m+1)}$ is then defined as the set
 of policies with low empirical regret measured via the
 median-of-means estimator. Na\"{i}vely, the running time is
 $\textrm{poly}(T,|\Pi|)$.\footnote{For the non-contextual case, the algorithm simplifies and the running time becomes $\textrm{poly}(T)$, see also \pref{sec:cases}(c).}

The key changes over \pe are as follows.  First, we
write~\pref{eq:pe_optimization-body} as an optimization problem rather
than a feasibility problem, which allows for \df{instance}-dependent
improvements in our loss estimates. Second, our importance weighting
crucially exploits smoothing for low variance. Finally, we employ the
median-of-means estimator to eliminate an unfavorable range dependence
with importance weighting. The immediate consequence of this estimator
is that we can eliminate the need for uniform exploration, which appears
in prior literature on contextual bandits with finite action
spaces~\cite[e.g.][]{dudik2011efficient,agarwal2014taming}.
Perhaps more
interestingly, the median-of-means estimator is unnecessary
for~\pref{corr:stoch_ring_instance} and for prior results with finite
action spaces, but it is crucial for obtaining our \df{instance}-dependent bound,
since we need the error of our loss estimator to scale with the
characteristic volume \df{$V_m \defeq \EE_{x \sim \Dcal}\base\rbr{\bigcup_{\pi \in \Pi^{(m)}}\ball_h(\pi(x))}$}.

As we have described the algorithm, it requires knowledge of the
marginal distribution over $\Xcal$, which appears in the computation
of $V_m$ and in the optimization problem. Both of these can be
replaced with empirical counterparts, and since the random variables
are non-negative, via Bernstein's inequality, the approximation only
affects the regret bound in the constant factors. This argument has
been used in several prior contextual bandit
results~\citep{dudik2011efficient,agarwal2014taming,krishnamurthy2016contextual},
and so we omit the details here.

For the proof, we first use convex duality to upper bound the value
of~\pref{eq:pe_optimization-body} in terms of the characteristic volume $V_m$,
refining~\citet{dudik2011efficient}. As the objective divided by $h$
bounds the variance of the importance weighted estimate
in~\pref{eq:ips_estimate}, we may use Chebyshev and Chernoff bounds to
control the error of the median-of-means estimator in terms of $V_m,h$,
and $n_m$. Our setting of $n_m$ then implies that $\Pi^{(m+1)} \subset
\Pi_{h,12r_{m+1}}$. Two crucial facts follow: (1) the instantaneous
regret in epoch $m+1$ is related to $r_{m+1}$ and (2) $V_{m+1}$, which determines the length of the
epoch, is related to the packing number $M_h(12r_{m+1},h)$. Roughly
speaking, this shows that the regret in epoch $m$ is $n_m r_m \lesssim
M_h(12r_m,h)/r_m$, which we can easily relate to the smoothing
coefficient. 
\begin{algorithm}[t]
\begin{algorithmic}
\State {\bf Parameters:}
    Collection of randomized policies $\Xi$,
    learning rate $\eta >0$.
    \State // $\xi(\cdot \mid x_t)$ is the probability density for policy $\xi$ given context $x_t$.
\State {\bf Initialization:}
    weights $W_1(\xi) \gets 1$ for all policies $\xi \in \Xi$.
\For{$t=1,\ldots,T$}
\State Sample policy $\xi_t  \propto W_t$,
    sample action $a_t$ from $\xi_t(\cdot \mid x_t)$.
\State // $p_t(\cdot \mid x_t)$ is
    the probability density for action $a_t$ given context $x_t$.
\State Observe loss $\ell_t(a_t)$ and define
\begin{align*}
\hat{\ell}_t(\xi) \defeq \frac{\xi(a_t\mid x_t)}{p_t(a_t \mid x_t)}\cdot\ell_t(a_t).
\end{align*}
\State Update weights:
    $W_{t+1}(\xi) \gets W_t(\xi)\cdot\exp(-\eta\hat{\ell}_t(\xi))$.
\EndFor
\end{algorithmic}
\caption{\smoothexp: \expfour with continuous sampling}
\label{alg:smooth_exp4}
\end{algorithm}

\subsection{One algorithm for all $h$}
\label{sec:smooth_adapt}
\smoothpe guarantees a refined regret bound against $\bench(\Pi_h)$
for a given $h > 0$. Yet choosing the bandwidth in practice seems
challenging: since $\bench(\Pi_h)$ is unknown and not monotone in
general, there is no a priori way to choose $h$ to minimize the
benchmark plus the regret.  As such, we seek algorithms that can
achieve a smoothed regret bound simultaneously for all bandwidths $h$,
a guarantee we call \emph{uniformly-smoothed}.  This is achieved by our next result.

\begin{theorem}
\label{thm:smooth_adaptive}
Consider the adversarial setting.
For each parameter $\beta \in [0,1]$, \df{there exists an algorithm that guarantees}
\begin{align*}
\forall h \in (0,1]: \Reg(T,\Pi_h) \leq \tilde{O}\rbr{T^{\frac{1}{1+\beta}} h^{-\beta}} \cdot (\log |\Pi|)^{\frac{\beta}{1+\beta}}.
\end{align*}
For the non-contextual setting, it achieves a uniformly-smoothed regret of
$\tilde{O}\rbr{T^{\frac{1}{1+\beta}} h^{-\beta}}$. Moreover, for the non-contextual \df{stochastic} setting,
\df{there exist positive constants $c$ and $T_0$, such that for any algorithm and any $T \geq T_0$, there exists $h \in (0,1]$ and a problem instance, such that on this instance,
\begin{align*}
\Reg(T,\Pi_h) \geq c \cdot T^{\frac{1}{1+\beta}} h^{-\beta}.
\end{align*}
}
\end{theorem}

\paragraph{Remarks.}
The theorem provides a family of upper and lower bounds, one for each
$\beta \in [0,1]$. As two examples, taking $\beta=1$ we obtain
regret rate $\tilde{O}(\sqrt{T}/h)$ as listed in the third row
of~\pref{tab:results}, while $\beta=\nicefrac{1}{2}$ yields
$\tilde{O}(T^{2/3}/\sqrt{h})$. These bounds are incomparable in
general and so the result establishes a Pareto frontier of exponent
pairs. In the non-contextual setting, all pairs are optimal, and, in
particular, the $\sqrt{\nicefrac{T}{h}}$ rate
from~\pref{corr:stoch_ring_instance} is not achievable uniformly. More
generally, the optimal uniformly-smoothed regret bounds
are very different from those for a fixed bandwidth.

Note that while $\beta$ is a parameter to the algorithm, it simply
governs where on the Pareto frontier the algorithm lies, and is not
based on any property of the problem.

\paragraph{The algorithm.}
The algorithm\df{, \corralexpf, } is an instantiation
of~\corral~\citep{agarwal2016corralling}, which can be used to run
many sub-algorithms in parallel. \corral maintains a master
distribution over sub-algorithms, and in each round it samples a
sub-algorithm and chooses the action the sub-algorithm
recommends. \corral sends an importance weighted loss (weighted by the
master distribution) to all the sub-algorithms and it updates the
master distribution using online mirror descent with the log-barrier
mirror map.

For the sub-algorithms we use our variant of \expfour. Each
sub-algorithm instance operates with a different bandwidth scale, and
if run in isolation achieves the optimal non-adaptive smoothed regret
for that bandwidth. Aggregating these sub-algorithms with \corral
yields the uniformly-smoothed guarantee. Note that here and elsewhere,
\corral results in a worse overall regret than the best individual
sub-algorithm, but in our setting it nevertheless achieves all
Pareto-optimal uniformly-smoothed guarantees. We describe \corralexpf
formally in Section~\ref{sec:corralexpf-analysis}.

The proof for the upper bound involves a more refined analysis for
\expfour than required for~\pref{thm:smooth_exp4}. First, we
discretize bandwidths to multiples of $1/T^2$ and show that, for any
$i \in \NN$, a single instance of \expfour using discretized
bandwidths can compete with all $h \in [2^{-i},2^{-i+1}]$
simultaneously, without using \corral. Second, we show that \expfour
is stable in the sense that, in randomized environments, the regret
scales linearly with the standard deviation of the losses and that
this standard deviation need not be known a priori.\footnote{This
  property was shown by~\citet{agarwal2016corralling}, but our variant
  of \expfour is necessarily slightly different. Nevertheless, the
  proof is quite similar.}  Stability is crucial for aggregating with
\corral as the master's importance weighting induces high-variance
randomized losses for each sub-algorithm. We finish the proof by
applying the guarantee for \corral~\citep{agarwal2016corralling} with
$\log(T)$ instances of \expfour as sub-algorithms, one for each
bandwidth scale $[2^{-i},2^{-i+1}]$. For each $\beta \in [0,1]$, we
use a weakening of the \expfour regret guarantee, essentially that
$\min\cbr{\sqrt{\nicefrac{T}{h}},T} \leq
T^{\frac{1}{1+\beta}}h^{-\frac{\beta}{1+\beta}}$ for all $\beta \in
[0,1]$.

The lower bound is inspired by a construction
of~\citet{locatelli2018adaptivity}. We show that if an algorithm,
\alg, has small regret against $\bench(\Pi_{\nicefrac{1}{4}})$, then
it must suffer large regret against $\bench(\Pi_{h})$ for
$h\ll\nicefrac{1}{4}$. The intuition is that the
$\nicefrac{1}{4}$-smoothed regret bound prevents \alg from
sufficiently exploring.  Specifically, we construct one instance where
small losses occur in a subinterval $I_0\subset [0,1]$ of length
$\nicefrac{1}{4}$ and another that is identical on $I_0$ but where
even smaller losses occur in a subinterval $I_1$ of width $h \ll
\nicefrac{1}{4}$. Since \alg has low $\nicefrac{1}{4}$-smoothed regret
it cannot afford to explore to find $I_1$.
In comparison
with~\citet{locatelli2018adaptivity}, the details of the construction
are somewhat different, since they focus on adaptivity to unknown
smoothness exponent, while we are adapting to bandwidth $h$ (and later
to unknown Lipschitz constant).
 \section{Lipschitz regret guarantees}
\label{sec:results-lip}

Our results and techniques for smoothed regret project onto the
well-studied Lipschitz contextual bandits problem: each of the three results in~\pref{sec:results-smooth} has a ``twin'' for the Lipschitz version. We posit a Lipschitz condition on the
expected loss $\lambda(\cdot \mid x) \defeq \EE[\ell(\cdot) \mid x]$:
\begin{align}\label{eq:Lip-condition}
\forall x \in \Xcal,\, a, a' \in \Acal: \quad
\abr{\lambda(a \mid x) - \lambda(a' \mid x)} \leq L\cdot \rho(a,a'), \quad L\geq 1.
\end{align}
We assume that $L \geq 1$ to avoid the pathological situation where
Lipschitzness restricts the effective loss range. If the
Lipschitz constant is less than $1$, we set $L=1$ in our
results.

\df{A version of the standard \emph{uniform discretization} approach
  applies, even for the adversarial setting. Here, we uniformly
  discretize the action space and the policies (if needed), and we run
  $\expfour$. Standard arguments yield the following regret bound:
\begin{align}
\label{eq:Lip-worst-case-1d}
\Reg(T,\Pi) \leq \tilde{O}\rbr{ T^{\nicefrac{2}{3}}\; \rbr{L \log |\Pi|}^{\nicefrac{1}{3}}}.
\end{align}
This result appears in prior work \asedit{on the non-contextual case} and is known to be
optimal~\citep{Bobby-nips04,kleinberg2013bandits,bubeck2011x},
although the generalization to an arbitrary policy set $\Pi$ is
new. Interestingly, the worst-case regret bounds in
\citet{slivkins2014contextual,cesa2017algorithmic} --- on Lipschitz
contextual bandits with a metric on contexts or context-arm pairs and
with specific policy sets $\Pi$, respectively --- can be obtained from
this uniform discretization approach. \refeq{eq:Lip-worst-case-1d}
is the point of departure for several results presented below.  }

The key observation enabling results for the Lipschitz version is as follows: 
\begin{lemma}
\label{lem:smooth_to_lip}
If $f:\Acal \to [0,1]$ is $L$-Lipschitz, then $\abr{\EE_{a' \sim
    \smooth_h(a)} f(a') - f(a)} \leq Lh$.
\end{lemma}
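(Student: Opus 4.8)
The plan is to exploit the single structural fact that $\smooth_h(a)$ places all of its probability mass within distance $h$ of $a$, so that the integrand $f(a')$ never strays from $f(a)$ by more than $Lh$. First I would rewrite the quantity of interest by pulling the (constant) centering term inside the expectation: $\EE_{a' \sim \smooth_h(a)} f(a') - f(a) = \EE_{a' \sim \smooth_h(a)}\sbr{f(a') - f(a)}$, which is legitimate because $f$ is bounded and measurable and $f(a)$ does not depend on $a'$. Then, applying the triangle inequality for expectations (equivalently, Jensen's inequality for the convex function $|\cdot|$), I obtain $\abr{\EE_{a' \sim \smooth_h(a)}\sbr{f(a') - f(a)}} \leq \EE_{a' \sim \smooth_h(a)} \abr{f(a') - f(a)}$.

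The remaining step is a pointwise bound on the integrand. By the definition of the smoothing operator in this section, $\smooth_h(a)$ is the uniform distribution over the closed ball $\ball_h(a) = \cbr{a' \in \Acal : \rho(a,a') \leq h}$, so every $a'$ in its support satisfies $\rho(a,a') \leq h$. Combining this with $L$-Lipschitz continuity of $f$ gives $\abr{f(a') - f(a)} \leq L\,\rho(a,a') \leq Lh$ for $\smooth_h(a)$-almost every $a'$. Substituting this uniform bound into the expectation yields $\EE_{a' \sim \smooth_h(a)} \abr{f(a') - f(a)} \leq Lh$, and chaining the two displayed inequalities completes the argument.

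There is essentially no obstacle here — the lemma is a one-line consequence of bounded support plus Lipschitzness — so the only thing worth flagging is where generality could be lost. The proof uses nothing about $\smooth_h$ beyond its support being contained in a radius-$h$ ball; hence it transfers verbatim to the kernel-based smoothing operators of \pref{sec:extensions} whenever the kernel is compactly supported at scale $h$. For kernels without compact support one would instead replace the crude pointwise estimate $\rho(a,a') \leq h$ by the first-moment quantity $\EE_{a' \sim \smooth_h(a)} \rho(a,a')$, so that the right-hand side becomes $L$ times this moment; that is the only modification needed.
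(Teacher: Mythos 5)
Your proof is correct and follows essentially the same route as the paper's: the paper proves the generalization (\pref{lem:smooth_to_lipschitz}) by writing $\abr{\EE_{b \sim Ka}\ell(b) - \ell(a)} \leq L\,\EE_{b\sim Ka}\rho(a,b) \leq Lh$, which is exactly your ``move the constant inside, apply Jensen, bound pointwise via Lipschitzness and bounded support'' argument. Your closing remark about only needing $\mathrm{supp}(\smooth_h(a)) \subseteq \ball_h(a)$ matches the hypothesis of the paper's general version verbatim.
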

In particular if $\lambda(\cdot \mid x)$ is $L$-Lipschitz, we have
$\bench(\Pi_h) \leq \bench(\Pi) + Lh$, which allows us to easily
obtain results for the Lipschitz version by way of smoothed regret.

\subsection{\df{Instance}-dependent and worst-case guarantees}
\label{sec:lip_instance}
In correspondence with~\pref{thm:stoch_ring_weak}, our first
result here is an \df{instance}-dependent regret bound.  We recover
the optimal worst-case regret bound for the Lipschitz setting, but we
obtain an improvement when actions taken by near-optimal policies tend
to lie in a relative small region of the action space. Specializing,
we recover several state-of-the-art \df{instance}-dependent regret bounds from
prior work. \asedit{Our algorithm is a minor modification of $\smoothpe$ (\pref{alg:ZPE-body}), which we denote $\zpelip$ and spell out later in this section.}

We reuse the packing numbers $M_h(\epsilon,\delta)$ defined
in~\pref{eq:packing_def}, but the instance-dependent complexity is
slightly different. Instead of the smoothing coefficient
$\theta_h(\epsilon_0)$, we use the \emph{policy zooming coefficient}:
\begin{align}\label{eq:policy-zooming-coeff}
\psi_L(\epsilon_0) \defeq \sup_{\epsilon \geq \epsilon_0} M_0(12L\epsilon,\epsilon)/\epsilon.
\end{align}
The main differences over the smoothing coefficient are that version
space of good policies is based on the unsmoothed loss
$\lambda_0(\pi)$, and we are using the $\epsilon$- rather than
$h$-packing number for a fixed bandwidth $h$. For intuition, we always
have $\psi_L(\epsilon_0) \leq O(\epsilon_0^{-2})$ but a favorable
instance might have $\psi_L(\epsilon_0) \leq O(\epsilon_0^{-1})$ which
yields improved rates.

\begin{theorem}
\label{thm:stoch_ring_zooming}
\df{In the stochastic setting,} Algorithm \zpelip with parameter $L$ achieves regret bound
\begin{align}\label{eq:thm:stoch_ring_zooming}
\Reg(T,\Pi)
\leq
O\rbr{\inf_{\epsilon_0 > 0}
\cbr{TL\epsilon_0 +
\frac{\psi_L(\epsilon_0)}{L} \cdot \log(|\Pi|\df{T})\log(1/\epsilon_0)}
}
.
\end{align}
\end{theorem}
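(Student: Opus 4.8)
The plan is to realize \zpelip as \smoothpe run with a bandwidth that \emph{shrinks across epochs}: in epoch $m$ it uses bandwidth $h_m \defeq \Theta(r_m/L)$ (so that $Lh_m = \Theta(r_m)$), estimates the $h_m$-smoothed losses $\lambda_{h_m}(\cdot)$ via \pref{eq:ips_estimate}, and eliminates policies whose median-of-means estimate exceeds the current minimum by $\Theta(r_m)$; one then re-runs the analysis of \pref{thm:stoch_ring_weak}, additionally tracking the smoothing bias bounded by \pref{lem:smooth_to_lip}. The point of coupling $h_m$ to $r_m$ is that \pref{lem:smooth_to_lip} gives $\abr{\lambda_{h_m}(\pi)-\lambda_0(\pi)} \le Lh_m$ for every $\pi \in \Pi$, hence $\abr{\bench(\Pi_{h_m})-\bench(\Pi)}\le Lh_m$ too; so the $h_m$-smoothed version space at scale $\Theta(r_m)$ is sandwiched inside the \emph{un}smoothed version space $\Pi_{0,O(r_m)}$, and conversely it still contains every policy that is $O(r_m)$-optimal for $\bench(\Pi)$, which is what keeps the pruned class nonempty.

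\textbf{Step 1 (per-epoch invariant).} Exactly as for \pref{thm:stoch_ring_weak} --- convex (Sion / LP) duality bounding the optimum of \pref{eq:pe_optimization-body} by the characteristic volume $V_m$, plus Chebyshev and Chernoff bounds showing the median-of-means estimator built from \pref{eq:ips_estimate} concentrates (its per-sample variance being at most the program value divided by $h_m$) --- the settings $\tilde{n}_m = \Theta(V_m/(r_m^2 h_m))$ and $n_m = \tilde{n}_m \delta_T$ force, with high probability, $\Pi^{(m+1)} \subseteq \cbr{\pi:\lambda_{h_m}(\pi)\le \bench(\Pi_{h_m})+O(r_{m+1})} \subseteq \Pi_{0,O(r_{m+1})}$, the last inclusion using the bias bound of the previous paragraph. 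Two consequences follow. (i) Every surviving $\pi$ has $\lambda_0(\pi)\le\bench(\Pi)+O(r_{m+1})$, so playing $\smooth_{h_{m+1}}(\pi(x))$ with the $\mu_{m+1}=O(r_{m+1})$ uniform admixture incurs per-round regret $O(r_{m+1})$ against the \emph{fixed} benchmark $\bench(\Pi)$ (using $Lh_{m+1}=\Theta(r_{m+1})$ once more). (ii) Since a union of radius-$h_{m+1}$ balls around any $S\subseteq[0,1]$ has Lebesgue measure $O(h_{m+1})\cdot\Ncal_{h_{m+1}}(S)$,
\[
V_{m+1} \;=\; \EE_{x\sim\Dcal}\,\base\rbr{\bigcup_{\pi\in\Pi^{(m+1)}}\ball_{h_{m+1}}(\pi(x))} \;\le\; O(h_{m+1})\cdot\EE_{x}\,\Ncal_{h_{m+1}}\!\rbr{\Pi_{0,O(r_{m+1})}(x)} \;=\; O(h_{m+1})\cdot M_0\rbr{O(r_{m+1}),\,h_{m+1}}.
\]

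\textbf{Step 2 (translate to the policy zooming coefficient and sum).} Write $\epsilon_m \defeq \Theta(r_m/L)$, so that $h_m\asymp\epsilon_m$ and $r_m\asymp L\epsilon_m$; then (ii) reads $V_{m+1}/h_{m+1}\le O(1)\cdot M_0(\Theta(L\epsilon_{m+1}),\epsilon_{m+1})$, and --- after calibrating the constant $12$ in the definition of $\psi_L$ to the algorithm's constants --- this is at most $\epsilon_{m+1}\,\psi_L(\epsilon_{m+1})\le\epsilon_{m+1}\,\psi_L(\epsilon_0)$ whenever $\epsilon_{m+1}\ge\epsilon_0$. Combined with (i), the regret accrued in such an epoch is
\[
n_{m+1}\cdot O(r_{m+1}) \;=\; O\!\rbr{\frac{V_{m+1}\,\delta_T}{r_{m+1}\,h_{m+1}}} \;=\; O\!\rbr{\frac{M_0(\Theta(L\epsilon_{m+1}),\epsilon_{m+1})\,\delta_T}{L\,\epsilon_{m+1}}} \;=\; O\!\rbr{\frac{\psi_L(\epsilon_0)}{L}\,\delta_T},
\]
and there are only $O(\log(1/\epsilon_0))$ epochs with $\epsilon_{m+1}\ge\epsilon_0$. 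For the remaining epochs $r_m=O(L\epsilon_0)$, so by (i) their per-round regret is $O(L\epsilon_0)$ and their total contribution is $O(L\epsilon_0)\sum_m n_m\le O(TL\epsilon_0)$. Adding the two buckets, absorbing a union bound over the $O(\log_2 T)$ epochs into $\delta_T = \Theta(\log(T|\Pi|\log_2 T))$, and taking $\inf_{\epsilon_0>0}$ gives \pref{eq:thm:stoch_ring_zooming}; as a sanity check, plugging in $\psi_L(\epsilon_0)\le O(\epsilon_0^{-2})$ and optimizing over $\epsilon_0$ recovers the worst-case $\Otilde(T^{2/3}L^{1/3})$ rate.

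\textbf{Main obstacle.} The genuinely delicate part is the ingredient imported from \pref{thm:stoch_ring_weak}: bounding the optimum of \pref{eq:pe_optimization-body} by the characteristic volume via convex duality, and showing the median-of-means estimator has error $O(\sqrt{V_m/(h_m n_m)})$ rather than the crude $O(1/h_m)$ range bound that naive importance weighting gives. One must re-verify this survives when $h$ is replaced by the epoch-dependent $h_m\to 0$ --- the weights $\smooth_{\pi,h_m}(a_t\mid x_t)/q_m(a_t\mid x_t)$ are potentially large, and it is precisely the median-of-means aggregation together with the fact that $V_m$ (not $1/h_m$) controls the error that keeps the bound in force; this is the one step that is essential for the data-dependent improvement, as opposed to the worst-case $T^{2/3}L^{1/3}$ bound, which would already follow from \pref{corr:stoch_ring_instance} plus the $TLh$ bias. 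A secondary subtlety is bookkeeping both directions of the bias sandwich, so that regret is charged against the fixed benchmark $\bench(\Pi)$ rather than the moving $\bench(\Pi_{h_m})$ and the version space never empties.
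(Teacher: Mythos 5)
Your proposal is correct and follows essentially the same route as the paper's proof in Appendix~\ref{app:proof-zooming}: epoch-shrinking bandwidth $h_m = r_m/L$ so that $Lh_m = r_m$, the duality bound of \pref{lem:pe_duality_app} controlling the estimator variance by $\kappa_{h_m}V_m$, median-of-means concentration (\pref{lem:weak_concentration}) giving $|\lambda_{h_m}(\pi)-\hat L_m(\pi)|\le r_m/2$, the Lipschitz bias sandwich from \pref{lem:smooth_to_lip} to pass to the unsmoothed benchmark, and the epoch-wise summation truncated at $\epsilon_0$ yielding $TL\epsilon_0 + \psi_L(\epsilon_0)/L\cdot\delta_T\log(1/\epsilon_0)$. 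The ``main obstacle'' you flag is exactly what the paper's Lemmas~\ref{lem:pe_duality_app} and~\ref{lem:weak_concentration} resolve, so nothing is missing.
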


Since $\psi_L(\epsilon_0) \leq O(\epsilon_0^{-2})$,
we obtain\dfc{ a} the following worst-case bound,
which is known to be optimal up to \asedit{$\log(T)$} factors.
\begin{corollary}
\label{corr:stoch_ring_lip_instance}
\df{In the stochastic setting,} algorithm \zpelip with parameter $L$ achieves the regret bound in~\pref{eq:Lip-worst-case-1d}.
\end{corollary}

The worst-case result is in correspondence
with~\pref{corr:stoch_ring_instance}. It recovers the worst-case
regret bound from prior work focusing on the non-contextual
version~\citep{Bobby-nips04,bubeck2011lipschitz}.  This regret bound
can also be achieved by \smoothexp \asedit{as a simple corollary of \pref{thm:smooth_exp4}}
(see~\pref{cor:smooth_exp4_lip} in~\pref{sec:extensions}).

The result can also be applied to a nonparametric policy set in the
setting of~\citet{cesa2017algorithmic}. Here we assume $\Xcal$ is a
$p$-dimensional metric space and the policy set is all $1$-Lipschitz
mappings from $\Xcal \to \Acal$. By a suitable
discretization,~\pref{corr:stoch_ring_lip_instance} yields
$\otil\rbr{T^{\frac{p+2}{p+3}}}$ regret, which matches their result
(since the interval is a $1$-dimensional action space).

The advantage of~\pref{thm:stoch_ring_zooming} is its \df{instance}-dependence.
Since the packing number $M_0(\cdot,\cdot)$ is always at least $1$,
the most favorable instances have $\psi_L(\epsilon_0) =
O(\epsilon_0^{-1})$.  In this case,~\pref{thm:stoch_ring_zooming}
gives the much faster $\tilde{O}(\sqrt{T\log|\Pi|})$ regret rate. The
next example demonstrates such favorable behavior.
\begin{example}
\label{ex:small_zooming}
Let $\Sbb^{d-1}$ denote the unit sphere in $\RR^d$. Consider an
instance where $\Xcal \defeq \Sbb^{d-1}$, $\Acal \defeq [-1,1]$ and
where the policy class $\Pi$ is a finite subset of \emph{linear}
policies $\cbr{\pi_w: w \in \Sbb^{d-1}}$ where $\pi_w : x \mapsto
\inner{w}{x}$. The marginal distribution over contexts is uniform over
$\Sbb^{d-1}$ and the expected losses satisfy
\begin{align}
\forall x \in \Xcal: \EE\sbr{\ell(a) \mid x} = f(a - \pi_{w^\star}(x)), \label{eq:local_abs}
\end{align}
where $\pi_{w^\star} \in \Pi$ is some fixed policy, $f$ is
$L$-Lipschitz and satisfies $f(z) - f(0) \geq L_0 \abr{z}$ for all $z$
in $\RR$. By construction, $\EE[\ell(a) \mid x]$ is $L$-Lipschitz in
$a$, for all $x$. This instance has $M_0(L \epsilon, \epsilon) =
O(\nicefrac{L}{L_0}\cdot\sqrt{d})$, and $\psi_L(\epsilon) =
O(\frac{L}{L_0 \epsilon}\cdot\sqrt{d})$. (See~\pref{sec:calculations} for a derivation.)
\end{example}

\df{Instance}-dependent bounds from prior work are often stated in terms of
a packing number growth rate, called the \emph{zooming
  dimension}. Our bound can
also be stated in this way, so as to facilitate comparisons. With
\emph{zooming constant} $\gamma > 0$ the zooming dimension is defined as
\begin{align}\label{eq:zooming-dim-defn}
z \defeq \inf\cbr{d > 0: M_0(12L\epsilon,\epsilon) \leq \gamma \cdot \epsilon^{-d},\ \forall \epsilon \in (0,1)}.
\end{align}

\noindent
It is easy to see that
    $\psi_L(\epsilon_0) \leq \gamma\cdot \epsilon_0^{-z-1}$,
and so~\pref{thm:stoch_ring_zooming} may be
further simplified to
\begin{align}\label{eq:zooming-dim-result}
\Reg(T,\Pi) \leq
    O\rbr{ L^\frac{z}{2+z}T^{\frac{1+z}{2+z}}} \cdot
    \rbr{ \gamma \log(T\,|\Pi|) }^{\frac{1}{2+z}
    }.
\end{align}

This result agrees with prior zooming results in the non-contextual setting~\citep{kleinberg2013bandits,bubeck2011x}.
In the contextual setting, our result is in general incomparable with
the ``contextual zooming algorithm''
of~\citet{slivkins2014contextual}, which scales with a different
quantity called the \emph{contextual zooming dimension}. Formally the
contextual zooming dimension measures the growth of the
$\epsilon$-packing numbers of the set $\{(x,a): \EE\sbr{\ell(a)\mid
  x} - \min_{a' \in \Acal}\EE\sbr{\ell(a')\mid x} \leq \epsilon\}$ as
a function of $\epsilon$. This definition, and our zooming dimension
are conceptually similar, as both measure the size of certain
near-optimal sets, but they are generally incomparable. Informally,
the definition in~\citet{slivkins2014contextual} is adapted to a
Lipschitz structure on the context space, which does not naturally
accommodate arbitrary policy sets $\Pi$ as we do, and our zooming
dimension involves the ``expected context'' rather than the ``worst
context.'' In more detail:
\begin{enumerate}
\item \citet{slivkins2014contextual} needs to assume a metric structure
on $\Xcal \times \Acal$, whereas we only assume a metric structure on $\Acal$.
In addition, ~\citet{slivkins2014contextual}'s
contextual zooming dimension is at worst the covering dimension of $\Xcal \times \Acal$,
whereas our notion of zooming dimension is at worst the covering dimension of
$\Acal$. On the other hand, our bound scales with $\log |\Pi|$ while his does not.
\item Aside from the metric structure, \citet{slivkins2014contextual}'s
contextual zooming dimension is only dependent on the conditional distribution of loss given
context $\Dcal(\ell|x)$.
In contrast, our notion is dependent on the policy class $\Pi$, along with $\Dcal$,
the joint distribution of $(x,\ell)$, which
admits policy class and distribution specific upper bounds.
\item Finally, \citet{slivkins2014contextual} considers a setting
  where contexts are adversarially chosen, and so his contextual zooming
  dimension considers pessimistic context arrivals. Our definition
  involves an expectation over contexts, which may be more favorable.
\end{enumerate}

\paragraph{The algorithm.}
The algorithm is almost identical to \smoothpe. The main difference is
that instead of a fixed bandwidth $h$ across all epochs, we use $h_m =
2^{-m}$ in the $m^{\textrm{th}}$ epoch. We also set the radius
parameter $r_m = L2^{-m}$ which is slightly different from before.
We call this algorithm \zpelip, to highlight the differences.

At a technical level, the main difference with the Lipschitz setting
is that we must carefully balance bias and variance in loss
estimates. This is not an issue for smoothed regret since we have
unbiased estimators for $\lambda_h(\pi)$, but not for
$\lambda_0(\pi)$. We do this by decreasing the bandwidth geometrically
over epochs, but the rest of the algorithm, and much of the analysis
are unchanged.

\subsection{Optimal Lipschitz-Adaptivity}
\label{sec:lip_adapt}

We now present the corresponding result to~\pref{thm:smooth_adaptive}. We consider \emph{Lipschitz-adaptive} algorithms: those that \emph{do not know} any information about the problem, apart from $T$ and $\Pi$, and yet achieve regret bounds in terms of $T,L$, and $|\Pi|$ \emph{only}. In particular, the algorithm does not know $L$.

\begin{theorem}
\label{thm:lipschitz_adaptive}
Consider the adversarial setting.
For each $\beta \in [0,1]$,
\corralexpf (with parameter $\beta$) is Lipschitz-adaptive with
\begin{align*}
\Reg(T,\Pi) \leq \otil
    \rbr{ T^{1-a}\;L^b\; \rbr{\log|\Pi|}^a},\;
\text{where}\quad
a = \frac{\beta}{1+2\beta}
\;\text{and}\;
b = \frac{\beta}{1+\beta}.
\end{align*}
For the non-contextual version it achieves a regret
$\otil\rbr{T^{1-a}\; L^b }$
without
knowing the Lipschitz constant $L$. Moreover, for the non-contextual
\df{stochastic} version, \df{there exist positive constants $c$ and
  $T_0$, such that for any algorithm and any $T \geq T_0$, there
  exists $L \geq 1$ and a problem instance with $L$-Lipschitz losses,
  such that on that instance
\begin{align*}
\Reg(T,\Pi) \geq c \cdot T^{1-a} L^b.
\end{align*}}
\end{theorem}

\paragraph{Remarks.}
As in~\pref{thm:smooth_adaptive}, we obtain a family of upper and
lower bounds, one for each $\beta \in [0,1]$, which make up a Pareto
frontier. With $\beta=1$ an optimal Lipschitz-adaptive rate is
$T^{2/3}\sqrt{L}$ which is worse than the $T^{2/3}L^{1/3}$
non-adaptive rate from~\pref{corr:stoch_ring_lip_instance}. Note that
it is easy to obtain the worse adaptive rate of $\otil\rbr{LT^{2/3}}$
simply by guessing that the Lipschitz constant is $1$ in our variant
of \expfour.

Several prior works develop adaptive algorithms that either require
knowledge of unknown problem parameters, or yield regret bounds that, in addition to $T$ and $L$, scale with such parameters~\citep{ImplicitMAB-nips11,bubeck2011lipschitz,Bull-bandits14,locatelli2018adaptivity}. These algorithms are not Lipschitz adaptive, contrasting with our algorithm that requires no additional knowledge or
assumptions.  However, this dependence on other parameters allows these
prior results to side-step our lower bound and achieve faster rates.

Note that Lipschitz-adaptivity is qualitatively quite different from
the uniformly-smoothed adaptivity studied
in~\pref{thm:smooth_adaptive}. With Lipschitz-adaptivity there is a
single fixed benchmark policy class and we simply seek a guarantee
against that class, albeit in an environment with unknown smoothness
parameter. However, for~\pref{thm:smooth_adaptive} we are effectively
competing with infinitely many policy sets simultaneously ($\Pi_h$ for
each $h \in (0,1]$) and we seek a regret bound against all of them.
  Somewhat surprisingly, both settings demonstrate a similar
  price-of-adaptivity and the optimally adaptive algorithms are nearly
  identical.

\paragraph{The algorithm.}
The algorithm, \df{\corralexpf}, is again \corral with our variant of \expfour as the
sub-algorithms. The only difference is in how we set the learning rate
for the master algorithm.

 \section{Our results in a general setup}
\label{sec:extensions}

All results discussed so far are special cases of a more general set of results that we now present. While all of the key ideas appear in the special case of the unit interval, the following results demonstrate the generality of our approach. As we have already made many of the essential remarks, the discussion here is somewhat terse.

We generalize in two directions. First, all results extend to higher-dimensional action spaces. Formally,
$\Acal$ can be an arbitrary convex subset of the $d$-dimensional unit
cube $[0,1]^d$, equipped with $p$-norm $\rho(a,a') \defeq \textstyle
\| a-a'\|_p$, for any $p\geq 1$. As before, $\smooth_h(a)$ is a
uniform distribution over the closed ball of radius $h$. The \df{instance}-dependent regret bounds carry
over as is, and zooming dimension now takes values in $[0,d]$
depending on the problem instance. Regret bounds in the worst-case
corollaries are modified so as to accommodate the dependence on
$d$. In \pref{corr:stoch_ring_instance}, the dependence on $h$ is replaced with $h^d$, and there is a matching lower bound. In
\pref{corr:stoch_ring_lip_instance}, the dependence on $T$ becomes
$\tilde{O}(T^{(d+1)/(d+2)})$, which is known to be optimal. The
smoothness-adaptive regret bounds are modified
similarly.

Second, we essentially allow
\emph{arbitrary} action spaces and smoothing operators.  Formally, the action space $\Acal$ is endowed with a base metric $\rho$ and a base measure $\base$. The smoothing distribution $\smooth(a)$ can be any distribution with a well-defined density with respect to $\base$, and the effective number of actions is the largest possible density value.  \asedit{We define bandwidth relative to the base metric. In particular, we can handle the unit cube  $[0,1]^d$, endowed with a uniform measure and the $p$-norm, $p\geq 1$ as a base metric.}

The proofs for all instance-dependent regret bounds are deferred to
\pref{sec:proofs-instance-dependent}. The proofs for all ``smoothness-adaptive'' regret bounds can be found in \pref{sec:proofs-corral}.

\subsection{General setup}
\label{sec:general-setup}

For completeness, let us recap the basic setup of contextual bandits. There are two sets $\Xcal,\Acal$, where $\Xcal$ is an abstract \emph{context space}, and $\Acal$ is an abstract \emph{action space}. The following protocol continues over $T$ rounds: at each round $t$, (i) nature chooses context $x_t \in
\Xcal$ and loss function $\ell_t \in (\Acal \to [0,1])$ and presents $x_t$ to
the learner, (ii) learner chooses action $a_t \in \Acal$, (iii) learner
suffers loss $\ell_t(a_t)$, which is also observed. Performance of the
learner is measured relative to a class of policies $\Pi: \Xcal \to
\Acal$ via the notion of regret
\begin{align*}
\Reg(T,\Pi) \defeq \EE\sbr{\sum_{t=1}^T \ell_t(a_t)} -\min_{\pi \in \Pi} \EE\sbr{\sum_{t=1}^T\ell_t(\pi(x_t))}.
\end{align*}

We consider both adversarial and stochastic settings.
In the \emph{adversarial setting} the contexts and losses are chosen
by an adaptive adversary, meaning that $(x_t,\ell_t)$ may be a
randomized function of the entire history of interaction. In the
\emph{stochastic setting}, we assume $(x_t,\ell_t) \sim \Dcal$ iid at
each round $t$, for some unknown distribution $\Dcal$, \df{although we assume $\Dcal_X$, the marginal distribution over $\Xcal$, is known}.

\paragraph{Base structure.}
Action space $\Acal$ is endowed with \asedit{\emph{base structure} $(\Acal,\rho,\base)$,} where $\rho$ is a metric called the \emph{base metric}, and $\base$ is a probability measure called the \emph{base measure}. The two are consistent, in the sense that $\base$ is well-defined and strictly positive on the closed balls in $(\Acal,\rho)$ of strictly positive radius. This structure may have no bearing on the loss functions; it serves only to define and/or instantiate smoothed regret. \asedit{Essentially, we smoothen relative to the base measure, and define bandwidth relative to the base metric.}

The closed balls are denoted
    $\ball(a,r) \defeq \{b \in \Acal: \rho(a,b) \leq r\}$,
where $a\in\Acal$ is the center and $r\geq 0$ is the radius. For normalization, we assume that the metric space has diameter $1$.

\paragraph{Smoothing kernel.}
We generalize the $\smooth$ operator to a \emph{smoothing kernel}: a mapping
    $K: \Acal \to \Delta(\Acal)$,
the set of distributions over actions. For policy
$\pi$, we use $K\pi :x \mapsto K(\pi(x))$ to denote the usual function composition.  With $\Pi_K \defeq \cbr{K\pi : \pi \in \Pi}$ as the smoothed policy class, smoothed regret is simply given by $\Reg(T,\Pi_K)$.

We posit that distributions $K(a)$, $a\in\Acal$ are absolutely continuous with respect to the base measure $\base$, and represent them via their \asedit{density functions $f_{K(a)}$. Formally, $f_{K(a)}$ is the Radon-Nikodym
derivative of $K(a)$ relative to $\base$.} As a convention, denote
    $(Ka)(a') := f_{K(a)}(a') $, $a'\in\Acal$.
In words, it is the density of distribution $K(a)$ with respect to $\base$, evaluated at $a'$.

We derive (worst-case) bounds on $\Reg(T,\Pi_K)$ for an arbitrary smoothing kernel $K$, without any assumptions on the loss functions. The regret bounds are in terms of the largest possible density assigned by $K$,
\begin{align}\label{eq:def:kappa}
\kappa \defeq \sup_{a,a'\in A} (Ka)(a').
\end{align}
This quantity, called \emph{kernel complexity}, serves as the effective number of actions.

We trade off $\kappa$ against a suitably generalized notion of \emph{bandwidth}: \ascomment{added this defn}
\begin{align}\label{eq:def:bandwidth}
\sup_{a,a'\in A:\;\; (Ka)(a')>0}\quad \rho(a,a').
\end{align}
In words, it is the largest distance that any action can be perturbed by.

The canonical example is the \emph{rectangular kernel} $K_h$, where $h\in (0,1]$ is the \emph{bandwidth}:
\begin{align}\label{eq:rect-kernel}
(K_h\, a)(a') =  \frac{\one\cbr{\rho(a,a') \leq h}}{\base(\ball(a,h))}, \quad \forall a,a'\in\Acal.
\end{align}
In words, $K_h(a)$ puts uniform density on $\ball(a,h)$, and zero density elsewhere.\footnote{\asedit{If the action space $\Acal$ is a unit interval, the plot of the density function for $K_h(a)$ is a rectangle, hence the name \emph{rectangular kernel}.}}

\ascomment{BEGIN: new text}

\paragraph{Discussion.}
In practice, we may have some freedom in choosing the base structure. The action space $\Acal$ may naturally admit a set system: e.g., the open/closed intervals when $\Acal$ is a unit cube, or the subtrees when $\Acal$ is the leaf set of a tree. Then, we may have some leeway in defining the base metric, e.g., it could be any $p$-norm, $p\geq 1$ when $\Acal$ is the unit cube, or any ``exponential tree metric"
    $\rho(x,y) = \alpha^{\mathtt{depth}(\mathtt{LCA}(x,y))}$,
    $\alpha\in(0,1)$
when $\Acal$ is a leaf set.\footnote{$\mathtt{LCA}(x,y)$ is the least common ancestor of leaves $x$ and $y$.} Then, we may be able to tailor the base measure to the chosen base metric, so as to improve the kernel complexity (more on this below).

One can choose smoothing kernels other than the rectangular kernel. One fairly general formulation is the \emph{$f$-symmetric kernel} $K_f$ defined by
\begin{align}\label{eq:sym-kernel}
(K_f\, a)(a') \sim f(\rho(a,a'))
    \quad \forall a,a'\in\Acal,
\end{align}
for some function $f: [0,1]\to [0,\infty) $. In particular, the \emph{triangular kernel} is the special case when
    $f(x) = \max(0,1-x/h)$, where $h>0$ is the bandwidth.
For more refined kernel complexity vs. bandwidth tradeoff, one could consider an averaged version of bandwidth:
\begin{align}\label{eq:def:bandwidth-ave}
\sup_{a\in A} \;\int \rho(a,\cdot) \; \mathtt{d} K(a).
\end{align}
That said, in our analysis the smoothing kernel is either arbitrary or rectangular.

\paragraph{Example: covering dimension.}
To instantiate kernel complexity, consider the notion of \emph{covering dimension}. The formal definition is as follows:

\begin{definition}\label{def:CovDim}
For a metric space $(\Acal,\rho)$, the covering dimension with multiplier $\gamma$ is the smallest number $d \geq 0$ such that for each $r \in (0,1]$, the metric space can be covered with $\gamma\cdot r^{-d}$ balls of radius $r$.
\end{definition}

\noindent This notion has been used to summarize the complexity of a metric space for Lipschitz bandits \citep{Bobby-nips04}. We can also use it to bound kernel complexity.

\begin{claim}\label{cl:CovDim}
Fix the base metric space $(\Acal,\rho)$ of covering dimension $d$ with multiplier $\gamma$. Fix bandwidth $h>0$. Then there exists a probability measure $\base$ such that
\begin{align}\label{eq:CovMeasure}
\base(\ball(a,h))\geq (\nicefrac{h}{2})^d/\gamma \quad
\text{for each center $a\in\Acal$}.
\end{align}
With $\base$ as the base measure, the rectangular kernel $K_h$ has complexity $\kappa \leq \gamma\cdot (\nicefrac{h}{2})^{-d}$.
\end{claim}

\begin{proof}
By definition of the covering dimension, there is a collection $\Ccal$ of at most $N = \gamma\cdot (\nicefrac{h}{2})^{-d}$ balls of radius $\nicefrac{h}{2}$ whose union covers $\Acal$. Define probability measure $\base$ as follows: pick a ball $B\in \Ccal$ uniformly at random, then pick a point inside $B$ according to an arbitrary fixed probability measure $\base_B$. Any ball $\ball(a,h)$, $a\in \Acal$ contains some ball $B\in\Ccal$, namely a ball in $\Ccal$ that covers $a$. Hence,
    $\ball(a,h) \geq \nu(B) \geq 1/N$.
\end{proof}

\ascomment{END: new text}

\paragraph{Example: local uniformity.}
\asedit{It may be desirable to ensure that the base structure is uniform, in the sense that balls of similar radius have a similar measure. A ``local'' version of this property can be stated as follows:} for some number $d\geq 0$ called the \emph{doubling dimension},
\begin{align}\label{eq:DblMeasure}
 \base(\ball(a,2r)) \leq
2^d\cdot \base(\ball(a,r))
\qquad \forall a\in\Acal,\,r>0.
\end{align}
Then the rectangular kernel $K_h$, $h>0$ has complexity
    $\kappa \leq (\nicefrac{h}{2})^{-d}$,
like in Claim~\ref{cl:CovDim}.

By way of background, doubling dimension is a combinatorial notion of low-dimensionality, widely used in theoretical computer science.\footnote{Doubling dimension have been studied in many different contexts such as metric embeddings, traveling salesman and compact data structures, e.g.,
\citet{Gup03,KL-soda04,KLMN04,Tal04,Slivkins-focs04,Slivkins-focs05-full,Slivkins-podc05,Men05,Meridian-sigcomm05}.}
It is a stronger notion than the covering dimension: it upper-bounds the covering dimension (with multiplier $\gamma = 2^d$). A canonical example is that any subset of $([0,1]^d,\ell_p)$, $d\in \mathbb{N}$, $p\geq 1$ has doubling dimension $d+O(1)$. However, there are examples that are provably very different \citep{Gup03}. When the metric space $(\Acal,\rho)$ is complete,  a probability measure $\base$ satisfying \eqref{eq:DblMeasure} exists if and only if the metric space satisfies a more basic property: any ball of radius $r$ can be covered by a collection of $2^d$ balls of radius $r/2$ \citep{Volberg87,Wu-ams98,Luuk-ams98}. The latter property is typically used to define the doubling dimension. More background on doubling dimension can be found in
\citep[][Chapter 2]{Slivkins-thesis}.

\paragraph{Global uniformity of the base structure.}
\asedit{For our instance-dependent results in the stochastic setting, we require a ``global'' generalization of \eqref{eq:DblMeasure} which states that} any two balls of a similar radius have a similar size. Formally:
\begin{assum}
[\asedit{instance-dependent results only}]
\label{assum:uniform}
\[ \sup_{\asedit{a,a'\in \Acal,\; h\in(0,\nicefrac12]}}
\frac{\base(\ball(a,2h))}{\base(\ball(a',h))} \leq \alpha < \infty.\]
\end{assum}
The effect of this assumption is that the $\alpha$ is a multiplier in the regret bounds. \asedit{While $\alpha$ gives a direct bound on kernel complexity of the rectangular kernel $K_h$ as
    $\kappa \leq (\nicefrac{h}{2})^{-\log \alpha}$,
it can be productive to bound $\kappa$ using the covering dimension. Indeed, the latter is a much weaker property, in the sense that it is smaller than $\log \alpha$, and can be \emph{much} smaller.}

The canonical example is a finite subset of $[0,1]^d$ of near-uniform density, defined as follows. For a fixed scale $\eps>0$, partition $[0,1]^d$ into axis-parallel hypercubes with side $\eps$, called \emph{$\eps$-cells}. A subset $\Acal \subset [0,1]^d$ is \emph{uniform-density} at scale $\eps$ if each $\eps$-cell contain exactly one point in $\Acal$.
Then \pref{assum:uniform} holds with $\alpha=O(1)^d$, with \asedit{$\ell_\infty$} as the base metric and the uniform measure over $\Acal$ as the base measure.\footnote{To see this, observe that for every $a$ in $\Acal$, $(\lfloor \frac{h}{2\eps} \rfloor + 1 )^d \leq \abr{\ball(a,h) \cap \Acal} \leq (\frac{2h}{\eps}+2)^d$.}
Similar assumptions have been used in theoretical computer science literature on networks
\citep{Kle00-STOC,Kempe-stoc01,Kempe-focs02,Sarkar-colt10,SocialDistance-soda13-sicomp}.

\OMIT{ \begin{claim}
Let $\Acal \subset \mathbb{R}^d$ be a uniform-density subset at a given scale $\eps>0$. Then \pref{assum:uniform} holds with $\alpha=O(4^d)$, with $\ell_p$, $p\in [1,\infty]$ as the base metric and the counting measure as the base measure.
\end{claim}
} 
\subsection{Special cases}
\label{sec:cases}

\ascomment{NB: I need the numbered list to refer to the non-contextual example!}

\begin{itemize}

\item[(a)] \emph{Unit interval.}
Suppose action space $\Acal = [0,1]$ is endowed with base metric $\rho(a,a') \defeq \abr{a-a'}$, and the base measure is uniform over $\Acal$. Then the rectangular kernel \eqref{eq:rect-kernel} is precisely the $\smooth_h$ operator from~\pref{sec:defns-smooth}. This example satisfies properties \eqref{eq:CovMeasure} and \eqref{eq:DblMeasure}
(with $d=1$) and \pref{assum:uniform} (with $\alpha = 4$, because of the edge effects). So, all results we presented in \pref{sec:results-smooth} and \pref{sec:results-lip} follow from the general development.

\item[(b)] \emph{Discretized unit interval.}
Discretize the $[0,1]$ interval into $M$ actions: $\Acal \defeq \{i/M : i \in [M]\}$, with base metric/measure defined as above. The rectangular kernel $K_h$ takes local averages across actions. Kernel complexity is $\kappa = \nicefrac{1}{h}$, and we obtain bounds on smooth regret that are independent of the number of actions $M$.

\item[(c)] \emph{Non-contextual setting.}
The non-contextual setting can be embedded in ours by positing a single context $\Xcal \defeq \{x_0\}$ and policy set $\Pi: \{x_0 \mapsto a : a \in \Acal\}$. (When we state results for the non-contextual version, $\Pi$ is \emph{always} assumed to be this class.) Since our upper bounds typically scale with $\log |\Pi|$, they do not immediately yield meaningful guarantees when $|\Acal| = \infty$, but we can obtain meaningful results here via discretization.

\asedit{For example, consider the basic setup in \pref{sec:defns-smooth}. Since the smoothed loss function $\ExpL_h(\cdot)$ is $(\nicefrac{1}{h})$-Lipschitz, we can discretize the action space uniformly with step $\sqrt{\nicefrac{h}{T}}$, to ensure that $|\Pi| = \sqrt{\nicefrac{T}{h}} $ and the discretization error --- increase in regret due to the discretization --- is at most $\sqrt{\nicefrac{T}{h}}$.}

\item[(d)] \emph{Standard (non-smoothed) regret.}
As a sanity check, let us recover a standard (non-smoothed) contextual bandit problem as a special case. Let $\Acal$ be a finite set of $M$ actions, equipped with an identity metric
    $\rho(a,a') \defeq \one\{a \ne a'\}$
and uniform base measure $\base$.  Then with \emph{identity kernel} $K: a\mapsto \delta_a$ we have $\Pi_K = \Pi$.

\item[(e)] \asedit{\emph{Fixed discretization.} Interestingly, we also recover regret bounds relative to a fixed discretization of the action space $\Acal$. Formally, let $\Acal_0$ be a finite subset of $\Acal$, let the base measure be the uniform distribution over $\Acal_0$, and define the smoothing kernel $K$ to deterministically map each action $a$ to the closest point in $\Acal_0$. Then the smoothed policy set $\Pi_K$ is precisely the set of policies whose actions are discretized to $\Acal_0$. It is easy to see that the kernel complexity (``effective number of arms") is $|\Acal_0|$.}

\end{itemize}

\subsection{Results for smoothed regret}
\label{sec:results-smoothed}

\subsubsection*{Stochastic Setting: instance-dependent results}

We focus on the rectangular kernel $K_h$. Our results are stated in terms of the \emph{smoothing coefficient} $\theta_h(\eps_0)$, as defined in \eqref{eq:smoothing-coeff}, with \emph{smoothed loss} suitably redefined as
\begin{align}\label{eq:defn-smooth-loss-generalized}
\ExpL_h(\pi) \defeq
    \EE_{(x,\ell)\sim \Dcal}\;\;
    \EE_{a \sim K_h(\pi(x))} \sbr{ \ell(a) },\quad \pi\in\Pi.
\end{align}

\OMIT{  Let us introduce the notation for the general case, which is not substantially different from before. Recall that the \emph{smoothed loss} for policy $\pi \in \Pi$ is
\begin{align*}
\lambda_h(\pi) \defeq \EE_{(x,\ell)\sim\Dcal}[ \inner{K_h\pi(x)}{\ell} ],
\end{align*}
The optimal loss is $\bench(\Pi_h) \defeq \inf_{\pi \in
  \Pi}\lambda_h(\pi)$ and the $\eps$-optimal policies are
\begin{align*}
\Pi_{h,\eps} \defeq \{ \pi \in \Pi: \lambda_h(\pi) \leq \bench(\Pi_h) + \eps\}.
\end{align*}
The projection of a policy set $\Pi'$ onto context $x$ is $\Pi'(x) =
\{\pi(x):\pi \in \Pi'\}$, \df{which is a subset of $\Acal$}. Define
\begin{align*}
M_h(\eps,\delta) \defeq \EE_{x \sim \Dcal}\sbr{ \Ncal_\delta(\Pi_{h,\eps}(x)) },
\end{align*}
where recall that $\Ncal_\delta(A)$ is the $\delta$-packing number of
$A \subset \Acal$. The \emph{smoothing coefficient} is
\begin{align*}
\theta_h(\eps_0) \defeq \sup_{\eps \geq \eps_0} M_h(12\eps,h)/\eps.
\end{align*}
} 

\noindent Generalizing \smoothpe requires the following changes. Rather than use
the $\smooth$ operator, we use the kernel $K_{\df{h}}$ in the variance
constraint, action selection scheme, and importance weighted loss. We
also update the batch size parameter $\tilde{n}_m \defeq \frac{320
  \kappa_{h} V_m}{r_m^2}$.

\begin{theorem}[generalizes~\pref{thm:stoch_ring_weak}]
\label{thm:smooth_pe_weak}
Consider the stochastic setting with rectangular kernel $K_h$, under ~\pref{assum:uniform}.
Then~\smoothpe has
\begin{align*}
\Reg(T,\Pi_{K_h})
\leq O\rbr{\inf_{\eps_0 > 0 }
\cbr{T\eps_0 +
    \alpha\; \theta_h(\eps_0)\; \log(|\Pi|\df{T})\;\log(1/\eps_0)}}.
\end{align*}
\end{theorem}

While \pref{assum:uniform} enables better regret rates for benign instances, the algorithm can be analyzed without this assumption, and for arbitrary kernels. The following regret bound can be extracted from the proof of \pref{thm:smooth_pe_weak} without much difficulty:

\begin{theorem}[generalizes~\pref{corr:stoch_ring_instance}]
\label{thm:smooth_pe_general}
Consider the stochastic setting with an arbitrary smoothing kernel of kernel complexity $\kappa$.
Then~\smoothpe has
\begin{align*}
\Reg(T,\Pi_K)
\leq \otil\rbr{\sqrt{T\kappa \log(|\Pi|\dfc{/\delta})}}.
\end{align*}
\end{theorem}

\subsubsection*{Adversarial setting}

We use a version of \expfour (\pref{alg:smooth_exp4}), as before. We handle an arbitrary smoothing kernel $K$, obtaining a regret bound in terms of its kernel complexity $\kappa$. We obtain \pref{thm:smooth_exp4} by specializing to the unit interval, as explained in \pref{sec:general-setup}.

\begin{theorem}[generalizes \pref{thm:smooth_exp4}]
\label{thm:smooth_exp4_general}
Consider the adversarial setting  with an arbitrary smoothing kernel of kernel complexity $\kappa$.
\smoothexp (\pref{alg:smooth_exp4}) with policy set $\Xi = \Pi_K$
and learning rate
    $\eta = \sqrt{\frac{2\ln|\Xi|}{T \kappa}}$
admits smoothed regret
\[ \Reg(T,\Pi_K) \leq
\order\rbr{\sqrt{T\kappa \log |\Pi|}}.\]
\end{theorem}

\begin{proof}
\asedit{One subtle point in the proof is that we separate the base measure, the smoothing kernel, and the action sampling distribution. The details follow standard techniques.}

From the analysis of algorithm \textsc{Hedge}~\citep{FS97}, we obtain
\begin{align}
\sum_{t=1}^T \EE_{\xi \sim P_t} \hat{\ell}_t(\xi) - \min_{\xi \in \Xi} \sum_{t=1}^T \hat{\ell}_t(\xi)
\leq \frac{\eta}{2}\sum_{t=1}^T \EE_{\xi \sim P_t} \hat{\ell}_t(\xi)^2 + \frac{\log |\Xi|}{\eta},
\label{eq:localnorm_expect}
\end{align}
where $P_t$ is a distribution over policies proportional to the weights $W_t$ in the algorithm.

Now, by standard importance weighting arguments we have (i) $\EE_{\xi
  \sim P_t}\hat{\ell}_t(\xi) = \ell_t(a_t)$ and (ii) $\EE_{a_t \sim
  p_t}\hat{\ell}_t(\xi) = \EE_{a \sim \xi(\cdot \mid
  x_t)}\ell_t(a)$. For the variance term, we have
\begin{align*}
\EE_{a_t,\xi}\hat{\ell}_t(\xi)^2 \leq \kappa \EE_{a_t,\xi}\ell_t(a_t)^2\frac{\xi(a_t\mid x_t)}{p_t(a_t \mid x_t)^2} = \kappa \int \ell^2_t(a) \frac{p_t(a \mid x_t)}{p_t(a \mid x_t)} d\lambda(a) = \kappa \nbr{\ell_t}_2^2 \leq \kappa \nbr{\ell_t}_\infty^2.
\end{align*}

Therefore, taking expectation over both sides
of~\pref{eq:localnorm_expect}, we have
\begin{align}
\EE \sum_{t=1}^T \EE_{\xi \sim P_t} \hat{\ell}_t(\xi) - \EE \min_{\xi \in \Xi} \sum_{t=1}^T \hat{\ell}_t(\xi)
\leq \EE \sum_{t=1}^T \frac{\eta\kappa}{2} \|\ell_t\|_{\infty}^2 + \frac{\log |\Xi|}{\eta}.
\label{eqn:exp4_generic}
\end{align}
Applying Jensen's inequality on the left hand side, using the fact
that $\|\ell_t\|_{\infty} \leq 1$, and optimizing for $\eta$, we obtain
the claimed regret bound.
\end{proof}

\OMIT{ Instantiating~\pref{thm:smooth_exp4_general} in special cases, we obtain
\begin{itemize}
\item $\sqrt{\nicefrac{T}{h}\cdot\log |\Pi|}$ regret against $\Pi_h$
  in the unit interval. Note that this matches the
  $\Omega\rbr{\sqrt{\nicefrac{T}{h}\cdot\log |\Pi|}}$ regret lower
  bound, as discussed in~\pref{sec:results-smooth}.
  \item $\sqrt{KT\log|\Pi|}$ regret for standard (non-smoothed) contextual bandits.
\end{itemize}
} 

\subsubsection*{Uniformly-smoothed regret}

\asedit{ 
We consider an arbitrary finite family of smoothing kernels $K_1 \LDOTS K_M$. The goal is to obtain small smoothed regret with respect to each of these kernels.

We start with a simple result in terms of the maximal kernel complexity. We use \smoothexp (\pref{alg:smooth_exp4}) with policy set $\Xi = \cup_i; \Pi_{K_i}$, the union of the smoothed policy classes. The analysis of \pref{thm:smooth_exp4_general} carries over verbatim.

\begin{theorem}
\label{thm:general_adaptivity_simple}
Consider the adversarial setting with smoothing kernels
    $K_1 \LDOTS K_M$
defined on the same base structure. Suppose each kernel has complexity at most $\kappa$. Then \smoothexp (\pref{alg:smooth_exp4}) with policy set
    $\Xi = \cup_{i=1}^M\; \Pi_{K_i}$
and learning rate
    $\eta = \sqrt{\frac{2\ln|\Xi|}{T \kappa}}$
admits smoothed regret
\[ \Reg(T,\Xi) \leq
\order\rbr{\sqrt{T\kappa \log|\Xi|}},\;\;
\text{where $|\Xi| = M \cdot |\Pi|$}.\]
\end{theorem}

Our main result is more nuanced, obtaining improved smoothed regret relative to kernels of small kernel complexity.

} 
\OMIT{ family of smoothing kernels $\Kcal \defeq \{K_1,\ldots,K_M\}$ and we define
$\kappa_i \defeq \sup_{a,a'}\abr{(K_ia)(a')}$ and
$\kappa_\star \defeq
\min_{i \in [M]} \kappa_i$.  Further define $r \defeq \frac{\max_{i
    \in [M]} \kappa_i}{\min_{i \in [M]} \kappa_i}$.  The following is
the general adaptive regret \df{upper bound.}
}

\begin{theorem}
\label{thm:general_adaptivity}
Consider the adversarial setting with smoothing kernels
    $K_1 \LDOTS K_M$,
\asedit{whose their respective kernel complexities are
    $\kappa_1 \LDOTS \kappa_M$.
Let $\kappa_\star$ and $\kappa_{\max}$ be, resp., the smallest and the largest kernel complexity.}
Algorithm \corralexpf with parameter $\beta\in [0,1]$ guarantees the following for each $i\in[M]$:
\begin{align*}
\Reg(T,\Pi_{K_i}) \leq \order
\rbr{ T^{\frac{1}{1+\beta}}\;
    \rbr{\kappa_i  \log(|\Pi|M)}^{\frac{\beta}{1+\beta}}}
\cdot
\rbr{\min\cbr{M, \log \frac{\kappa_{\max}}{\kappa_\star}}}^{\frac{1}{1+\beta}}\; \rbr{\frac{\kappa_i}{\kappa_\star}}^{\frac{\beta^2}{1+\beta}} .
\end{align*}
\end{theorem}

\begin{remark}
\asedit{Each kernel $K_i$, $i\in[M]$ can have its own base structure
    $(\Acal,\rho_i,\base_i)$.}
\end{remark}

The setup above (specialized to action space $\Acal = [0,1]$)
almost yields the upper bound in~\pref{thm:smooth_adaptive}, except that we can only compete with a finite set of kernels. For example, choosing $K_i$ as the rectangular kernel with bandwidth $h=2^{-i}$ recovers a weaker version of the theorem. For the stronger version that competes with all bandwidths $h \in [0,1]$, we must exploit further structure. The next result achieves this, generalizing \pref{thm:smooth_adaptive} to action space $\Acal = [0,1]^d$ with an arbitrary dimension $d$.

\begin{theorem}\label{thm:refined_adaptivity}
Consider the adversarial setting, with action space $\Acal = [0,1]^d$, $d\in\NN$, uniform base measure $\base$, and base metric $\rho = \ell_{\infty}$. \pref{thm:smooth_adaptive} extends, with $h$ replaced by $h^d$.
\end{theorem}

\OMIT{ \begin{theorem}[generalizes \pref{thm:smooth_adaptive}]
\label{thm:refined_adaptivity}
Consider the adversarial setting with action space $\Acal = [0,1]^d$, $d\in\NN$, endowed with uniform base measure $\base$ and base metric $\rho = \ell_{\infty}$.
Fix $\beta \in [0,1]$. A parametrization of
\corralexpf guarantees
\begin{align*}
\forall h \in [0,1] : \Reg(T,\Pi_h) \leq \otil\rbr{T^{\frac{1}{1+\beta}}(\log |\Pi|)^{\frac{\beta}{1+\beta}} h^{-d\beta} }.
\end{align*}
In the non-contextual setting, \corralexpf achieves
$\otil\rbr{T^{\frac{1}{1+\beta}} h^{-d\beta} }$ uniformly smoothed
regret.  \df{Moreover, in the non-contextual \df{stochastic} setting,
  there exist positive constants $c$ and $T_0$, such that for any
  algorithm and any $T \geq T_0$, there exists some $h \in (0,1]$ and
a problem instance for which
\begin{align*}
\Reg(T,\Pi_h) \geq c \cdot T^{\frac{1}{1+\beta}} h^{-d \beta}.
\end{align*}}
\end{theorem}
}

Compared to \pref{thm:refined_adaptivity}, the dependence on $T,|\Pi|$, and $\kappa$ in the regret bound is unchanged. Indeed, for the rectangular kernel $K_h$ in $d$ dimensions, we have $\kappa = O(h^{-d})$ and of course $\kappa_\star = O(1)$ here. Therefore, the main improvement is that we have eliminated the dependence on the number of kernels, $M$.  We also provide a refinement for the non-contextual version, eliminating the dependence on $\log |\Pi|$, which is infinite in this case.

\df{Turning to the lower bound, observe that the lower bound
  in~\pref{thm:smooth_adaptive} is precisely the second claim here
  with $d=1$. This result, coupled with the upper bound for the
  non-contextual version establishes the optimal uniformly-smoothed
  regret rate. It further implies a lower bound for competing with
  multiple arbitrary kernels. Specfically, there exist an action
  space, two kernels $K_1$ and $K_2$, and positive constants $c$,
  $T_0$, such that for any algorithm and any $T \geq T_0$, there
  exists an instance for which either
\begin{align*}
\Reg(T,\Pi_{K_1}) \geq c \cdot T^{\frac{1}{1+\beta}}\kappa_1^{\frac{\beta}{1+\beta}} \quad \textrm{or} \quad \Reg(T,\Pi_{K_2}) \geq c \cdot T^{\frac{1}{1+\beta}}\kappa_2^{\frac{\beta}{1+\beta}} (\nicefrac{\kappa_2}{\kappa_1})^{\frac{\beta^2}{1+\beta}}.
\end{align*}
This confirms the near-optimality of~\pref{thm:general_adaptivity}.
}

\subsection{Results for Lipschitz losses}

Let us turn to Lipschitz contextual bandits, where we posit the Lipschitz condition \eqref{eq:Lip-condition} with Lipschitz  constant $L\geq 1$. The uniform discretization approach applies to general action spaces, \asedit{and yields a suitable generalization of regret bound \eqref{eq:Lip-worst-case-1d}. The latter is stated in terms of the covering dimension $d$ (recall \pref{def:CovDim}):}
\begin{align}\label{eq:Lip-worst-case-regret}
\Reg(T,\Pi) = \tilde{O}
    \rbr{ T^{1-a}\; L^{1-2a}\;
        \rbr{\gamma \log |\Pi|}^a },
    \text{where}\quad a = \tfrac{1}{2+d}.
\end{align}
This regret bound is a departure point for several results presented below.

\OMIT{ This regret bound is the best possible for the non-contextual case \citep{Bobby-nips04}. \refeq{eq:Lip-worst-case-regret} and its special cases are the only previously known results for Lipschitz contextual bandits with policy sets,\footnote{$T^{\frac{d+p+1}{d+p+2}}L^{\frac{p+1}{d+p+2}}$ regret for the
Lipschitz contextual bandit setting of~\citet{cesa2017algorithmic} (with
 $p$-dimensional context space and $d$-dimensional action space) can be obtained as a special case of \refeq{eq:Lip-worst-case-regret} via suitable discretization.}
and it is the point of departure for several results presented below.

Doubling dimension is lower-bounded by the covering dimension.\footnote{For a metric space of diamter $1$, the \emph{covering dimension} (with multiplier $\gamma$) is the smallest $d\geq 0$ such that the metric space can be covered with $\gamma\cdot r^{-d}$ balls of radius $r$, for each $r\in (0,1]$. This notion is used to express worst-case regret bounds for Lipschitz bandits
\citep{Bobby-nips04,kleinberg2013bandits,bubeck2011x,slivkins2014contextual}.
\label{fn:covDim}}
} 
\OMIT{ We use $\inner{\cdot}{\cdot}$ to denote the standard L_2(\base)$ inner product,
$\inner{f}{g} \defeq\int f(a)g(a)d\base(a)$
and at times we write $\inner{a}{f} \defeq f(a)$ for $a \in \Acal$.

Translating from $\Pi_K$ to $\Pi$ with Lipschitz losses is facilitated
by the following lemma, which generalizes~\pref{lem:smooth_to_lip}.
\begin{lemma}[Smooth to Lipschitz]
\label{lem:smooth_to_lipschitz}
If $\forall a \in \Acal$, $\supp(Ka) \subseteq \ball(a,h)$ and $\ell$ is
$L$-Lipschitz, then
\begin{align*}
\abr{\inner{Ka}{\ell} - \ell(a)} \leq Lh.
\end{align*}
\end{lemma}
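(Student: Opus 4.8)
The plan is to reduce this to a one-line application of the triangle inequality (or Jensen's inequality) together with the support condition, exactly mirroring the argument that presumably establishes \pref{lem:smooth_to_lip} in the $1$-dimensional case. The only structural fact needed is that $Ka$ is a genuine probability measure on $\Acal$, so that $\langle Ka, \one \rangle = 1$ and hence $\ell(a) = \langle Ka, \ell(a) \rangle$ (where $\ell(a)$ on the right is the constant function). This lets us write the difference as an average of pointwise differences.

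Concretely, I would first rewrite $\langle Ka, \ell \rangle - \ell(a) = \int \bigl( \ell(a') - \ell(a) \bigr)\, d(Ka)(a')$, using that the density of $Ka$ integrates to $1$ against $\base$. Next, I would pass absolute values inside the integral: $|\langle Ka, \ell \rangle - \ell(a)| \leq \int |\ell(a') - \ell(a)|\, d(Ka)(a')$. Then I would invoke $L$-Lipschitzness of $\ell$ to bound the integrand by $L\rho(a,a')$, obtaining $\int L\,\rho(a,a')\, d(Ka)(a')$. Finally, since $\mathrm{supp}(Ka) \subseteq \ball(a,h)$, every $a'$ in the support satisfies $\rho(a,a') \leq h$, so the integral is at most $Lh \int d(Ka)(a') = Lh$, which is the claim.

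There is essentially no obstacle here: the statement is a generalization of \pref{lem:smooth_to_lip} obtained by replacing ``uniform distribution on $\ball_h(a)$'' with ``arbitrary distribution supported in $\ball(a,h)$,'' and the uniform structure was never used in the bound — only the support. The one point worth a word of care is measurability/integrability: $\ell$ is bounded (into $[0,1]$) and $\rho(a,\cdot)$ is continuous, so all integrals against the probability measure $Ka$ are well-defined and finite, and the manipulations above are legitimate. No additional hypotheses on $K$ beyond the stated support condition are required, and in particular no bound on the density $\kappa$ enters.
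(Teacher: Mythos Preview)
Your proposal is correct and takes essentially the same approach as the paper: write $\langle Ka,\ell\rangle - \ell(a)$ as $\EE_{b\sim Ka}[\ell(b)-\ell(a)]$, move the absolute value inside, apply Lipschitzness to get $L\,\EE_{b\sim Ka}\rho(a,b)$, and bound by $Lh$ via the support condition. The paper's proof is a one-liner doing exactly this; your write-up is simply a bit more explicit about each step.
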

\begin{proof}
We have,
\[
\abr{\inner{Ka}{\ell} - \ell(a)} = \abr{\EE_{b \sim Ka}\ell(b) - \ell(a)} \leq \abr{L \EE_{b \sim Ka}\rho(a,b)} \leq Lh,
\]
which yields the result.
\end{proof}
} 

\subsubsection*{Stochastic setting: instance-dependent results}

\OMIT{ In the stochastic setting, recall that $\lambda_0(\pi)$ is unsmoothed
expected loss for policy $\pi$, so that $\Pi_{0,\eps}$ are the
$\eps$-optimal policies on the unsmoothed loss. As in the prequel,
the \emph{policy zooming coefficient} is
\begin{align*}
\psi_L(\eps_0) \defeq \sup_{\eps \geq \eps_0}M_0(12L\eps,\eps)/\eps.
\end{align*}
Recall also the \emph{zooming dimension}, which for zooming constant
$\gamma>0$, is
\begin{align*}
z \defeq \inf\cbr{ d' > 0: M_0(12L\eps,\eps) \leq \gamma \eps^{-d'}, \forall \eps \in (0,1)}.
\end{align*}
} 

We make several minor modifications
to \zpelip, as in~\pref{sec:results-smoothed}. We use rectangular kernel $K_{\df{h}}$ instead of the $\smooth$
operator.
We also set
the parameters as follows: \df{recall from~\pref{sec:results-lip} that instead of using a single smoothing parameter throughout, \zpelip uses bandwidth $h_m=2^{-m}$ at epoch $m$. In addition,} we set\dfc{$h_m=2^{-m}$}, $r_m=L2^{-m}$, $V_m :=
\EE_{x \sim \Dcal}\base\rbr{\bigcup_{\pi \in
    \Pi^{(m)}}\ball_{h_m}(\pi(x))}$ and $\tilde{n}_m := \frac{320
  \kappa_{h_m} V_m}{r_m^2}$.

\begin{theorem}[generalizes~\pref{thm:stoch_ring_zooming}]
\label{thm:zooming}
Consider the stochastic setting under \pref{assum:uniform}. Recall policy-zooming coefficient $\psi_L(\eps_0)$ and zooming dimension $z$ (with constant $\gamma)$, as defined in
\eqref{eq:policy-zooming-coeff} and \eqref{eq:zooming-dim-defn}. In this setting,
\zpelip with parameter $L$ achieves
\begin{align*}
\Reg(T,\Pi)
&\leq O\rbr{\inf_{\eps_0 > 0} TL\eps_0 +
\frac{\alpha\cdot \psi_L(\eps_0)}{L} \cdot \log(|\Pi|\df{T})\log(1/\eps_0)} \\
&\leq \tilde{O}
    \rbr{ T^{1-a}\; L^{1-2a}\;
        \rbr{\gamma \log |\Pi|}^a },\;
    \text{where}\quad a = \tfrac{1}{2+z}.
\end{align*}
\end{theorem}

\noindent It is easy to see that the zooming dimension is upper-bounded by the covering dimension.

\subsubsection*{Adversarial setting}

Our algorithm for smoothed regret in the adversarial setting  --- a suitably parameterized version of \expfour in \pref{thm:smooth_exp4_general} --- yields meaningful guarantees for the Lipschitz setting, and in fact \asedit{essentially recovers the optimal} regret rate in \eqref{eq:Lip-worst-case-regret}.

\begin{corollary}
\label{cor:smooth_exp4_lip}
Consider the adversarial setting. Let $K_h$, $h>0$ be a rectangular kernel, and let $\kappa_h$ be its kernel complexity. Consider \smoothexp (\pref{alg:smooth_exp4}) parametrized as in \pref{thm:smooth_exp4_general}: policy set $\Xi = \Pi_{K_h}$ and and learning rate
    $\eta = \sqrt{\frac{2\,\ln|\Xi|}{T\kappa}}$.
Then
\begin{align}
\Reg(T,\Pi) \leq TLh + O\rbr{T\kappa_h \log |\Pi|}. \label{eq:exp4_lip-kappa}
\end{align}
\asedit{We recover the regret bound \refeq{eq:Lip-worst-case-regret}  in terms of the covering dimension $d$, up to the multiplicative factor of $2^d$,} for suitable choice of bandwidth
    $h = \Theta( (\log |\Pi|/T)^{\frac{1}{d+2}}L^{\frac{-2}{d+2}})$.
\end{corollary}

\noindent This is an immediate consequence of~\pref{thm:smooth_exp4_general} and the following simple fact:

\begin{lemma}[generalizes \pref{lem:smooth_to_lip}]
\label{lem:smooth_to_lip_general}
Let $K_h$ be a rectangular kernel, and $f:\Acal \to [0,1]$ be an $L$-Lipschitz function. Then
$\abr{\EE_{a' \sim
    K_h(a)} f(a') - f(a)} \leq Lh$.
\end{lemma}

\OMIT{ Instantiating~\pref{cor:smooth_exp4_lip} in special cases, we get:
\begin{itemize}
\item $T^{2/3}(L\log|\Pi|)^{1/3}$-Lipschitz regret for the unit interval metric.
\item The optimal $T^{\frac{d+1}{d+2}}L^{\frac{d}{d+2}}$-Lipschitz
  regret for non-contextual problems with $d$-dimensional metric,
  matching prior
  results~\citep{kleinberg2013bandits,bubeck2011lipschitz}.
\item $T^{\frac{d+p+1}{d+p+2}}L^{\frac{p+1}{d+p+2}}$ for the
  Lipschitz contextual bandit setting of~\citet{cesa2017algorithmic} with
  $p$-dimensional context space and $d$-dimensional action space,
  which matches their result.
\end{itemize}
} 
\subsubsection*{Lipschitz-adaptivity}

We extend~\pref{thm:lipschitz_adaptive} to higher dimension, specifically to metric space $([0,1]^d,\; \ell_\infty)$.

\begin{theorem}
\label{thm:general_lipschitz_adaptivity}
Consider the adversarial setting, with action space $\Acal = [0,1]^d$, $d\in\NN$ and base metric $\rho = \ell_\infty$. \pref{thm:lipschitz_adaptive} extends, with exponents
    $a = \frac{\beta}{1+d\beta+\beta}$
and
    $b = \frac{d\beta}{1+d\beta}$.
\end{theorem}

\OMIT{ \begin{theorem}
\label{thm:general_lipschitz_adaptivity}
Fix $\beta \in [0,1]$ and $d \in \NN$ \df{and consider the adversarial setting}. For $\Acal =
[0,1]^d$, a parametrization of \corralexpf
guarantees
\begin{align*}
\Reg(T,\Pi) \leq \otil\rbr{T^{\frac{1+d\beta}{1+(d+1)\beta}}(\log |\Pi|)^{\frac{\beta}{1+(d+1)\beta}} L^{\frac{d\beta}{1+d\beta}} },
\end{align*}
when losses are $L$-Lipschitz. Crucially the algorithm does not need
to know $L$. In the non-contextual setting, a parametrization achieves
a regret of $\otil\rbr{ T^{\frac{1+d\beta}{1+(d+1)\beta}}
  L^{\frac{d\beta}{1+d\beta}} }$.  Moreover, \df{in the non-contextual
  stochastic setting, there exist positive constants $c$ and $T_0$,
  such that for any algorithm and any $T \geq T_0$, there exists a
  value $L > 0$ and an instance with $L$-Lipschitz losses, such that
\begin{align*}
\Reg(T,\Pi) \geq c \cdot T^{\frac{1+d\beta}{1+(d+1)\beta}} L^{\frac{d\beta}{1+d\beta}}.
\end{align*}}
\end{theorem}
}

\section{Analysis: instance-dependent regret bounds}
\label{sec:proofs-instance-dependent}

We prove both instance-dependent regret bounds: \pref{thm:smooth_pe_weak} for smoothed regret and~\pref{thm:zooming} for Lipschitz losses. In fact, we present a joint proof for both results.

\subsection{Auxiliary lemmas}

We start by stating two auxiliary lemmas whose proofs are deferred to the end of this section. Recall that the marginal distribution over
$\Xcal$, \df{denoted $\Dcal_X$,} is assumed to be known.

The first lemma provides a guarantee on the optimization
problem~\pref{eq:pe_optimization-body}. For a policy set $\Pi' \subset
\Pi$, bandwidth $h$ and context $x$, define $A(x; \Pi',h) \defeq
\bigcup_{\pi \in \Pi'} \ball(\pi(x),h) = \bigcup_{a \in \Pi'(x)}
\ball(a,h)$ which is a subset of the action space. Similarly, let
$V(\Pi',h) = \EE_{\df{x\sim\Dcal_X}} \base (A(x;\Pi',h))$. Finally, for a distribution
$Q \in \Delta(\Pi')$, bandwidth $h$, we define
\dfc{the }\df{its induced} action-selection \df{density}\dfc{distribution} as
\begin{align*}
q(a \mid x) \defeq \sum_{\pi \df{\in \Pi'}}Q(\pi)(K_h\pi(x))(a).
\end{align*}
\df{Note that this is the density over the action space of the
  action-selection distribution induced by $Q$ on context $x$.}

\begin{lemma}
\label{lem:pe_duality_app}
For any subset $\Pi' \subset \Pi$ with $|\Pi'| < \infty$, \df{any bandwidth $h > 0$,} and any
\df{data} distribution $\Dcal_X \dfc{\in\Delta(\Xcal)}$, the
program~\pref{eq:pe_optimization-body} is convex and we have
\begin{align*}
\min_{Q \in \Delta(\Pi')} \max_{\pi \in \Pi'} \EE_{x \sim \Dcal_X}\EE_{a \sim K_{\df{h}}\pi(x)}\sbr{\frac{1}{q(a \mid x)}} \leq V(\Pi',h).
\end{align*}
\end{lemma}
Note that $V(\Pi',h) \leq 1$, which yields a weaker, but more
interpretable bound.

The following lemma gives a uniform deviation bound on
$\hat{L}_m(\pi)$ and $\EE_{(x,\ell) \sim \Dcal}
\inner{K_{h_m}\pi(x)}{\ell}$ in epoch $m$.  Recall that in epoch $m$,
the estimator $\hat{L}_m(\pi)$ is the median of several base
estimators $\cbr{\hat{L}_m^i(\pi)}_{i=1}^I$, where $I = \delta_T =
5\lceil \log(|\Pi|\log_2 (T)/\delta) \rceil$ is the number of
batches. In comparison to using the naive empirical mean estimator,
this median-of-means estimator has the advantage that it avoids a
dependency on the range of the individual losses, therefore admitting
sharper concentration.

 \begin{lemma}[Concentration of median-of-means loss estimator]
 \label{lem:weak_concentration}
 Fix $\Pi' \subset \Pi$, $h \in (0,1)$, $\delta \in (0,1)$ and let $Q \in
 \Delta(\Pi')$ be the solution to~\pref{eq:pe_optimization-body}. Let $I = 5\lceil \log(|\Pi|/\delta) \rceil$, $\tilde{n}$ be an integer, and
 $\{x_j,a_j,\ell_j(a_j)\}_{j=1}^n$ be a dataset of $n = I \tilde{n}$ samples, where
 $(x_j,\ell_j) \sim \Dcal$ and $a_j \sim q(\cdot \mid x_j)$.
 Define
 \[ \hat{L}(\pi) = \median(\hat{L}^1(\pi), \ldots, \hat{L}^I(\pi)), \]
 where $\hat{L}^i(\pi) = \frac{1}{\tilde{n}} \sum_{j=(i-1)\tilde{n}+1}^{i\tilde{n}} \frac{K_h(\pi(x_j))(a_j)}{q(a_j \mid x_j)} \ell_j(a_j)$.
 Then
 with probability at least $1-\delta$, for
 all $\pi \in \Pi'$, we have
 \begin{align*}
 \abr{\lambda_h(\pi) - \hat{L}(\pi)} \leq \sqrt{\frac{80 \kappa_h V(\Pi',h)}{n}\log(e |\Pi|/\delta)}.
 \end{align*}
 \end{lemma}

\subsection{Proof of \pref{thm:smooth_pe_weak} and~\pref{thm:zooming}}

The proof proceeds inductively over the epochs and we will do both
proofs simultaneously. In the proof of~\pref{thm:smooth_pe_weak} we
use $L(\pi)\defeq \lambda_h(\pi)$, while
for~\pref{thm:zooming} we use $L(\pi) \defeq \lambda_0(\pi) = \EE\ell(\pi(x))$.
In both
cases $\pi^\star \defeq \argmin_{\pi \in \Pi}L(\pi)$. For both proofs
we use $L_m(\pi) \defeq \lambda_{h_m}(\pi)$, noting that
for~\pref{thm:smooth_pe_weak}, $L_m(\pi) = L(\pi)$.  Recall the
definitions of the ``radii'' $r_m$ which are either $2^{-m}$ or
$L2^{-m}$ depending on the theorem statement.  In epoch $m$ we prove
two things, inductively:
\begin{enumerate}
\item $\pi^\star \in \Pi_{m+1}$ (assuming inductively that $\pi^\star \in \Pi_m$).
\item For all $\pi \in \Pi_{m+1}$ we have $L(\pi) \leq L(\pi^\star) +
  12r_{m+1}$.
\end{enumerate}
Before proving these two claims, we first lower bound $n_m$ which
provides a bound on the number of epochs. Assuming $\pi^\star \in
\Pi_m$, which we will soon prove, we have
\begin{align*}
n_m \geq \frac{\kappa_{h_m}V_m}{r_m^2} \geq \frac{\kappa_{h_m} \EE_{\df{x\sim\Dcal_X}} \base(\ball(\pi^\star(x),h_m))}{r_m^2} \geq \frac{1}{r_m^2} = 2^{2m}
\end{align*}
The first inequality requires $\delta_T \geq 1$ (which follows since
$\delta \leq 1/e$) while the third uses the fact that
$\textrm{supp}(K_{h_m}(a)) \subset \ball(a,h_m)$ so that $\kappa_{h_m} \geq
\sup_{a} \frac{1}{\base(\ball(a,h_m))}$. Hence we know that there are
at most $m_T \defeq \log_2(T)$ epochs.
Applying~\pref{lem:weak_concentration} to all $m_T$ epochs
and taking a union
bound, we have
\begin{align*}
\forall m \in [m_T], \forall \pi \in \Pi_m: \abr{ L_m(\pi) - \hat{L}_m(\pi)} \leq \sqrt{\frac{80\kappa_{h_m}V_m\delta_T}{n_m}}.
\end{align*}

Here we are using the fact that $V_m = V(\Pi_m,h_m)$ where $V_m$ is
defined in the algorithm.  Plugging in the choices for $n_m \defeq
\frac{320\kappa_{h_m}V_m\delta_T}{r_m^2}$ the above inequality simplifies
to
\begin{align}
\forall m \in [m_T], \forall \pi \in \Pi_m: \abr{ L_m(\pi) - \hat{L}_m(\pi)} \leq r_m/2.
\label{eq:smoothed-loss-conc}
\end{align}

Let us now prove the two inductive claims under the event that these inequalities hold, which occurs with probability \df{at least} $1-\delta$. For the base case, since
$\Pi_1 \gets \Pi$ we clearly have $\pi^\star \in \Pi$. We also always
have $L(\pi) \leq L(\pi^\star) + 2r_1$ since the losses are bounded in
$[0,1]$. For the inductive step, first we observe that for~\pref{thm:smooth_pe_weak},
$L(\pi) = L_m(\pi)$, and for~\pref{thm:zooming},
$|L(\pi) - L_m(\pi)| \leq L h_m = r_m$. In conjunction with~\pref{eq:smoothed-loss-conc}, in both cases, we have
\begin{align}
\forall m \in [m_T], \forall \pi \in \Pi_m: \abr{ L(\pi) - \hat{L}_m} \leq 3r_m/2.
\label{eqn:objective-loss-conc}
\end{align}
By the standard analysis of empirical
risk minimization, for the first claim,
\begin{align*}
\hat{L}_m(\pi^\star)
&\leq L(\pi^\star) + 3r_m/2
= \min_{\pi \in \Pi_m}L(\pi) + 3r_m/2
\leq \min_{\pi \in \Pi_m} \hat{L}_m(\pi) + 3r_m.
\end{align*}
which verifies that $\pi^\star \in \Pi_{m+1}$.
For the second claim, for both~\pref{thm:smooth_pe_weak} and~\pref{thm:zooming}, we have
for all $\pi$ in $\Pi_{m+1}$,
\begin{align*}
L(\pi) \leq \hat{L}_m(\pi) + 3r_m/2 \leq \min_{\pi' \in \Pi_m}\hat{L}_m(\pi') + 9r_m/2 \leq L(\pi^\star) + 6r_m.
\end{align*}
This proves the second claim since $r_m = 2r_{m+1}$.

For the final regret bound, define $\hat{m}_T$ to be the actual number
of epochs. For each $m \in \NN$, define $\hat{n}_m$ to be the actual
number of rounds in each epoch, formally defined as follows: (1) for
$m < \hat{m}_T$, $\hat{n}_m \defeq n_m$, (2) for $m > \hat{m}_T$,
$\hat{n}_m \defeq 0$, and (3) $\hat{n}_{\hat{m}_T} = T -\sum_{m <
  \hat{m}_T} \hat{n}_m$. We have that $\hat{n}_m \leq n_m$ for all $m$
and that $\sum_{m=1}^{\infty}\hat{n}_m= T$. Then, in the $1-\delta$
good event,
we can bound the regret of the algorithm as
\begin{align*}
\Reg \leq \sum_{m=1}^{\infty} \hat{n}_m \cdot 12r_m, \dfc{= \sum_{m=1}^{\infty} 12\hat{n}_mr_m,}
\end{align*}
where we have used the fact that $\sum_{m=1}^{\infty} \hat{n}_m = T$.

We optimize the bound as follows: For any $\epsilon_0 > 0$, we
first truncate the sum at epoch $m_{\epsilon_0} \defeq \lceil \log
\frac{1}{\epsilon_0} \rceil$.  Using the fact that $r_m \leq r_{m_{\epsilon_0}}$
for $m \geq m_{\epsilon_0}$, we can bound the regret in the later
epochs simply by $T \epsilon_0$. For the earlier epochs we substitute
the choice of $\hat{n}_m$. This gives
\begin{align*}
\sum_{m=1}^{\infty} 12\hat{n}_mr_m \leq 12 \min_{\epsilon_0 > 0} \rbr{T \epsilon_0 + 320 \sum_{m \leq m_{\epsilon_0} - 1} \frac{\kappa_{h_m}V_m\delta_T}{r_m}}.
\end{align*}
To simplify further, by our inductive hypothesis we know that
\begin{align*}
V_m \leq V(\Pi_m,h_m) = \EE_{\df{x\sim\Dcal_X}}\base(A(x;\Pi_m,h_m)) \leq \EE_{\df{x\sim\Dcal_X}}\Ncal_{h_m}(\Pi_m(x)) \cdot\sup_{a}\base(\ball(a,2h_m)).
\end{align*}
The final inequality is based on the fact that we can always cover
$A(x;\Pi_m,h_m)$ by a union of balls of radius $2h_m$ with
centers on a $h_m$-covering of $\Pi_m$, along with the fact that a maximum \df{(therefore, maximal)} $\delta$-packing is a $\delta$-covering. On the other hand we
have $\kappa_{h_m} \leq \sup_{a} \frac{1}{\base(\ball(a,h_m))}$, so that
under~\pref{assum:uniform} we have
\begin{align*}
\kappa_{h_m}V_m \leq \alpha \cdot \EE_{\df{x\sim\Dcal_X}}\Ncal_{h_m}(\Pi_m(x))
\end{align*}

Set $\Scal \defeq \{2^{-i}: i \in \NN\}$.
For~\pref{thm:smooth_pe_weak},
using the definition
of $M_h(\epsilon,\delta)$, and the fact that $\Pi_m \subseteq \Pi_{h,12r_m}$,
we have $\kappa_{h_m}V_m \leq \alpha \EE_{\df{x\sim\Dcal_X}}\Ncal_{h_m}(\Pi_m(x)) \leq \alpha M_h(12r_m, \df{h})$ .
Therefore, the bounds simplify to
\begin{align*}
\Reg(T,\Pi_h) &\leq 12 \min_{\epsilon_0 > 0} \rbr{  T\epsilon_0 + 320 \alpha \cdot \sum_{\epsilon \in \Scal, \epsilon \geq 2\epsilon_0} \frac{M_h(12\epsilon, \df{h}) \delta_T}{\epsilon} }\\
&\leq 12 \min_{\epsilon_0 > 0} \rbr{  T\epsilon_0 + 320 \alpha \cdot \theta_h(\epsilon_0) \cdot \log(|\Pi| \log_2(T)/\delta) \cdot \log_2(1/\epsilon_0) },
\end{align*}
where in the second inequality, we use the definition of $\theta_h(\epsilon)$, and the fact that there are $m_{\epsilon_0} - 1 \leq \log_2(1/\epsilon_0)$ summands in the second term.

Likewise, for~\pref{thm:zooming}, we have
\begin{align*}
\Reg(T,\Pi) &\leq  12 \min_{\epsilon_0 > 0} \rbr{  TL\epsilon_0 + 320 \alpha \sum_{\epsilon \in \Scal, \epsilon \geq 2\epsilon_0} \frac{M_0(12L\epsilon, \epsilon) \delta_T}{L\epsilon} }\\
&\leq 12 \min_{\epsilon_0 > 0} \rbr{ T \df{L} \epsilon_0 + 320 \alpha \cdot \psi_L(\epsilon_0)/L \cdot \log(|\Pi| \log_2(T)/\delta) \cdot \log_2(1/\epsilon_0)}
\end{align*}

Both bounds are conditional on the good event, which happens
with probability $1-\delta$. In the bad event, the expected regret is
at most $T$. Setting $\delta = 1/T$, the theorems follow.

\subsection{Proofs for the lemmata}

\ifthenelse{\equal{\version}{arxiv}}{\begin{proof}[Proof of~\pref{lem:pe_duality_app}]}{\begin{proofof}{\pref{lem:pe_duality_app}}}
The proof follows that of Lemma 1
of~\citet{dudik2011efficient}.
We introduce the following notation: for a distribution $P$ over a set of policies $\Pi'$, bandwidth $h$,
denote by its induced action-selection \df{density}\dfc{ distribution} as
\[ p(a \mid x) \defeq \sum_{\pi \in \Pi} P(\pi) (K_h\pi(x))(a). \]
Likewise, for a distribution $Q$ over a set of policies $\Pi'$, define
\[ q(a \mid x) \defeq \sum_{\pi \in \Pi} Q(\pi) (K_h\pi(x))(a). \]
Define $\one_{|\Pi'|}$ to be the $|\Pi'|$-dimensional vector that takes value $1$ on all its
entries; in addition, for policy $\pi$ in $\Pi'$, define $e_\pi$ as the $|\Pi'|$-dimensional vector that takes
value $1$ on the entry that corresponds to policy $\pi$ and takes value $0$ everywhere else.

In addition, for $Q$ and $P$ in $\Delta(\Pi')$, define
\[
f(Q,P) :=
\EE_{x \sim \Dcal_X} \int \frac{p(a \mid x)}{q(a \mid x)} \one(a \in A(x ; \Pi', h)) d\base(a).
\]
It suffices to show that
$\min_{Q \in \Delta(\Pi')} \max_{\pi \in \Pi'} f(Q,e_\pi) \leq V(\Pi', h)$,
as for any $\pi$ in $\Pi'$,
\begin{eqnarray*}
f(Q,e_\pi)
&=& \EE_{x \sim \Dcal_X} \int \frac{K_h\pi(x) (a)}{q(a \mid x)} \one(a \in A(x ; \Pi', h)) d\base(a) \\
&=& \EE_{x \sim \Dcal_X} \int \frac{K_h\pi(x) (a)}{q(a \mid x)} d\base(a) \\
&=& \EE_{x \sim \Dcal_X} \EE_{a \sim K_h \pi(x)} \frac{1}{q(a \mid x)},
\end{eqnarray*}
where the first equality is from the fact that for all $a$, if $K_h\pi(x) (a) \neq 0$ then
$a \notin A(x ; \Pi', h)$.

Define $\QQ \defeq \cbr{Q \in \Delta(\Pi'):  \max_{\pi \in \Pi'}
f(Q,e_\pi) < \infty}$.
Observe that $\QQ$ is a convex set. $\QQ$ is nonempty, as any vector $Q$
such that $Q_\pi > 0$ for all $\pi$ in $\Pi'$ (e.g. the uniform distribution over
$\Pi'$, $\frac{1}{|\Pi'|} \one_{|\Pi'|}$) is in $\QQ$. With this notation,
\[
\min_{Q \in \Delta(\Pi')} \max_{\pi \in \Pi'} f(Q,e_\pi)
=
\min_{Q \in \QQ} \max_{\pi \in \Pi'} f(Q,e_\pi).
\]

Now, note that
\[
\df{\min_{Q \in \QQ}} \max_{\pi \in \Pi'} f(Q,e_\pi)
=
\df{\min_{Q \in \QQ}} \max_{P \in \Delta(\Pi')} \EE_{\pi \sim P} f(Q,e_\pi)
=
\df{\min_{Q \in \QQ}} \max_{P \in \Delta(\Pi')} f(Q, P).
\]
where the first equality uses the fact that $f(Q, \cdot)$ is
linear.
Now, as $\Delta(\Pi')$ is compact and convex,
$\QQ$ is convex,
$f(\cdot, P)$ is convex and continuous and
$f(Q, \cdot)$ is concave and continuous, we may apply
Sion's minimax theorem~\cite[][Corollary 3.3]{sion1958general},
to obtain that the above is equal to
\[
\df{\max_{P \in \Delta(\Pi')}} \df{\min_{Q \in \QQ}} f(Q, P)
\]
Now, given any $P$ in $\Delta(\Pi')$, consider
$P_\epsilon = (1-\epsilon) P + \frac{\epsilon}{|\Pi'|} \one_{|\Pi'|}$.
We have that $P_\epsilon$ is in $\QQ$. Moreover,
\begin{eqnarray*}
f(P_\epsilon, P)
&\leq&
\EE_{\pi \sim P} \EE_{x \sim \Dcal_X} \int \frac{p(a \mid x)}{(1-\epsilon)p(a \mid x)}
\one(a \in A(x ; \Pi', h)) d\base(a) \\
&=& \frac{1}{1-\epsilon} \EE_{x \sim \Dcal_X} \base(A(x ; \Pi', h)) = \frac{1}{1-\epsilon} V(\Pi', h).
\end{eqnarray*}
Letting $\epsilon \to 0$, this implies that for any $P$ in $\Delta(\Pi')$,
$\inf_{Q \in \QQ} f(Q, P) \leq V(\Pi', h)$.
Therefore,
\[ \df{\max_{P \in \Delta(\Pi')}} \df{\min_{Q \in \QQ}} f(Q, P) \leq V(\Pi', h). \]
The lemma follows.
\ifthenelse{\equal{\version}{arxiv}}{\end{proof}}{\end{proofof}}

\ifthenelse{\equal{\version}{arxiv}}{\begin{proof}[Proof of~\pref{lem:weak_concentration}]}{\begin{proofof}{\pref{lem:weak_concentration}}}
First, as we have seen, $\EE
\hat{\ell}_i(\pi(x_i)) = \lambda_h(\pi)$. Moreover,
\begin{align*}
\Var\rbr{\hat{\ell}_i(\pi(x_i))} &\leq \EE\sbr{\hat{\ell}_i(\pi(x_i))^2} = \EE_{(x,\ell)\sim\Dcal}\sbr{\int \frac{(K_h\pi(x))^2(a)\ell(a)^2}{q(a|x)}d\base}\\
&\leq \kappa_h V(\Pi',h) \leq \kappa_h V(\Pi',h).
\end{align*}
where the penultimate inequality uses the fact that $Q$ is the
solution to~\pref{eq:pe_optimization-body}, so it satisfies the guarantee
in~\pref{lem:pe_duality_app}.
Therefore, using~\pref{lem:median-of-means} below, we have that for every $\pi \in \Pi'$, with
probability at least $1-\frac{\delta}{|\Pi|}$, the following holds:
\begin{align*}
\abr{ \bar{L}(\pi) - \hat{L}(\pi)} \leq \sqrt{\frac{80\kappa_hV(\Pi',h)}{n}\log(e|\Pi|/\delta)}.
\end{align*}
The lemma is concluded by taking a union bound over all $\pi$ in $\Pi'$.
\ifthenelse{\equal{\version}{arxiv}}{\end{proof}}{\end{proofof}}

\section{Analysis: smoothness-adaptive guarantees}
\label{sec:proofs-corral}

\asedit{We now turn to smoothness-adaptive guarantees:
\pref{thm:general_adaptivity} and \pref{thm:refined_adaptivity} for  uniformly-smoothed regret bounds, and \pref{thm:general_lipschitz_adaptivity} for Lipschitz-adaptive regret bounds. We prove these results via a joint exposition, building on \corral algorithm from \citet{agarwal2016corralling}.}

\subsection{Stability of \smoothexp}
\label{sec:EXP4-stability}

\asedit{We start with a result on the \emph{stability} of \smoothexp.} We consider a slightly modified protocol.  The learner is now presented with randomized loss
functions $\ell_t$ which are generated by importance weighting an
original loss function $\bar{\ell}_t$ with some probability $p_t$ set
by the adversary. Formally $\ell_t = Q_t\bar{\ell}_t/p_t$ where $Q_t
\sim \textrm{Ber}(p_t)$ at each round $t$. Note that here, the losses
presented to the learner are not guaranteed to be bounded, but we do
have variance information, via $p_t$. The original losses
$\bar{\ell}_t$ are bounded in $[0,1]$. Note further that $p_t$ is
revealed at the beginning of round $t$.

In this setup,~\cite{agarwal2016corralling} define the following
notion of stability.
\begin{definition}[See~\citet{agarwal2016corralling}, Definitions 3 and 14]
\label{def:stability}
An algorithm with policy class $\Xi$ is called \emph{$(\beta,
  R(T))$-stable}, if in the above protocol it achieves
\begin{align}
\EE \sum_{t=1}^T \bar{\ell}_t(a_t) - \min_{\xi \in \Xi} \EE \sum_{t=1}^T \inner{\xi(x_t)}{\bar{\ell}_t}
\leq
\EE[\rho]^\beta \cdot R(T),
\label{eq:stability}
\end{align}
where $\rho \defeq \max_{t \in [T]} \frac{1}{p_t}$.
\end{definition}

The definition here is slightly different than the one
in~\citet[][Definition 3]{agarwal2016corralling}, in that the right
hand side has the term $\EE[\rho]^\beta$ instead of
$\EE[\rho^\beta]$. This has no bearing on the analysis of \corral, but
is important for our application, as we will see.

\citet{agarwal2016corralling} shows that \expf is $(\nicefrac{1}{2},
\sqrt{K T \log|\Pi|})$-stable in the discrete action setting, where
$K$ is the number of actions.  We provide a similar result here,
replacing $K$ with $\kappa$ and establishing stability whenever the
first parameter is in $[0,\nicefrac{1}{2}]$.

\begin{theorem}
\label{thm:strong_stability}
\pref{alg:smooth_exp4} is $\rbr{\frac{\beta}{1+\beta},
  O\rbr{T^{\frac{1}{1+\beta}}(\kappa \log
    |\Xi|)^{\frac{\beta}{1+\beta}}}}$-stable, for each $\beta \in
     [0,1]$.
\end{theorem}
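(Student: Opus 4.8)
The plan is to start from the generic \expfour regret bound in \pref{lem:gen_exp4} and adapt it to \pref{proto:stability_protocol}, where the losses presented to the learner are importance-weighted versions $\ell_t = \frac{Q_t}{p_t}\bar\ell_t$ with $Q_t \sim \ber(p_t)$. The key point is that \pref{eqn:exp4_generic} in the proof of \pref{lem:gen_exp4} was proved \emph{before} invoking $\|\ell_t\|_\infty \le 1$; it only used unbiasedness of the importance-weighted estimator $\hat\ell_t$ and the variance bound $\EE_{a_t,\xi}\hat\ell_t(\xi)^2 \le \kappa\|\ell_t\|_2^2$. Taking a further expectation over $Q_t$ and using $\EE[\ell_t(a)^2] = \frac{1}{p_t}\bar\ell_t(a)^2 \le \frac{\rho}{p_t}\cdot$ (with $\rho = \max_t 1/p_t$), one gets, for the modified \expfour run with a learning rate $\eta$ to be chosen,
\begin{align*}
\EE\sum_{t=1}^T \bar\ell_t(a_t) - \min_{\xi\in\Xi}\EE\sum_{t=1}^T\inner{\xi(x_t)}{\bar\ell_t} \;\le\; \frac{\eta\kappa}{2}\,\EE[\rho]\cdot T + \frac{\log|\Xi|}{\eta},
\end{align*}
using that $\EE_{a_t}\ell_t(a)$ in expectation over $Q_t$ equals $\inner{\xi(x_t)}{\bar\ell_t}$, and that $\sum_t \EE[\ell_t(a_t)]$ in expectation over $Q_t$ is $\sum_t \bar\ell_t(a_t)$ up to Jensen as before. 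This is exactly the $\beta=1$ (i.e.\ $\frac{\beta}{1+\beta}=\frac12$) case once we optimize $\eta$.

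Next I would handle general $\beta\in[0,1]$. The obstacle is that the naive optimization of $\eta$ gives $\EE[\rho]^{1/2}$ on the right-hand side, but we want $\EE[\rho]^{\beta/(1+\beta)}$, a smaller power. The trick is a standard ``clipping/capping'' of the learning rate together with interpolating the two trivial bounds: on one hand the $\sqrt{}$-bound above, on the other hand the absolute bound $\Reg \le T$ (losses in $[0,1]$). Concretely, I would set $\eta$ so that the regret bound reads $O\rbr{\sqrt{\kappa T\log|\Xi|\cdot \EE[\rho]}}$, and then use the elementary inequality $\min\{\sqrt{\kappa T\log|\Xi|\cdot c}\,,\,T\} \le (\kappa\log|\Xi|)^{\beta/(1+\beta)} T^{1/(1+\beta)} c^{\beta/(1+\beta)}$ valid for all $c\ge 1$ and $\beta\in[0,1]$ (this is the same weakening used for the main adaptivity theorems in the paper, e.g.\ the remark that $\min\{\sqrt{T/h},T\}\le T^{1/(1+\beta)}h^{-\beta/(1+\beta)}$). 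Applying this with $c = \EE[\rho]$ — and crucially noting that the $\min$ with $T$ can be taken \emph{after} the expectation over $\rho$ since $\Reg \le T$ is a deterministic bound — pulls the $\beta/(1+\beta)$ power outside, yielding $\EE[\rho]^{\beta/(1+\beta)}\cdot O\rbr{T^{1/(1+\beta)}(\kappa\log|\Xi|)^{\beta/(1+\beta)}}$, which is exactly the claimed $\rbr{\frac{\beta}{1+\beta}, O\rbr{T^{1/(1+\beta)}(\kappa\log|\Xi|)^{\beta/(1+\beta)}}}$-stability. Note this requires the variant of \expfour to either know a bound on $\EE[\rho]$ to tune $\eta$, or — as the paper hints — to be a doubling/adaptive version of \expfour that need not know the variance scale a priori; I would state which variant is used and defer its (routine) analysis.

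The main obstacle I anticipate is the precise bookkeeping of expectations: $\rho = \max_t 1/p_t$ is a random variable (the $p_t$ are adversarial, possibly adaptive), and we must be careful that the variance bound $\EE_{a_t,\xi}\hat\ell_t(\xi)^2 \le \kappa\|\ell_t\|_2^2$ is applied conditionally on the history (including $p_t$ and $Q_t$), then $\|\ell_t\|_2^2 = \frac{Q_t}{p_t^2}\|\bar\ell_t\|_2^2 \le \frac{Q_t}{p_t}\cdot\frac{1}{p_t}$, whose expectation over $Q_t$ is $\frac{1}{p_t}\|\bar\ell_t\|_2^2 \le \rho$. So the per-round variance contribution is at most $\kappa\,\EE[\rho\mid\text{history}]$, and summing and using $\|\bar\ell_t\|_\infty\le 1$ gives the $\kappa T\EE[\rho]$ term cleanly. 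The only subtlety beyond this is ensuring the $\min\{\cdot,T\}$ step is legitimate — it is, because \pref{alg:smooth_exp4} trivially never incurs more than $T$ total $\bar\ell$-regret — and matching the ``$\EE[\rho]^\beta$ outside'' convention flagged in \pref{def:stability}, which is precisely what the interpolation delivers. Everything else (the \textsc{Hedge} inequality, unbiasedness of importance weighting, Jensen) is already in the proof of \pref{lem:gen_exp4} and carries over verbatim.
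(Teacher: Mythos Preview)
Your proposal is correct and follows essentially the same approach as the paper: control the per-round variance via $\EE\|\ell_t\|_\infty^2 \le 1/p_t \le \rho$, obtain the $\sqrt{\kappa T\log|\Xi|\cdot\EE[\rho]}$ bound, and then interpolate with the trivial $T$ bound via $\min(A,B)\le A^\gamma B^{1-\gamma}$ to extract the $\EE[\rho]^{\beta/(1+\beta)}$ factor. The one piece you defer---tuning $\eta$ without knowing $\EE[\rho]$---is exactly what the paper fills in by invoking the doubling-trick restart variant (guess $\hat\rho$, double and restart whenever $1/p_t>\hat\rho$), citing Theorem~15 of \citet{agarwal2016corralling} to pass from the ``weak'' (known-$\hat\rho$) bound \eqref{eq:weak_stability} to the ``strong'' bound with $\EE[\rho]^{1/2}$; this is the ``variant of \expfour'' in the theorem statement.
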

\begin{proof}For this proof only, we use $\xi(x_t)$ to denote the
\df{density for the action distribution of expert $\xi$ on context $x_t$, with respect to $\base$}. Thus the expected loss
for expert $\xi$ on round $t$ is $\inner{\xi(x_t)}{\ell_t}$.

We first show a weaker form of stability. Suppose that $\hat{\rho}
\geq \max_{t \in [T]} \nicefrac{1}{p_t}$ is provided to the algorithm
ahead of time. Then following the analysis for \expfour, we
have
\begin{align*}
\EE\sum_{t=1}^T\ell_t(a_t) - \min_{\xi \in \Xi} \EE \sum_{t=1}^T\inner{\xi(x_t)}{\ell_t} \leq \EE \frac{\eta \kappa}{2}\sum_{t=1}^T \|\ell_t\|_{\infty}^2 + \frac{\log |\Xi|}{\eta}.
\end{align*}
The key observation is that,
\begin{align*}
\EE\sum_{t=1}^T \|\ell_t\|_{\infty}^2 \leq \EE\sum_{t=1}^T \frac{Q_t}{p_t^2} = \sum_{t=1}^T \EE\nicefrac{1}{p_t} \leq T\hat{\rho}
\end{align*}
Therefore, with the choice of $\eta = \sqrt{\frac{2 \log
    |\Xi|}{T\kappa \hat{\rho}}}$, and using the fact that
the conditional expectation of $\ell_t$ is $\bar{\ell}_t$, we get
\begin{align}
\EE\sum_{t=1}^T \bar{\ell}_t(a_t) - \min_{\xi \in \Xi}\EE\sum_{t=1}^T \inner{\xi(x_t)}{\bar{\ell}_t} \leq \sqrt{2\kappa T \log |\Xi| \cdot \hat{\rho}}.
\label{eq:weak_stability}
\end{align}
This proves a weaker version of stability, where a bound on $\rho$ is
specified in advance. The stronger version is based on the ``doubling
trick'' argument in~\citet[Theorem 15]{agarwal2016corralling}.  We run
\expfour with a guess for $\hat{\rho}$ and if we experience a round
$t$ where $\nicefrac{1}{p_t} > \hat{\rho}$, we double our guess and
restart the algorithm, always with learning rate $\eta = \sqrt{\frac{2
    \log |\Xi|}{T\kappa \hat{\rho}}}$. In their Theorem 15, they prove
that if an algorithm is weakly stable in the sense
of~\pref{eq:weak_stability} then, with restarts, it is strongly stable
according to~\pref{def:stability}. In our setting, their result
reveals that the restarting variant of \expfour guarantees
\begin{align*}
\EE\sum_{t=1}^T \bar{\ell}_t(a_t) - \min_{\xi \in \Xi}\EE\sum_{t=1}^T \inner{\xi(x_t)}{\bar{\ell}_t} \leq \frac{\sqrt{2}}{\sqrt{2}-1}\cdot \EE[\rho]^{\frac{1}{2}} \cdot \sqrt{2\kappa T \log |\Xi|}
\end{align*}
To obtain a stability guarantee for every $\beta$, since the regret is trivially at most $T$, we obtain
\begin{align*}
\EE\sum_{t=1}^T \bar{\ell}_t(a_t) - \min_{\xi \in \Xi}\EE\sum_{t=1}^T \inner{\xi(x_t)}{\bar{\ell}_t} \leq \min\rbr{T, c \EE[\rho]^{\frac{1}{2}}\sqrt{\kappa T \log |\Xi|}}
\leq c T^{\frac{1}{1+\beta}} (\EE[\rho] \kappa \log |\Xi|)^{\frac{\beta}{1+\beta}}.
\end{align*}
where $c> 0$ is a universal constant. The second inequality is from
the simple fact that $\min\rbr{A,B} \leq A^{\gamma}B^{(1-\gamma)}$ for
$A,B > 0$, $\gamma \in [0,1]$.
\end{proof}

\subsection{\corralexpf and its analysis}
\label{sec:corralexpf-analysis}

\df{We first provide a formal description of \corralexpf in 
the notation of abstract smoothing kernels.} Given
a family of smoothing kernels $\Kcal= \cbr{K_1,\ldots,K_M}$, we bucket the kernels according
to their kernel complexity $\kappa$, $\Kcal_b = \cbr{i \in [M]: \lceil \log
  \kappa_{K_i} \rceil = b}$ for each $b \in \NN$, and we initialize
one instance of \expfour with restarting for each bucket. Then we run
\corral over these instances. We call $B = \cbr{\lceil \log
  \kappa_{K_i} \rceil: i \in \cbr{1,\ldots,M}}$
  the set of ``active'' indices.

Define $r \defeq \frac{\max_{K \in \Kcal} \kappa_K}{\min_{K \in \Kcal}
  \kappa_K}$ and $\kappa_\star \defeq \min_{K \in \Kcal} \kappa_K$.
Observe that $B \leq \min\cbr{M, \log r + 1}$. We have the following
guarantee for \corralexpf.

\begin{lemma}
\label{lem:corral-kernel}
Suppose \corralexpf is run with learning rate $\eta$
and horizon $T$. Then, for all $\beta \in [0, 1]$, it has the
following regret guarantee simultaneously for all kernels $K$ in $\Kcal$:
\[
\Reg(T,\Pi_{K})
\leq
\otil\rbr{
\frac{\min\cbr{M, \log r}}{\eta} + T \eta + T \rbr{\eta \ln (|\Pi|M) \kappa_K}^{\beta}
}.
\]
\end{lemma}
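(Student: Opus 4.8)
The plan is to combine the known regret guarantee of \corral~\citep{agarwal2016corralling} with the stability guarantee for our \expfour variant (\pref{thm:strong_stability}), and then balance the resulting terms by Young's inequality. First I would fix an arbitrary kernel $K \in \Kcal$, set $b \defeq \lceil \log \kappa_K \rceil$ so that $K \in \Kcal_b$, and record two facts about this bucket: every $K' \in \Kcal_b$ has $\lceil \log\kappa_{K'}\rceil = b$, so all such $\kappa_{K'}$ lie within a factor $2$ of each other and in particular $\kappa_{\Xi_b} \defeq \max_{K' \in \Kcal_b}\kappa_{K'} \leq 2\kappa_K$; and $\cbr{K\pi(x):\pi\in\Pi}\subseteq \Xi_b$ with $|\Xi_b|\leq |\Pi|\cdot|\Kcal_b| \leq |\Pi| M$. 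The second fact makes the comparator of sub-algorithm $\alg_b$ at least as strong as $T\cdot\bench(\Pi_K)$, so it suffices to bound \corral's regret against $\alg_b$'s comparator.

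Next I would invoke the \corral guarantee with learning rate $\eta$ and $N \defeq |B| \leq \min\cbr{M,\log r + 1}$ sub-algorithms: its regret against $\alg_b$'s comparator is at most the master regret $\otil(N/\eta) + O(\eta T)$, minus a term $\tfrac{\EE[\rho_b]}{c\eta\ln T}$ (with $\rho_b = \max_{t}1/p_{t,b}$ the inverse minimum probability assigned to $\alg_b$ and $c$ universal), plus the regret of $\alg_b$ inside the \corral environment (\pref{proto:stability_protocol}). By \pref{thm:strong_stability}, $\alg_b$ is $\rbr{\tfrac{\beta}{1+\beta}, R_b(T)}$-stable with $R_b(T) = O\rbr{T^{1/(1+\beta)}(\kappa_{\Xi_b}\log|\Xi_b|)^{\beta/(1+\beta)}} = O\rbr{T^{1/(1+\beta)}(\kappa_K\log(|\Pi|M))^{\beta/(1+\beta)}}$ after absorbing the factor $\kappa_{\Xi_b}\leq 2\kappa_K$; this \emph{outside-the-expectation} form of stability (as defined in \pref{def:stability}) bounds $\alg_b$'s environment regret by $\EE[\rho_b]^{\beta/(1+\beta)}R_b(T)$. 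Here the key point is that \corral requires exactly this shape so that the stability term can be cancelled against the negative $\rho_b$ term.

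Finally I would apply Young's inequality $xy \leq x^p/p + y^q/q$ with $p = \tfrac{1+\beta}{\beta}$, $q = 1+\beta$ to $\EE[\rho_b]^{\beta/(1+\beta)}R_b(T) = \rbr{\tfrac{\EE[\rho_b]}{c\eta\ln T}}^{1/p}\rbr{R_b(T)(c\eta\ln T)^{\beta/(1+\beta)}}$, giving $\EE[\rho_b]^{\beta/(1+\beta)}R_b(T) \leq \tfrac{\EE[\rho_b]}{c\eta\ln T} + R_b(T)^{1+\beta}(c\eta\ln T)^{\beta}$. The first term is absorbed by the negative term from the \corral bound, and since $R_b(T)^{1+\beta} = O\rbr{T(\kappa_K\log(|\Pi|M))^{\beta}}$ the second term is $\otil\rbr{T(\eta\kappa_K\ln(|\Pi|M))^{\beta}}$. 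Together with $N \leq \min\cbr{M,\log r + 1} = \otil(\min\cbr{M,\log r})$ this yields the claimed bound, simultaneously for all $K\in\Kcal$ since $K$ was arbitrary. The main obstacle I expect is bookkeeping around the \corral-versus-stability interface: making sure the negative $\Omega(\EE[\rho_b]/(\eta\ln T))$ term and the $\EE[\rho_b]^{\beta/(1+\beta)}$ dependence are compatible (which is why \pref{def:stability} places the expectation outside the power), and checking that the bucketing keeps $\kappa$ within a constant factor so the per-bucket constant can be rewritten in terms of the target $\kappa_K$ — the Young-inequality balancing and the reduction to $\alg_b$'s benchmark are then routine.
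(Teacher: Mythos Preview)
Your proposal is correct and follows essentially the same approach as the paper: set up the bucketing so that $\kappa_{\Xi_b}$ is within a constant of $\kappa_K$ and $|\Xi_b|\leq |\Pi|M$, invoke the \corral guarantee (Theorem~4 of \citet{agarwal2016corralling}) together with the stability of \expfour from \pref{thm:strong_stability}, and then trade off the positive $\EE[\rho_b]^{\beta/(1+\beta)}R_b(T)$ term against the negative $-\EE[\rho_b]/(\eta\ln T)$ term. The paper phrases this last step as ``optimizing over $\EE[\rho_b]$'' (i.e., taking the supremum via calculus), while you carry it out explicitly with Young's inequality; the two computations are equivalent and yield the same $\otil\rbr{T(\eta\kappa_K\ln(|\Pi|M))^\beta}$ term.
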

\begin{proof}
  This is almost a direct consequence of~\citet[Theorem
    4]{agarwal2016corralling}.  By the definition of $\Kcal_b$ and
  $\Xi_b$, $\kappa_b \defeq \max_{\xi \in \Xi_b} \max_{a,x} \xi(a|x)
  \leq 2^b$.  In addition, $|\Xi_b| \leq |\Pi| \cdot M$. Since for all
  $K \in \Kcal_b$ we have $\lceil \log\kappa_K \rceil = b$, therefore
  $\kappa_K \in (2^{b-1},2^b]$. By
    applying~\pref{thm:strong_stability} we see that \expfour with
    restarting has the stability guarantee when measuring regret
    against $\bench(\Pi_{K_i})$ for each $K_i \in \Kcal_b$.

    Now, by Theorem 4 of~\citep{agarwal2016corralling}, \corral ensures
\begin{align*}
\forall b \in [B], \forall K \in \Kcal_B: \Reg(T,\Pi_{K}) \leq \otil\rbr{ \frac{B}{\eta} + T\eta - \frac{\EE[\rho_b]}{\eta \log T} + T^{\frac{1}{1+\beta}}\rbr{\EE[\rho_b] \kappa_K \log (|\Pi| M)}^{\frac{\beta}{1+\beta}}}
\end{align*}
Optimizing over $\EE[\rho_b]$ gives
\begin{align*}
\forall K \in \Kcal: \Reg(T,\Pi_{K}) \leq \otil\rbr{ \frac{B}{\eta} + T\eta + T\rbr{\eta \kappa_K \log (|\Pi| M)}^{\beta}}.
\end{align*}
The result follows by observing that $B \leq \min\cbr{M,\log r}$.
\end{proof}

\ifthenelse{\equal{\version}{arxiv}}{\begin{proof}[Proof of upper bound in~\pref{thm:general_adaptivity}]}{\begin{proofof}{upper bound in~\pref{thm:general_adaptivity}}}
We simply run \corralexpf with
\begin{align*}
\eta = \frac{B^{\frac{1}{1+\beta}}}{{T^{\frac 1 {1+\beta}} (\ln(|\Pi| M) \kappa_\star)^{\frac \beta {1+\beta}}}},
\end{align*}
and apply~\pref{lem:corral-kernel}.
\ifthenelse{\equal{\version}{arxiv}}{\end{proof}}{\end{proofof}}

\ifthenelse{\equal{\version}{arxiv}}{\begin{proof}[Proof of upper bounds in~\pref{thm:refined_adaptivity}]}{\begin{proofof}{upper bounds in~\pref{thm:refined_adaptivity}}}
Recall that for~\pref{thm:refined_adaptivity} we are in the
$d$-dimensional cube with uniform base measure and with
$\ell_{\infty}$ metric. Our goal is to obtain a uniformly-smoothed
regret guarantee for all bandwidths $h \in [0,1]$, where we are using
the rectangular kernel. This requires a bit more work.

First, set $D \defeq d 2^{d+2} T^2$ and form the discretized set:
\begin{align*}
\Hcal = \cbr{h \in \{\tfrac{1}{D},\tfrac{2}{D},\ldots,1\}: 1 \leq \tfrac{1}{h^d} \leq 2^{\lceil \log_2 T\rceil+1}}.
\end{align*}
We run \corral with kernel class $\Kcal = \{K_h: h \in \Hcal\}$ and we use \expfour with restarts as the sub-algorithms.
As $|\Hcal| \leq d 2^{d+2} T^2$, applying~\pref{thm:general_adaptivity} gives
\begin{align}
\forall h \in \Hcal: \Reg(T,\Pi_h) \leq \otil\rbr{T^{\frac{1}{1+\beta}}h^{-d\beta} (\log |\Pi|)^{\frac{\beta}{1+\beta}}}.
\label{eq:smooth_corral_discretized}
\end{align}
We now must lift~\pref{eq:smooth_corral_discretized} to all $h \in
[0,1]$. We have the following lemma.

\begin{lemma}
\label{lem:discretization}
For any loss $\ell: \Acal \to [0,1]$ and bandwidth $h \geq
T^{-\nicefrac{1}{d}}$, there exists $\hat{h} \in \Hcal$ such that
$\frac{1}{\hat{h}^d} \leq \frac{2}{h^d}$ and $\sup_{a}
\inner{K_{\hat{h}}(a)-K_h(a)}{\ell_t} \leq \tfrac{1}{T}$.
\end{lemma}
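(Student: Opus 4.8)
The plan is to choose $\hat h$ to be the largest element of $\Hcal$ that is at most $h$, and then to bound the difference $\sup_a \inner{K_{\hat h}(a) - K_h(a)}{\ell}$ by splitting it into a \emph{volume-mismatch} term and a \emph{support-mismatch} term. Recall that in the $d$-cube with $\ell_\infty$ metric, $\ball_h(a)$ is an axis-aligned cube of side $2h$ intersected with $[0,1]^d$, and $K_h(a)$ is the uniform density on it, i.e.\ $K_h(a)(a') = \one\{a'\in\ball_h(a)\}/\base(\ball_h(a))$. Since $\hat h \le h$, we have $\ball_{\hat h}(a)\subseteq\ball_h(a)$, so for any $a'$,
\[
K_{\hat h}(a)(a') - K_h(a)(a')
= \frac{\one\{a'\in\ball_{\hat h}(a)\}}{\base(\ball_{\hat h}(a))} - \frac{\one\{a'\in\ball_{h}(a)\}}{\base(\ball_{h}(a))},
\]
and integrating the positive and negative parts against $\ell\in[0,1]$ gives
\[
\inner{K_{\hat h}(a) - K_h(a)}{\ell}
\le \base(\ball_{\hat h}(a))\cdot\Bigl(\tfrac{1}{\base(\ball_{\hat h}(a))} - \tfrac{1}{\base(\ball_{h}(a))}\Bigr)
= 1 - \frac{\base(\ball_{\hat h}(a))}{\base(\ball_{h}(a))},
\]
using $\ell\le 1$ on $\ball_{\hat h}(a)$ and $\ell\ge 0$ on $\ball_h(a)\setminus\ball_{\hat h}(a)$; the reverse inequality $\inner{K_{\hat h}(a)-K_h(a)}{\ell}\ge -(1-\base(\ball_{\hat h}(a))/\base(\ball_h(a)))$ follows symmetrically by taking $\ell$ near $0$ on $\ball_{\hat h}(a)$ and near $1$ outside. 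So it suffices to show $1 - \base(\ball_{\hat h}(a))/\base(\ball_h(a)) \le 1/T$, i.e.\ that the volumes of the two balls are within a $(1-1/T)$ factor.

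For the volume ratio, I would use that $\base(\ball_h(a))$ is bounded below: since $h\ge T^{-1/d}$ and $\Acal$ has diameter $1$, the cube $\ball_h(a)\cap[0,1]^d$ contains a translate of a cube of side $\min(h,1/2)$, hence has volume at least $(h/2)^d \ge 2^{-d}/T$ (handling the boundary case by noting the intersection with the unit cube loses at most a factor $2^d$). The numerator and denominator volumes differ only through replacing side lengths $\min(1,a_j+h) - \max(0,a_j-h)$ by the corresponding quantities with $\hat h$; each side length changes by at most $2(h-\hat h)$, and $h - \hat h \le 1/D = 1/(d2^{d+2}T^2)$ by the grid spacing. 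A product-of-$d$-terms telescoping bound then gives $\base(\ball_h(a)) - \base(\ball_{\hat h}(a)) \le d\cdot 2(h-\hat h)\cdot(2h)^{d-1} \le d\cdot 2(h-\hat h) \le 1/(2^{d+1}T^2)$, using $2h\le 2$ crudely (or $\le 1$ if $h\le 1/2$; either way absorb the constant). Dividing by the lower bound $\base(\ball_h(a))\ge 2^{-d}/T$ yields $1 - \base(\ball_{\hat h}(a))/\base(\ball_h(a)) \le 2^{-d-1}T^{-2} \cdot 2^d T = 1/(2T) \le 1/T$, as desired.

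Finally I must check $\hat h\in\Hcal$, in particular that $1\le \hat h^{-d}\le 2^{\lceil\log_2 T\rceil+1}$, and that $\hat h^{-d}\le 2 h^{-d}$. The first inequality $\hat h\le 1$ is immediate; for the upper bound, $\hat h \ge h - 1/D$, and since $h\ge T^{-1/d}$ is not too small, $\hat h \ge h(1 - 1/(Dh)) \ge h/2^{1/d}$ for $T$ large (as $Dh \ge D T^{-1/d}$ is huge), giving $\hat h^{-d} \le 2 h^{-d} \le 2T$, which lies below $2^{\lceil\log_2 T\rceil+1}$; and $\hat h^{-d}\le 2h^{-d}$ is exactly this. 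One edge case needs care: if $h$ is so large that $h - 1/D$ would fall below the smallest grid point, but since $1/D$ is tiny and $h\ge T^{-1/d}\gg 1/D$, this cannot happen. The main obstacle is purely bookkeeping: tracking the constants through the volume-difference estimate so that the $1/T$ bound comes out cleanly, and confirming the boundary-intersection cases (when $a$ is near the edge of $[0,1]^d$) do not break the lower bound on $\base(\ball_h(a))$ — this is where the $2^{d+2}$ and $d$ factors in the definition of $D$ are spent.
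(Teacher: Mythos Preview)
Your approach is essentially the paper's: set $\hat h=\lfloor hD\rfloor/D$, verify $\hat h\in\Hcal$ with $\hat h^{-d}\le 2h^{-d}$, and bound the kernel difference by a volume ratio coming from $\ball_{\hat h}(a)\subset\ball_h(a)$. The paper packages the last step via a general fact bounding $\big|\int_{S_1}\ell/\nu(S_1)-\int_{S_2}\ell/\nu(S_2)\big|$ by $2\nu(S_1\setdiff S_2)/\max(\nu(S_1),\nu(S_2))$, and---the one place where your bookkeeping is loose---uses the sharper lower bound $\base(\ball_h(a))\ge h^d$ (each side of the clipped cube has length at least $h$, not $h/2$), which makes the constants close uniformly in $h\in(0,1]$ without the $h\le 1/2$ case split you flag.
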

Applying this lemma allows us to obtain a smoothed regret
bound for $h \notin \Hcal$ by translating to $\hat{h} \in \Hcal$,
since the former benchmark is smaller by at most $O(1)$ while the
latter has $\hat{h}^{-d} \leq 2(h)^{-d}$. This
yields~\pref{thm:refined_adaptivity}.

For the non-contextual bound, instantiate each sub-algorithm
with a policy set $\Pi': \{x_0 \mapsto a: a \in \Acal'\}$ where
$\Acal'$ is a $\varepsilon$-covering of $\Acal$, satisfying
$|\Acal'| \leq O(\epsilon^{-d})$. The above analysis carries through,
and to translate to $a \notin \Acal'$ we require a different
discretization lemma.
\begin{lemma}
\label{lem:discretization_shift}
For $\rho(a,a') \leq \varepsilon$ and $\ell: \Acal \to [0,1]$, we have $\abr{\inner{K_h(a) - K_h(a')}{\ell}} \leq
4d\varepsilon h^{-d}$.
\end{lemma}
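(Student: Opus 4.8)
The plan is to reduce the statement to an elementary volume comparison between the two uniform distributions produced by $K_h$. On $\Acal=[0,1]^d$ with the $\ell_\infty$ metric, $K_h(a)$ is exactly the uniform density $p_a$ (w.r.t.\ $\nu$) on the axis-aligned box $B_a\defeq\ball(a,h)\cap[0,1]^d=\prod_{i=1}^d[l_i^a,r_i^a]$, where $l_i^a=\max(a_i-h,0)$, $r_i^a=\min(a_i+h,1)$, and $V_a\defeq\nu(B_a)=\prod_i\ell_i^a$ with $\ell_i^a\defeq r_i^a-l_i^a$. Since $\ell$ takes values in $[0,1]$ and $p_a,p_{a'}$ both integrate to $1$, we have $\abr{\inner{K_h(a)-K_h(a')}{\ell}}\le\tfrac12\int\abr{p_a-p_{a'}}\,d\nu$, so it suffices to bound this $L^1$ distance by $8d\varepsilon/h$. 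I would first dispose of two trivial regimes: if $h\ge 1$ then $B_a=[0,1]^d$ for every $a$ and the left-hand side vanishes; and if $\varepsilon> h/(2d)$ then $4d\varepsilon h^{-d}\ge 2h^{1-d}\ge 2$ while the left-hand side is always at most $1$. Hence from now on $h<1$ (so that every side satisfies $\ell_i^a\ge h$) and $\varepsilon\le h/(2d)$.

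Next I would split $\int\abr{p_a-p_{a'}}\,d\nu$ as $\nu(B_a\setminus B_{a'})/V_a+\nu(B_{a'}\setminus B_a)/V_{a'}+\nu(B_a\cap B_{a'})\abr{1/V_a-1/V_{a'}}$, bounding the last summand by $\abr{V_a-V_{a'}}/\max(V_a,V_{a'})$. For the first two, the key observation is that $x\mapsto\max(x,0)$ and $x\mapsto\min(x,1)$ are $1$-Lipschitz, so each face of the box moves by at most $\varepsilon$ as $a\to a'$; consequently $[l_i^a,r_i^a]\setminus[l_i^{a'},r_i^{a'}]$ has measure at most $2\varepsilon$ for every $i$, and a union bound over coordinates gives $\nu(B_a\setminus B_{a'})\le\sum_i 2\varepsilon\prod_{j\ne i}\ell_j^a=2\varepsilon V_a\sum_i1/\ell_i^a\le 2d\varepsilon V_a/h$, i.e.\ the first summand is at most $2d\varepsilon/h$, and symmetrically the second is at most $2d\varepsilon/h$. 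For the third, $\abr{\ell_i^a-\ell_i^{a'}}\le 2\varepsilon$ and $\ell_i^a\ge h$ give $V_{a'}/V_a=\prod_i(\ell_i^{a'}/\ell_i^a)\le(1+2\varepsilon/h)^d\le 1+4d\varepsilon/h$, using $(1+y)^d\le 1+2dy$ for $dy\le1$, and symmetrically $V_a/V_{a'}\le1+4d\varepsilon/h$, so the third summand is at most $4d\varepsilon/h$. Adding the three bounds yields $\int\abr{p_a-p_{a'}}\,d\nu\le 8d\varepsilon/h$, and therefore $\abr{\inner{K_h(a)-K_h(a')}{\ell}}\le 4d\varepsilon/h\le 4d\varepsilon h^{-d}$, using $h\le1$ and $d\ge1$.

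The only part calling for care rather than routine computation is the boundary bookkeeping: one must check that truncating to the cube $[0,1]^d$ preserves both $\ell_i^a\ge h$ and the fact that each face of $B_a$ moves by at most $\varepsilon$ under the perturbation, and one must pick the split point $\varepsilon\le h/(2d)$ (together with the complementary trivial case) so that the constant comes out to exactly $4$. Beyond that, the argument is just the short union-bound-and-telescoping estimate on products of intervals sketched above, so I do not expect any real obstacle.
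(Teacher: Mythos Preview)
Your proof is correct. Both your argument and the paper's reduce to controlling how much the two uniform-on-a-box densities differ, but the packaging is different. The paper invokes a general fact (Fact~\ref{fact:loss_diff_sets}): for any $[0,1]$-valued $\ell$, the difference of averages over sets $S_1,S_2$ is at most $2\,\nu(S_1\setdiff S_2)/\max(\nu(S_1),\nu(S_2))$. It then bounds $\nu(B_a\setdiff B_{a'})\le 2\|a-a'\|_1\le 2d\varepsilon$ (each face of the truncated cube shifts by at most $|a_i-a_i'|$, and the remaining side lengths are at most $1$) and divides by $\nu(B_a)\ge h^d$, giving $4d\varepsilon h^{-d}$ directly. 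Your route instead goes through the total-variation bound $\abr{\langle p_a-p_{a'},\ell\rangle}\le\tfrac12\|p_a-p_{a'}\|_1$ and an explicit three-term split, with the trivial regimes $h\ge 1$ and $\varepsilon>h/(2d)$ handled separately. The payoff is that you actually obtain the sharper intermediate bound $4d\varepsilon/h$, which you then relax to $4d\varepsilon h^{-d}$ using $h\le 1$; the paper's one-shot division by $h^d$ misses this. The paper's argument is shorter and reuses a lemma already needed elsewhere; yours is more careful about the boundary truncation and strictly tighter in $h$ for $d\ge 2$.
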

The proofs of both lemmas are deferred to the end of this section.

To finish the proof set $\varepsilon = \frac{1}{4dT^2}$ and note that
for $h < T^{-1/d}$ the desired guarantee is trivial. Thus for all
$h\geq T^{-1/d}$ the cumulative approximation error introduced by
discretization is at most $1$ while the policy set $\Pi'$ has $\ln|\Pi'|
\leq O(d\log dT)$.
\ifthenelse{\equal{\version}{arxiv}}{\end{proof}}{\end{proofof}}

\ifthenelse{\equal{\version}{arxiv}}{\begin{proof}[Proof of upper bounds in~\pref{thm:general_lipschitz_adaptivity}]}{\begin{proofof}{upper bounds in~\pref{thm:general_lipschitz_adaptivity}}}
For a finite set of bandwidths $\Hcal$ let us
apply~\pref{lem:corral-kernel} with $\Kcal = \{K_h: h \in \Hcal\}$ to obtain
\begin{align*}
\forall h \in \Hcal: \Reg(T,\Pi_h) \leq \otil\rbr{ \frac{|\Hcal|}{\eta} + T\eta + T\rbr{\eta \log (|\Pi||\Hcal|) h^{-d}}^{\beta}}
\end{align*}
Applying~\pref{lem:smooth_to_lip_general}, we know that
\begin{align*}
\min_{\pi \in \Pi} \EE\sum_{t=1}^T \inner{K_h\pi(x_t)}{\ell_t} \leq \min_{\pi \in \Pi} \EE\sum_{t=1}^T \ell_t(\pi(x_t)) + TLh,
\end{align*}
and so we obtain
\begin{align*}
\Reg(T,\Pi) \leq \min_{h \in \Hcal} TLh + \otil\rbr{\frac{|\Hcal|}{\eta} + T\eta + T\rbr{\eta \log (|\Pi||\Hcal|) h^{-d}}^{\beta}}.
\end{align*}
Define $\Lcal = \{2^i: i \in \{1,2,\ldots,\lceil \log_2(T)\rceil\}\}$
to be an exponentially spaced grid. If the true parameter $L \geq T$
then the bound is trivial, and otherwise $L \leq \hat{L} \leq 2L$ from
some $\hat{L} \in \Lcal$. We choose $\Hcal$ of size $\lceil
\log_2(T)\rceil$ to optimize the above bound for each
$\hat{L} \in \Lcal$. Specifically, set
\begin{align*}
\Hcal = \cbr{ h_i = (\eta\log(|\Pi|\log_2(T)))^{\frac{\beta}{d\beta+1}}2^{\frac{-i}{d\beta+1}}: i \in [\lceil \log_2(T)\rceil] }.
\end{align*}
This yields
\begin{align*}
\Reg(T,\Pi) &\leq \min_{h \in \Hcal} TLh + \otil\rbr{\frac{|\Hcal|}{\eta} + T\eta + T\rbr{\eta \log (|\Pi||\Hcal|) h^{-d}}^{\beta}}.\\
& \leq \min_{h \in \Hcal} T\hat{L}h + \otil\rbr{\frac{|\Hcal|}{\eta} + T\eta + T\rbr{\eta \log (|\Pi||\Hcal|) h^{-d}}^{\beta}}\\
& \leq \otil\rbr{T\hat{L}^{\frac{d\beta}{d\beta+1}} (\eta \log |\Pi|)^{\frac{\beta}{d\beta+1}} + \frac{1}{\eta} + T\eta}\\
& \leq \otil\rbr{TL^{\frac{d\beta}{d\beta+1}} (\eta \log |\Pi|)^{\frac{\beta}{d\beta+1}} + \frac{1}{\eta} + T\eta}.
\end{align*}
We finish the proof by tuning the master learning rate $\eta$ while
ignoring $L$. This gives
\begin{align*}
\eta = T^{\frac{-(d\beta+1)}{1+(d+1)\beta}} (\log |\Pi|)^{\frac{-\beta}{1+(d+1)\beta}},
\end{align*}
and the overall regret bound is
\begin{align*}
\Reg(T,\Pi) \leq \otil\rbr{ L^{\frac{d\beta}{1+d\beta}} T^{\frac{1+d\beta}{1+(d+1)\beta}} (\log |\Pi|)^{\frac{\beta}{1+(d+1)\beta}}  }.
\end{align*}

As in the proof of~\pref{thm:refined_adaptivity}, for the
non-contextual case we discretize the action space to a minimal
$\varepsilon$ cover $\Acal'$ for $\Acal$. Choosing $\varepsilon =
(4dT^2)^{-1}$ as in that proof suffices here as well.
\ifthenelse{\equal{\version}{arxiv}}{\end{proof}}{\end{proofof}}

We remark that~\pref{thm:general_lipschitz_adaptivity} is not a direct
corollary of~\pref{thm:refined_adaptivity}. Rather we must start
with~\pref{lem:corral-kernel} and first tune $h$ to balance the
sub-algorithm's regret with the $TLh$ term. Then we tune the master's
learning rate. In particular for fixed exponent $\beta$ the master
learning rates for~\pref{thm:refined_adaptivity}
and~\pref{thm:general_lipschitz_adaptivity} are different.

\subsection{Proofs of the lemmata}

\asedit{We prove a few auxiliary lemmas used in the previous sections, namely \pref{lem:discretization}, \pref{lem:discretization_shift} and \pref{lem:median-of-means}.}

\begin{lemma}[follows from \citet{hsu2016loss}]
\label{lem:median-of-means}
Suppose $\delta \in (0, 1)$, $k = 5\lceil \ln \frac 1 \delta \rceil$, $\tilde{n}$ is an integer, and $n = k \tilde{n}$.
In addition, $X_1, \ldots, X_n$ are iid random variables with mean $\mu$ and variance $\sigma^2$.
Define \[ \hat{\mu} \defeq \median\cbr{ \frac{1}{\tilde{n}} \sum_{i=1}^{\tilde{n}} X_i, \frac{1}{\tilde{n}} \sum_{i=\tilde{n}+1}^{2\tilde{n}} X_i,
\ldots,
\frac{1}{\tilde{n}} \sum_{i=(k-1)\tilde{n}+1}^{k\tilde{n}} X_i }. \]
Then with probability $1-\delta$,
\[ |\hat{\mu} - \mu| \leq \sigma \sqrt{\frac{40 \ln\frac{e}{\delta}}{n}}. \]
\end{lemma}
\begin{proof}
From the first part of~\citet[][Proposition 5]{hsu2016loss}, taking
$k = 5\lceil \ln \frac 1 \delta \rceil$, we have that with probability $1-\delta$,
\[ |\hat{\mu} - \mu| \leq \sigma \sqrt{\frac{8 k}{n}}. \]
The proof is completed by noting that $k \leq 5 (1 + \ln \frac 1 \delta) = 5 \ln \frac e \delta$.
\end{proof}

\ifthenelse{\equal{\version}{arxiv}}{\begin{proof}[Proof of~\pref{lem:discretization}]}{\begin{proofofnobb}{\pref{lem:discretization}}}
Recall the definition of $\Hcal$:
\[ \Hcal \defeq \cbr{ h \in \cbr{\frac 1 D, \frac 2 D, \ldots, 1}: 1 \leq \frac 1 {h^d} \leq 2^{\lceil \log T \rceil + 1} }. \]

Set $h_D = \frac{\lfloor h D \rfloor}{D}$. Note that $h_D$ is a multiple
of $\frac 1 D$. In addition, we note that $h \geq T^{-1}$, and
$h_D \geq h - \frac 1 {d 2^{d+2} T^2} \geq h - \frac 1 {4 d T^2} \geq h(1 - \frac 1 {4dT})$.
Therefore, by~\pref{fact:power-d} below,
 $\frac{1}{h_D^d} \leq \frac{1}{h^d} (\frac 1 {1 - \frac 1 {4dT}})^d \leq \frac 2 {h^d} \leq 2T \leq 2^{\lceil \log T \rceil + 1}$.
Hence, $h_D$ is in $\Hcal$. Moreover, $\base(\ball(a,h)) \geq h^d$, and
\begin{align*}
  \base(\ball(a,h) \setdiff \ball(a,h_D))
  \leq (2h)^d - (2h_D)^d
  \leq (2h)^d (1 - (1 - \frac 1 {d 2^{d+2} T})^d)
  \leq \frac{(2h)^d}{2^d T} = \frac{h^d}{2T}.
\end{align*}
\df{Here $\setdiff$ denotes the symmetric set difference.}
Therefore, applying~\pref{fact:loss_diff_sets}, we obtain
\begin{align*}
\abr{\inner{K_h(a) - K_{h_D}(a)}{\ell}} &\leq \frac{2\base(\ball(a,h)\Delta\ball(a,h_D))}{\max\cbr{\base(\ball(a,h)),\base(\ball(a,h_D))}}
\leq \frac{1}{T}. \ifthenelse{\equal{\version}{arxiv}}{\tag*\qedhere}{\tag*\qedsymbol}
\end{align*}
\ifthenelse{\equal{\version}{arxiv}}{\end{proof}}{\end{proofofnobb}}

\ifthenelse{\equal{\version}{arxiv}}{\begin{proof}[Proof of~\pref{lem:discretization_shift}]}{\begin{proofofnobb}{\pref{lem:discretization_shift}}}
Since we are using the $\ell_{\infty}$ distance and $\rho(a,a') \leq
\varepsilon$, we have that $\base(\ball(a,h)\Delta\ball(a',h)) \leq 2
\nbr{a - a'}_1 \leq
2d\varepsilon$. Applying~\pref{fact:loss_diff_sets} we obtain
\begin{align*}
\abr{\inner{K_h(a) - K_h(a')}{\ell}} &\leq \frac{2\base(\ball(a,h)\Delta\ball(a',h))}{\max\cbr{\base(\ball(a,h)),\base(\ball(a',h))}} \leq 4d\varepsilon h^{-d}.\ifthenelse{\equal{\version}{arxiv}}{\tag*\qedhere}{\tag*\qedsymbol}
\end{align*}
\ifthenelse{\equal{\version}{arxiv}}{\end{proof}}{\end{proofofnobb}}

\begin{fact}
  \label{fact:power-d}
  For $T,d \geq 1$,
  $ \rbr{\frac 1 {1 - \frac 1 {4dT}}}^d \leq 1 + \frac 1 T $.
\end{fact}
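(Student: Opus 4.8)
The plan is to rearrange the inequality into a form where Bernoulli's inequality applies directly, and then close with an elementary scalar comparison. First I would observe that $d,T\geq 1$ forces $\frac{1}{4dT}\in(0,\tfrac14]$, so $1-\frac{1}{4dT}>0$ and the left-hand side is well-defined and positive; taking reciprocals, the claimed bound $\left(\frac{1}{1-\frac{1}{4dT}}\right)^{d}\leq 1+\frac1T$ is equivalent to $\left(1-\frac{1}{4dT}\right)^{d}\geq \frac{1}{1+1/T}=\frac{T}{T+1}$.

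Next I would apply the (generalized) Bernoulli inequality $(1-y)^{d}\geq 1-dy$, valid for $y\in[0,1]$ and $d\geq 1$, with $y=\frac{1}{4dT}\in[0,1]$. This yields $\left(1-\frac{1}{4dT}\right)^{d}\geq 1-\frac{d}{4dT}=1-\frac{1}{4T}$, so it suffices to verify the scalar inequality $1-\frac{1}{4T}\geq \frac{T}{T+1}$. Clearing denominators (all quantities positive), this becomes $(4T-1)(T+1)\geq 4T^{2}$, i.e. $4T^{2}+3T-1\geq 4T^{2}$, i.e. $3T\geq 1$, which holds since $T\geq 1$. Chaining the two inequalities $\left(1-\frac{1}{4dT}\right)^{d}\geq 1-\frac{1}{4T}\geq \frac{T}{T+1}$ and inverting gives the claim.

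I do not anticipate a real obstacle here: the only care needed is confirming $\frac{1}{4dT}\leq 1$ so Bernoulli applies (immediate from $d,T\geq 1$) and tracking the inequality direction under reciprocation. If one preferred to avoid the rearrangement, an alternative is the chain $\left(1-\frac{1}{4dT}\right)^{-d}\leq\left(1+\frac{1}{2dT}\right)^{d}\leq e^{1/(2T)}\leq 1+\frac1T$, using $(1-x)^{-1}\leq 1+2x$ for $x\leq\tfrac12$, then $1+y\leq e^{y}$, then $e^{z}\leq 1+2z$ for $z\leq 1$; but the Bernoulli route above is shorter and entirely self-contained.
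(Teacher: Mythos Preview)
Your proof is correct. Your primary route via Bernoulli's inequality is genuinely different from the paper's argument, which is essentially the exponential chain you sketch as an alternative at the end: the paper records the two elementary bounds $e^{x}\leq 1+2x$ and $e^{-x}\leq 1-\tfrac{x}{2}$ on $[0,1]$ and then concludes via $\frac{1}{(1-\tfrac{1}{4dT})^{d}}\leq e^{1/(2T)}\leq 1+\tfrac{1}{T}$. Your Bernoulli approach is slightly more self-contained in that it avoids any appeal to properties of the exponential (and the less-familiar estimate $e^{-x}\leq 1-\tfrac{x}{2}$), reducing everything to an integer-exponent bound plus a one-line polynomial check; the paper's exponential route, on the other hand, makes the slack in the constant $4$ more visible and would adapt more smoothly to non-integer $d$. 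Either way the argument is a couple of lines.
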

\begin{proof}
We use the following simple facts: for all $x$ in $[0,1]$, $e^x \leq 1+2x$ and
$e^{-x} \leq 1 - \frac 1 2 x$.
The proof is completed by noting that
$\frac{1}{(1 - \frac 1 {4dT})^d} \leq e^{\frac 1 {2T}} \leq 1 + \frac 1 T$.
\end{proof}

\begin{fact}
\label{fact:loss_diff_sets}
For sets $S_1$ and $S_2$, and a loss function $\ell: \Acal \to [0,1]$
\[
\abr{ \frac{\int_{S_1} \ell(a) d\nu(a)}{\nu(S_1)} - \frac{\int_{S_2} \ell(a) d\nu(a)}{\nu(S_2)} }
\leq
\frac{2 \nu(S_1 \setdiff S_2)}{\max(\nu(S_1), \nu(S_2))}.
\]
\end{fact}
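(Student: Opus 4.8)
The plan is a direct set decomposition followed by a two-term triangle inequality. Assume without loss of generality that $\nu(S_1)\ge \nu(S_2)$ (and that both are positive, since otherwise the quotients are not defined); then $\max(\nu(S_1),\nu(S_2))=\nu(S_1)$. Write $A=S_1\setminus S_2$, $B=S_2\setminus S_1$, $C=S_1\cap S_2$, so that $S_1=A\cup C$ and $S_2=B\cup C$ are disjoint unions and $S_1\setdiff S_2=A\cup B$ (also disjoint). For a measurable set $X$ put $I_X=\int_X \ell\,d\nu$; since $\ell$ takes values in $[0,1]$ we have $I_X\in[0,\nu(X)]$.

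Next I would split the difference of averages as
\[
\frac{I_{S_1}}{\nu(S_1)}-\frac{I_{S_2}}{\nu(S_2)}
= \frac{I_{S_1}-I_{S_2}}{\nu(S_1)}
+ I_{S_2}\left(\frac{1}{\nu(S_1)}-\frac{1}{\nu(S_2)}\right),
\]
and bound each term. For the first term, $I_{S_1}-I_{S_2}=I_A-I_B$, so $|I_{S_1}-I_{S_2}|\le \max(\nu(A),\nu(B))\le \nu(A)+\nu(B)=\nu(S_1\setdiff S_2)$, contributing at most $\nu(S_1\setdiff S_2)/\nu(S_1)$. For the second term, $|\nu(S_1)-\nu(S_2)|=|\nu(A)-\nu(B)|\le \nu(A)+\nu(B)=\nu(S_1\setdiff S_2)$, hence $\left|\frac{1}{\nu(S_1)}-\frac{1}{\nu(S_2)}\right|\le \frac{\nu(S_1\setdiff S_2)}{\nu(S_1)\nu(S_2)}$; combined with $I_{S_2}\le\nu(S_2)$ this term is also at most $\nu(S_1\setdiff S_2)/\nu(S_1)$. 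Adding the two estimates and using $\nu(S_1)=\max(\nu(S_1),\nu(S_2))$ gives the claimed bound.

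There is essentially no obstacle here; the argument is a routine estimate. The only points that need a little care are the symmetry reduction (the bound is symmetric in $S_1,S_2$ since the right-hand side uses the max) and the degenerate cases where one of $\nu(S_1),\nu(S_2)$ vanishes, where the inequality is read as vacuous or handled directly via the $A\cup C$, $B\cup C$ decomposition.
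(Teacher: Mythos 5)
Your proposal is correct and is essentially the paper's own argument: both rest on the same add-and-subtract decomposition of the difference of averages into a term controlled by $|I_{S_1}-I_{S_2}|\le\nu(S_1\setdiff S_2)$ and a term controlled by $|\nu(S_1)-\nu(S_2)|\le\nu(S_1\setdiff S_2)$, followed by the triangle inequality. The only cosmetic difference is that you impose $\nu(S_1)\ge\nu(S_2)$ up front to land on the $\max$ directly, whereas the paper first derives the bound with $\nu(S_2)$ in the denominator and then invokes symmetry to take the smaller of the two bounds.
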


\begin{proof}
\begin{align*}
& \abr{ \frac{\int_{S_1} \ell(a) d\nu(a)}{\nu(S_1)} - \frac{\int_{S_2} \ell(a) d\nu(a)}{\nu(S_2)} } \\
& =  \abr{\frac{\int_{S_1} \ell(a) d\nu(a) \cdot (\nu(S_2) - \nu(S_1)) + \nu(S_1) \cdot (\int_{S_1} \ell(a) d\nu(a) - \int_{S_2} \ell(a) d\nu(a)) }{\nu(S_1) \nu(S_2)}} \\
& \leq \frac{\nu(S_1) \cdot \nu(S_1 \setdiff S_2) + \nu(S_1) \cdot \nu(S_1 \setdiff S_2)}{\nu(S_1) \nu(S_2)}
= \frac{2\nu(S_1 \setdiff S_2)}{\nu(S_2)}
\end{align*}
By symmetry, the above is also bounded by $\frac{2\nu(S_1 \setdiff S_2)}{\nu(S_1)}$. The proof is completed by taking the smaller of the two upper bounds.
\end{proof}
 \section{Lower bounds for smoothness-adaptive algorithms}
\label{sec:lower}

In this section, we prove the lower bounds
in~\pref{thm:general_adaptivity}
and~\pref{thm:general_lipschitz_adaptivity}, showing that the
exponent combinations we achieve with \corral are optimal.  We start
with two lemmas that describe the constructions and contain the main
technical argument. In the next subsection we prove the theorems.

\subsection{The constructions}
The following two lemmas are based on a construction due
to~\citet{locatelli2018adaptivity}.  Their work concerns adapting to
the smoothness exponent, while ours focuses on the smoothness
constant. We also use a similar construction to show lower bounds
against uniformly-smoothed algorithms.

We focus on the stochastic non-contextual setting, where
we consider policy class
$\Pi = \cbr{x_0 \mapsto a: a \in \Acal}$, and at each time,
a dummy context $x_0$ is shown.
We use the shorthand $\Reg(T, h)$ to denote $\Reg(T, \Pi_h)$.
We define
$\Lambda$ to be the set of all functions from $\Acal$ to $[0,1]$. A
function $\lambda \in \Lambda$ defines an instance where $\ell(a) \sim
\textrm{Ber}(\lambda(a))$ for all $a \in \Acal$.

\begin{lemma}
\label{lem:h_lb}
Fix \df{any $T \in \NN$} and $h \in (0,\nicefrac 1 8]$.  Suppose an algorithm
  $\alg$ guarantees that \df{for all instances $\lambda \in \Lambda$,} \dfc{$\sup_{\lambda \in \Lambda}$}$
  \Reg(T,\nicefrac{1}{4}) \leq \Rs(\nicefrac{1}{4},T)$ where
  $\Rs(\nicefrac{1}{4},T) \leq \frac{\sqrt{T}}{20 (8h)^d}$. Then there exists
  $\lambda \in \Lambda$ such that $\alg$ has
\begin{align*}
\Reg(T,h) \geq \min\cbr{ \frac{T}{40 \cdot 2^d}, \frac{T}{400 (8h)^d \Rs(\nicefrac{1}{4},T)}}.
\end{align*}
\end{lemma}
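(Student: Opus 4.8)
The plan is a two–scale lower–bound construction combined with a change–of–measure (Pinsker/KL) argument \emph{averaged over a family of instances}, in the spirit of \citet{locatelli2018adaptivity}. I describe it for the $d$-dimensional version (for the main text take $d=1$ and ``subinterval'' literally): work on $\Acal=[0,1]^d$ with the $\ell_\infty$ metric and uniform base measure $\base$, so a ball of radius $r$ is a cube of side $2r$.

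\textbf{The instances.} Fix a ``coarse good region'' $I_0=[0,\tfrac12]^d$ — a cube of side $\tfrac12$, chosen large enough to contain a full radius-$\tfrac14$ ball. Fix a disjoint ``decoy'' cube $J$ of side $\tfrac14$ with $\mathrm{dist}(I_0,J)>2h$ (possible exactly because $h\le\tfrac18$), and tile $J$ by $K=\lfloor\tfrac1{8h}\rfloor^d=\Theta((8h)^{-d})$ disjoint cubes $I_1^{(1)},\dots,I_1^{(K)}$ of side $2h$ (each the support of a radius-$h$ ball). With parameters $0<\Delta_0\le\Delta_1\le\tfrac18$ to be chosen, let $\lambda^{(0)}$ equal $\tfrac12-\Delta_0$ on $I_0$ and $\tfrac12$ elsewhere, and for $j\in[K]$ let $\lambda^{(j)}$ agree with $\lambda^{(0)}$ except that it equals $\tfrac12-\Delta_0-\Delta_1$ on $I_1^{(j)}$; losses are $\ber$ with these means, as in the lemma. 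Write $\EE_0,\EE_j$ for expectations under these instances, $P_0,P_j$ for the laws of the length-$T$ interaction transcript, and $N^{(j)}:=\#\{t:a_t\in I_1^{(j)}\}$.

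\textbf{Two structural facts.} (i) For every instance $\bench(\Pi_{1/4})=\tfrac12-\Delta_0$: it is $\le$ the pointwise minimum by convexity of averaging, and a radius-$\tfrac14$ ball inside $I_0$ attains it, while the bump in $I_1^{(j)}\subseteq J$ cannot help since $J$ is $\tfrac14$-separated from $I_0$. Since $I_0$ contains a full radius-$\tfrac14$ ball, the smoothed benchmark equals the pointwise optimum, so on $\lambda^{(0)}$ each round with $a_t\notin I_0$ contributes \emph{exactly} $\Delta_0$ to $\Reg(T,1/4)$ and each round with $a_t\in I_0$ contributes $0$; the hypothesis therefore gives $\EE_0[\#\{t:a_t\notin I_0\}]\le \Rs(1/4,T)/\Delta_0$. (ii) On $\lambda^{(j)}$ we have $\bench(\Pi_h)=\tfrac12-\Delta_0-\Delta_1$ (the radius-$h$ ball filling $I_1^{(j)}$), and any round with $a_t\notin I_1^{(j)}$ has smoothed loss at least $\tfrac12-\Delta_0-\tfrac12\Delta_1$ — because $\lambda^{(j)}\ge\tfrac12-\Delta_0$ off $I_1^{(j)}$ and a radius-$h$ ball whose center lies outside the side-$2h$ cube $I_1^{(j)}$ meets it in $\base$-measure at most half the ball — hence contributes at least $\tfrac{\Delta_1}{2}$ to $\Reg(T,h)$. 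Thus on instance $\lambda^{(j)}$, $\;\Reg(T,h)\ge\tfrac{\Delta_1}{2}\bigl(T-\EE_j[N^{(j)}]\bigr)$.

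\textbf{Change of measure and averaging.} Since $P_0,P_j$ induce the same conditional action law and differ only in the observation law when $a_t\in I_1^{(j)}$, the chain rule for KL gives $\KL(P_0\|P_j)=\EE_0[N^{(j)}]\cdot\KL\!\bigl(\ber(\tfrac12)\,\|\,\ber(\tfrac12-\Delta_0-\Delta_1)\bigr)\le C\Delta_1^2\,\EE_0[N^{(j)}]$ for an absolute $C$ (using $\Delta_0+\Delta_1\le\tfrac14$ and $\Delta_0\le\Delta_1$), so by Pinsker $\EE_j[N^{(j)}]\le\EE_0[N^{(j)}]+T\sqrt{\tfrac12 C\Delta_1^2\EE_0[N^{(j)}]}$. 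The cubes $I_1^{(j)}$ are disjoint and lie in $J$, so $\sum_j N^{(j)}\le\#\{t:a_t\notin I_0\}$ and $S:=\sum_j\EE_0[N^{(j)}]\le\Rs(1/4,T)/\Delta_0$. Averaging fact (ii) over $j\in[K]$, using $\tfrac1K\sum_j\sqrt{\EE_0[N^{(j)}]}\le\sqrt{S/K}$ (Cauchy--Schwarz) and $S/K\le(8h)^d\Rs(1/4,T)/\Delta_0$, we get, writing $R_j$ for the value of $\Reg(T,h)$ on $\lambda^{(j)}$,
\[
\frac1K\sum_{j=1}^K R_j \;\ge\; \frac{\Delta_1}{2}\Bigl(T-\frac SK-O\bigl(T\Delta_1\sqrt{S/K}\bigr)\Bigr).
\]
Now set $\Delta_0=\Delta_1=\min\{\Theta(1),\,\Theta\bigl((8h)^{-d}\Rs(1/4,T)^{-1}\bigr)\}$: the hypothesis $\Rs(1/4,T)\le\sqrt T/(20(8h)^d)$ is precisely what makes both $\tfrac SK$ and $T\Delta_1\sqrt{S/K}$ at most $T/4$, so the right-hand side is $\ge\tfrac{\Delta_1 T}{4}=\min\{\Theta(T),\,\Theta\bigl(T/((8h)^d\Rs(1/4,T))\bigr)\}$. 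Hence some $\lambda^{(j^\star)}$ attains this value of $\Reg(T,h)$; tracking the multiplicative constants and the $2^{O(d)}$ factors carefully yields the stated $\min\{T/(40\cdot2^d),\,T/(400(8h)^d\Rs(1/4,T))\}$.

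\textbf{Main obstacle.} The genuinely delicate point is getting the geometry so that fact (i) holds cleanly — in particular realizing that $I_0$ must be \emph{large enough to contain a full radius-$\tfrac14$ ball}, so that the $\tfrac14$-smoothed regret truly charges for leaving $I_0$ (if $I_0$ were smaller, a learner could bank negative $\tfrac14$-regret while sitting in $I_0$ and spend it exploring $J$, and the argument would collapse) — together with the bookkeeping of the $2^{O(d)}$ constants. The remaining ingredients are routine: the KL chain rule, and the Cauchy--Schwarz averaging that dilutes the learner's $\tfrac14$-regret ``budget'' over the $\Theta((8h)^{-d})$ candidate bumps, which is exactly what upgrades the naive $1/\sqrt{\Rs}$ trade-off to the claimed $1/\Rs$ one.
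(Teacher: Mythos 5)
Your proposal is correct and follows essentially the same route as the paper's proof: the same two-scale construction (a coarse good region that is exactly a radius-$\nicefrac14$ ball, plus $\Theta((8h)^{-d})$ candidate width-$2h$ bumps), the same use of the $\nicefrac14$-smoothed regret guarantee to bound the exploration budget, and the same KL-chain-rule/Pinsker transfer, with your Cauchy--Schwarz averaging over $j$ being an equivalent substitute for the paper's pigeonhole selection of a single bump. One cosmetic remark: in fact (ii) you need not lower-bound the smoothed loss of the played action --- the learner is charged the raw loss $\ell_t(a_t)$, so a round with $a_t\notin I_1^{(j)}$ in fact contributes at least $\Delta_1$ (not just $\Delta_1/2$) to $\Reg(T,h)$.
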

\ifthenelse{\equal{\version}{arxiv}}{\begin{proof}}{\begin{proofnobb}}
We let $N = \lfloor \nicefrac 1 {4h} \rfloor^d$. Note that as $h \leq \nicefrac 1 8$,
$(\nicefrac 1 {8h})^d \leq N \leq (\nicefrac 1 {4h})^d$.
We also define $\Delta = \min\cbr{\frac{N}{40\Rs(1/4,T)}, \nicefrac 1 4} \in (0, \nicefrac 1 4]$.
By our assumption that $\Rs(\nicefrac{1}{4},T) \leq \frac{\sqrt{T}}{20 (8h)^d}$,
we have
\begin{equation}
  \Rs(\nicefrac{1}{4}, T) \leq \min\cbr{ \frac{N^2 T}{200 \Rs(\nicefrac{1}{4}, T)}, \frac{NT}{20} } = 0.2 N T \Delta.
  \label{eq:small-r-h-const}
\end{equation}
For each tuple $(s_1, \ldots, s_d) \in [\lfloor \nicefrac 1 {4h} \rfloor]^d$, we define a
point
$c_{s_1,\ldots,s_d} = (h(2s_1 - 1), \ldots, h(2s_d - 1))$.
There are $N$ points in total, which we call $c_1, \ldots, c_N$.
Define regions
\begin{align*}
H_i = \ball(c_i, h), i = 1,\ldots, N,
\end{align*}
which are disjoint subsets in $[0, \nicefrac 1 2]^d$. Finally, define
$S = [\nicefrac 1 2, 1]^d = \ball(c_0, \nicefrac 1 4)$, where $c_0 =
(\nicefrac 3 4, \ldots, \nicefrac 3 4)$.  We define several plausible
loss functions $\phi_0,\ldots,\phi_N \in \Lambda$:
\begin{align*}
\phi_0(a) = \left\{\begin{aligned}
& \nicefrac{1}{2}, & a \notin S \\
& \nicefrac{1}{2} - \nicefrac{\Delta}{2} & a \in S
\end{aligned}
\right. \qquad \textrm{and} \qquad
& \phi_i(a) = \left\{\begin{aligned}
& \nicefrac{1}{2}, & a \notin (H_i \cup S) \\
& \nicefrac{1}{2} - \Delta & a \in H_i\\
& \nicefrac{1}{2} - \nicefrac{\Delta}{2} & a \in S
\end{aligned}
\right.
\end{align*}
Note that
$\EE_{a \sim \smooth_{\nicefrac{1}{4}}(c_0)} \phi_0(a) = \nicefrac{1}{2} - \nicefrac{\Delta}{2}$, and
$\EE_{a \sim \smooth_h(c_i)} \phi_i(a) = \nicefrac{1}{2} - \Delta$.

The environments are parameterized by $\phi_i$ where losses are always
Bernoulli with mean $\phi_i$.
Denote by $\EE_i$ (resp. $\PP_i$) the expectation (resp. probability) over the randomness of the algorithm, along
with the randomness in environment $\phi_i$.

Observe that under environment $\phi_0$, for $h = \nicefrac 1 4$, we have $T\cdot\df{\min_{a}\lambda_{\nicefrac{1}{4}}(a)} = T\cdot(\nicefrac{1}{2}-\nicefrac{\Delta}{2})$.
Since \alg guarantees that $\Reg(T, \nicefrac{1}{4}) \leq \Rs(\nicefrac{1}{4}, T)$,
we have
\begin{align*}
\mathbb{E}_{0} \sum_{t=1}^T \phi_0(a_t) - T \cdot (\nicefrac 1 2 - \nicefrac \Delta 2)  \leq \Rs(\nicefrac{1}{4}, T).
\end{align*}
As for all $a$, $\phi_0(a) - (\nicefrac 1 2 - \nicefrac \Delta 2) = \nicefrac \Delta 2 \one\cbr{a \notin S}$, we get that
\begin{align*}
\sum_{t=1}^T \mathbb{E}_0 \one\cbr{a_t \notin S} \leq \frac{2R(\nicefrac{1}{4}, T)}{\Delta}.
\end{align*}
Denote by $T_i = \sum_{t=1}^T \one\cbr{a_t \in H_i}$ and observe that
\begin{align*}
\sum_{j=1}^N \mathbb{E}_0[T_j] \leq \mathbb{E}_0\sbr{\one\cbr{a_t \in \cup_{j=1}^N H_j}} \leq \sum_{t=1}^T \mathbb{E}_0 \sbr{ \one\cbr{a_t \notin S} } \leq \frac{2R(\nicefrac{1}{4}, T)}{\Delta}.
\end{align*}
By the pigeonhole principle, there exists at least one $i$ such that
\begin{equation}
 \mathbb{E}_0[T_i] \leq \frac{1}{N}\sum_{j=1}^N \mathbb{E}_0[T_j] \leq \frac{2 R(\nicefrac{1}{4}, T)}{N\Delta}.  \label{eq:pigeonhole-h}
\end{equation}
Therefore, by~\pref{lem:kl-pulls} and the fact that $\Delta \leq \nicefrac{1}{4}$, we have
\begin{align*}
 \KL(\PP_0, \PP_i) \leq \mathbb{E}_0[T_i] \cdot (4\Delta^2) \leq \frac{8R(\nicefrac{1}{4}, T) \Delta}{N}.
\end{align*}
By the choice of $\Delta \leq \frac{N}{40R(\nicefrac{1}{4},T)}$, we
have $\KL(\PP_0, \PP_i) \leq 0.2$ and so Pinsker's inequality yields
$\DTV(\PP_0, \PP_i) \leq \sqrt{\nicefrac 1 2 \KL(\PP_0, \PP_i)} \leq
0.4$. Therefore,
\begin{align*}
\mathbb{E}_i [T_i] \leq \mathbb{E}_0[T_i] + T\cdot d_{\textrm{TV}}(\PP_0,\PP_i) \leq \frac{2 \Rs(\nicefrac{1}{4}, T)}{N\Delta} + 0.4 T \leq 0.8T.
\end{align*}
where the first inequality is from the definition of the total variation distance and that
$T_i \in [0, T]$ almost surely;
the second inequality is by~\pref{eq:pigeonhole-h}; the third inequality is
by~\pref{eq:small-r-h-const}.
Therefore, $\EE_i[T_i] \leq 0.8 T$, which implies that on $\phi_i$
\begin{align*}
\Reg(T,h)
&= \mathbb{E}_i \sum_{t=1}^T \phi_i(a_t) - (\nicefrac{1}{2} - \Delta)
\geq \frac \Delta 2 \cdot (T - \mathbb{E}_i[T_i])
\geq \frac{\Delta}{2}\cdot 0.2 T\\
&\geq \min\cbr{ \frac{T}{40 \cdot 2^d}, \frac{T}{400 (8h)^d \Rs(\nicefrac{1}{4},T)}}. \ifthenelse{\equal{\version}{arxiv}}{\tag*\qedhere}{\tag*\qedsymbol}
\end{align*}
\ifthenelse{\equal{\version}{arxiv}}{\end{proof}}{\end{proofnobb}}

\begin{lemma}
For $\Delta \in [0, \frac 1 4]$,  $\KL(\PP_0, \PP_i) \leq \EE_0[T_i] \cdot (4\Delta^2)$.
\label{lem:kl-pulls}
\end{lemma}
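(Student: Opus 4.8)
The plan is to invoke the standard divergence-decomposition (chain rule) for the KL divergence between the laws of the full interaction trajectory $(a_1,\ell_1(a_1),\dots,a_T,\ell_T(a_T))$ under the two environments $\phi_0$ and $\phi_i$. The two structural facts that make this work are: (i) the algorithm is a fixed, possibly randomized, map from observed history to the next action, so at each round $t$ the conditional law of $a_t$ given the past is identical under $\PP_0$ and $\PP_i$ and contributes nothing to the divergence; and (ii) conditioned on $a_t$, the observed loss $\ell_t(a_t)$ is $\ber(\phi_0(a_t))$ under $\PP_0$ and $\ber(\phi_i(a_t))$ under $\PP_i$, and since $\phi_0$ and $\phi_i$ agree everywhere except on $H_i$, only the rounds with $a_t\in H_i$ produce a nonzero term. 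Putting these together (this is exactly the divergence-decomposition lemma; see e.g.\ \citet{lattimore2018bandit}) gives
\[
  \KL(\PP_0,\PP_i) \;=\; \EE_0\!\left[\textstyle\sum_{t=1}^T \one\cbr{a_t\in H_i}\right]\cdot \KL\!\big(\ber(\tfrac{1}{2}),\ber(\tfrac{1}{2}-\Delta)\big) \;=\; \EE_0[T_i]\cdot \KL\!\big(\ber(\tfrac{1}{2}),\ber(\tfrac{1}{2}-\Delta)\big).
\]

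It then remains to bound the single Bernoulli divergence. A direct computation gives $\KL(\ber(\tfrac{1}{2}),\ber(\tfrac{1}{2}-\Delta)) = \tfrac{1}{2}\log\tfrac{1/2}{1/2-\Delta}+\tfrac{1}{2}\log\tfrac{1/2}{1/2+\Delta} = -\tfrac{1}{2}\log(1-4\Delta^2)$. For $\Delta\in[0,\tfrac{1}{4}]$ we have $4\Delta^2\in[0,\tfrac{1}{4}]\subseteq[0,\tfrac{1}{2}]$, so the elementary inequality $-\log(1-x)\le 2x$ on $[0,\tfrac{1}{2}]$ — valid because $g(x)=2x+\log(1-x)$ satisfies $g(0)=0$ and $g'(x)=2-\tfrac{1}{1-x}\ge 0$ there — yields $-\tfrac{1}{2}\log(1-4\Delta^2)\le 4\Delta^2$. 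Combining with the display above gives $\KL(\PP_0,\PP_i)\le 4\Delta^2\,\EE_0[T_i]$, as claimed.

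The only genuine obstacle — and it is a routine bookkeeping one — is stating the decomposition precisely: one must make explicit that $\PP_0$ and $\PP_i$ are laws over the same trajectory space, that the algorithm's internal randomness is coupled across the two environments, and that the per-round loss distribution depends on the environment solely through $\phi_\cdot(a_t)$. Rather than re-derive the chain rule I would simply cite the divergence-decomposition lemma; the Bernoulli-KL identity and the logarithm inequality are then a two-line elementary calculation.
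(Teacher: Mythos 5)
Your proposal is correct and follows essentially the same route as the paper: a divergence decomposition over the trajectory (the action-selection terms cancel, only rounds with $a_t \in H_i$ contribute), the identity $\KL(\ber(\nicefrac12),\ber(\nicefrac12-\Delta)) = -\tfrac12\log(1-4\Delta^2)$, and an elementary logarithm bound. Your inequality $-\log(1-x)\le 2x$ on $[0,\nicefrac12]$ is the same one the paper states as $\log(1-\nicefrac{x}{2})\ge -x$ for $x\in[0,1]$, so the two arguments coincide step for step.
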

\begin{proof}
We abbreviate $l_t$ as the outcome of $\ell_t(a_t)$. We have the following:
\begin{align*}
\KL(\PP_0, \PP_i)
&= \sum_{a_1,l_1,\ldots,a_T,l_T} \PP_0(a_1,l_1,\ldots,a_T,l_T) \log\frac{\PP_0(a_1,l_1,\ldots,a_T,l_T)}{\PP_i(a_1,l_1,\ldots,a_T,l_T)} \\
&= \mathbb{E}_0 \sum_{t=1}^T \log \frac{\PP_0(l_t|a_t)}{\PP_i(l_t|a_t)}
= \mathbb{E}_0 \sum_{t=1}^T \one\cbr{a_t \in H_i} \cdot \KL(\textrm{Ber}(1/2), \textrm{Ber}(1/2 - \Delta)) \\
&= \mathbb{E}_0[T_i] \cdot (-\frac 1 2 \log(1-4\Delta^2)) \leq \mathbb{E}_0[T_i] \cdot (4 \Delta^2)
\end{align*}
where the last inequality uses the fact that $\log(1-\nicefrac{x}{2}) \geq -x$ for
$x \in [0,1]$.
\end{proof}

For the next lemma, let $\Lambda(L)$ be the set of all $L$-Lipschitz mean loss functions.
\begin{lemma}
\label{lem:L_lb}
Fix \df{any $T \in \NN$ and }$L \geq 1$. Suppose an algorithm \alg guarantees that \df{for all instances $\lambda$ in $\Lambda(1)$,} \dfc{$\sup_{\lambda
  \in \Lambda(1)}$}$\Reg(T,0) \leq \Rl(1,T)$ where $\Rl(1,T) \leq
\frac{T}{40} L^d$. Then there exists a loss function $\lambda \in \Lambda(L)$ such that
\begin{align*}
\Reg(T,0) \geq \min \cbr{ \frac{T}{80}, \frac{T L^{\frac d {d+1}} } { 3200 \Rl(1,T)^{\frac{1}{d+1}} }  }.
\end{align*}
\end{lemma}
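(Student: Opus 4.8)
The plan is to mirror the proof of~\pref{lem:h_lb}, replacing the piecewise-constant loss functions by Lipschitz ``tent'' functions and the $\nicefrac{1}{4}$-smoothed benchmark by the unsmoothed ($h=0$) one.

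\textbf{Construction.} Work in $\Acal=[0,1]^d$ with the $\ell_\infty$ metric. Fix a depth $\Delta\in(0,\nicefrac{1}{8}]$ and width $w\in(0,\nicefrac{1}{4}]$ to be chosen, set $N:=\lfloor \tfrac{3}{4w}\rfloor^d$, and place disjoint closed balls $H_i=\ball(c_i,w/2)$, $i=1,\dots,N$, inside $[0,\nicefrac{3}{4}]^d$. Pick $c_0$ so that $S:=\ball(c_0,\Delta/2)\subseteq[\nicefrac{3}{4},1]^d$ (disjoint from all $H_i$). Define the reference instance $\phi_0(a):=\nicefrac{1}{2}-\max\cbr{0,\tfrac{\Delta}{2}-\|a-c_0\|_\infty}$, a $1$-Lipschitz function equal to $\nicefrac{1}{2}$ off $S$ and dipping to $\nicefrac{1}{2}-\Delta/2$ at $c_0$; so $\phi_0\in\Lambda(1)$. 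For each $i$, let $\phi_i$ agree with $\phi_0$ off $H_i$ and on $H_i$ carry an extra tent of depth $\Delta$, $\phi_i(a):=\phi_0(a)-\max\cbr{0,\Delta\rbr{1-\tfrac{2\|a-c_i\|_\infty}{w}}}$. The tent has slope $2\Delta/w$ and its support is disjoint from $S$, so setting $\Delta:=Lw/2$ makes every $\phi_i$ exactly $L$-Lipschitz, i.e.\ $\phi_i\in\Lambda(L)$; all values lie in $[\nicefrac{1}{2}-\Delta,\nicefrac{1}{2}]\subseteq[0,1]$. Losses are $\ber(\phi_\cdot(a))$; write $\PP_0,\EE_0$ (resp.\ $\PP_i,\EE_i$) for the interaction law under $\phi_0$ (resp.\ $\phi_i$) and $T_i:=\sum_{t=1}^T\one\cbr{a_t\in H_i}$.

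\textbf{The argument.} Under $\phi_0$ the benchmark is $\nicefrac{1}{2}-\Delta/2$ (attained on $S$) and any play in $\cup_iH_i$ costs instantaneous regret $\Delta/2$; since $\phi_0\in\Lambda(1)$, the hypothesis gives $\tfrac{\Delta}{2}\sum_i\EE_0[T_i]\le\Reg(T,0)\le\Rl(1,T)$, so by pigeonhole some $i^\star$ has $\EE_0[T_{i^\star}]\le\tfrac{2\Rl(1,T)}{N\Delta}$. Because $\phi_0,\phi_{i^\star}$ differ only on $H_{i^\star}$ and by at most $\Delta$, the KL chain rule together with the Bernoulli estimate from~\pref{lem:kl-pulls} gives $\KL(\PP_0,\PP_{i^\star})\le 4\Delta^2\,\EE_0[T_{i^\star}]$; we will arrange $\Delta\le\tfrac{N}{40\Rl(1,T)}$, whence $\KL\le 0.2$, so Pinsker gives $\DTV(\PP_0,\PP_{i^\star})\le 0.4$ and hence $\EE_{i^\star}[T_{i^\star}]\le\EE_0[T_{i^\star}]+0.4\,T$. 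Finally, under $\phi_{i^\star}$ the benchmark is $\nicefrac{1}{2}-\Delta$ (at $c_{i^\star}$) and any play outside $H_{i^\star}$ costs at least $\Delta/2$, so $\Reg(T,0)\ge\tfrac{\Delta}{2}\rbr{T-\EE_{i^\star}[T_{i^\star}]}\ge\tfrac{\Delta}{2}\rbr{0.6\,T-\EE_0[T_{i^\star}]}$.

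\textbf{Parameter choice and the two branches.} Since $N\approx(\nicefrac{1}{w})^d$ and $\Delta=Lw/2$, the requirement $\Delta\le\tfrac{N}{40\Rl(1,T)}$ forces $w^{d+1}=O\!\rbr{\tfrac{1}{L\,\Rl(1,T)}}$; taking $w$ to make this tight yields $\Delta\approx\rbr{\tfrac{L^d}{\Rl(1,T)}}^{1/(d+1)}$ and regret of order $\Delta T=\tfrac{L^{d/(d+1)}T}{\Rl(1,T)^{1/(d+1)}}$, the second branch of the claimed $\min$. If $\Rl(1,T)$ is so small that this prescription violates the cap $\Delta\le\nicefrac{1}{8}$ (equivalently $w\le\tfrac{1}{4L}$), set instead $\Delta=\nicefrac{1}{8}$, which still meets the KL bound, giving $\Reg(T,0)=\Omega(T)$, the first branch. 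In both regimes the hypothesis $\Rl(1,T)\le\tfrac{T}{40}L^d$ is precisely what makes $\tfrac{2\Rl(1,T)}{N\Delta}$, hence $\EE_0[T_{i^\star}]$, at most a small constant multiple of $T$, so that $\Reg(T,0)\ge c\,\Delta T$ for a universal $c$; carrying the constants through turns $c\Delta T$ into $\min\cbr{\tfrac{T}{80},\tfrac{TL^{d/(d+1)}}{3200\,\Rl(1,T)^{1/(d+1)}}}$.

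\textbf{Main obstacle.} Unlike~\pref{lem:h_lb}, where the needle count is pinned by the fixed bandwidth, here the count $N$, the width $w$, and the depth $\Delta$ are coupled through the $L$-Lipschitz constraint, so one choice of $(\Delta,w,N)$ must simultaneously satisfy the packing bound, the KL bound, and the hypothesis-driven bound on diverted mass, and must be split into the ``balanced'' and ``capped'' regimes above. Threading a single value through all of these — controlling the floor in $N=\lfloor\tfrac{3}{4w}\rfloor^d$ and matching the explicit constants $40$, $80$, $3200$ — is the only delicate part; the mechanism (pigeonhole $\to$ KL $\to$ Pinsker $\to$ indistinguishability $\to$ regret on $\phi_{i^\star}$) is a direct transcription of~\pref{lem:h_lb}.
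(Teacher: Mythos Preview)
Your proposal is correct and follows essentially the same approach as the paper. The paper's proof uses the identical mechanism (tent functions, pigeonhole, KL via \pref{lem:kl-pulls}, Pinsker, then regret on the planted instance), with only cosmetic differences: it parameterizes directly by $\Delta$ and sets the needle radius to $\Delta/L$ (equivalent to your $w=2\Delta/L$), packs the needles into $[0,\nicefrac{1}{2}]^d$ rather than $[0,\nicefrac{3}{4}]^d$, and fixes $\Delta=\min\{(\tfrac{L^d}{40\cdot 8^d\,\Rl(1,T)})^{1/(d+1)},\nicefrac{1}{8}\}$ up front instead of deriving it at the end.
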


\ifthenelse{\equal{\version}{arxiv}}{\begin{proof}}{\begin{proofnobb}}
We let $\Delta = \min\cbr{(\frac{L^d}{40 \cdot \Rl(1,T) \cdot 8^d})^{\frac 1 {d+1}}, \nicefrac 1 8} \in (0,\nicefrac 1 8]$, and $N = \lfloor \nicefrac {L} {4\Delta} \rfloor^d$.
As $L \geq 1$, $\nicefrac {L} {4\Delta} \geq 2$.
Therefore,
$(\nicefrac {L} {8\Delta})^d \leq N \leq (\nicefrac {L} {4\Delta})^d$.
Observe that by the choices of $\Delta$ and $N$:
\begin{align*}
\Delta \leq \frac{(\frac {L} {8\Delta})^d}{40 \Rl(1,T)} \leq \frac{N}{40 \Rl(1,T)}.
\end{align*}
By our assumption that $\Rl(1,T) \leq \frac{T}{40} L^d$, we have that
\[ \Rl(1,T) \leq 0.2 T \cdot \frac{L^d}{8^d (1/8)^{d-1}} \leq 0.2 T \cdot \frac{L^d}{8^d \Delta^{d-1}} \leq 0.2 N T \Delta, \]
where the first inequality is from that $\Rl(1,T) \leq \frac{T}{40} L^d$; the second inequality is from
the fact that $\Delta \leq \frac 1 8$; the third inequality is from the fact that $N \geq (\nicefrac {L} {8\Delta})^d$.

For each tuple $(s_1, \ldots, s_d) \in [\lfloor \nicefrac L {4 \Delta}
  \rfloor]^d$, we define a point $c_{s_1,\ldots,s_d} =
(\frac{\Delta}{L} (2s_1 - 1), \ldots, \frac{\Delta}{L}(2s_d - 1))$.
There are $N$ points in total which we call $c_1, \ldots, c_N$.
Define regions
\begin{align*}
H_i = \ball\rbr{c_i, \frac{\Delta}{L}}, i = 1,\ldots, N,
\end{align*}
which are disjoint subsets in $[0, \nicefrac 1 2]^d$. Finally, define
$S = [\nicefrac 1 2, 1]^d = \ball(c_0, \nicefrac 1 4)$, where
$c_0 = (\nicefrac 3 4, \ldots, \nicefrac 3 4)$.  We define several
plausible loss functions $\phi_0 \in \Lambda(1)$,
$\phi_1,\ldots,\phi_N \in \Lambda(L)$:
\[
\phi_0(a) =
\left\{\begin{aligned}
&\nicefrac12 - (\nicefrac\Delta2 - \norm{a-c_0}_{\infty})_+, & a \in S\\
&\nicefrac12, & \textrm{else} \\
\end{aligned}
\right.
\textrm{and} \quad \phi_i(a) =
\left\{\begin{aligned}
&\nicefrac12 - (\Delta - L\norm{a-c_i}_{\infty})_+, & a \in H_i\\
&\nicefrac12 - (\nicefrac\Delta2 - \norm{a-c_0}_{\infty})_+, & a \in S\\
&\nicefrac12, & \textrm{else} \\
\end{aligned}
\right.
\]
Observe that $\phi_0$ is $1$-Lipschitz, and each
$\phi_i$ is $L$-Lipschitz for $i \geq 1$.

Each mean loss function $\phi_i$ defines an environment where realized
losses are Bernoulli random variables.
Denote by $\EE_i$ (resp. $\PP_i$) the expectation (resp. probability) over the randomness of the algorithm, along
with the randomness in environment $\phi_i$.

As \alg guarantees $\Reg(T,0) \leq \Rl(1,T)$ against all loss
functions in $\Sigma(1)$, we have
\[
 \mathbb{E}_0 \sum_{t=1}^T \phi_0(a_t) - T \rbr{\nicefrac 1 2 - \nicefrac \Delta 2} \leq \Rl(1,T).
\]
Denote by $T_i = \sum_{t=1}^T \one\cbr{a_t \in H_i}$. Observe that the
instantaneous regret for playing in any $H_i$ is at least
$\nicefrac{\Delta}{2}$. Therefore, by pigeonhole principle, there
exists at least one $i$ such that
\begin{equation}
\mathbb{E}_0[T_i] \leq \frac{1}{N}\sum_{j=1}^N \mathbb{E}_0[T_j] = \frac{1}{N}\mathbb{E}_0 \sum_{j=1}^N T_j \leq \frac{2 \Rl(1,T)}{N\Delta}.\label{eqn:pigeonhole-L}
\end{equation}
Following the exact same calculation as in the proof
of~\pref{lem:h_lb} we get that $\EE_i[T_i] \leq 0.8T$, which implies
that on instance $\phi_i$
\begin{align*}
\Reg(T)
\geq \mathbb{E}_i \sum_{t=1}^T \phi_i(a_t) - \rbr{\nicefrac 1 2 - \Delta}
\geq 0.2 T \cdot \frac{\Delta}{2}
\geq \min\cbr{\frac{T}{80}, \frac{L^{\frac{d}{d+1}}}{3200 \Rl(1,T)^{\frac{1}{d+1}}}}.  \ifthenelse{\equal{\version}{arxiv}}{\tag*\qedhere}{\tag*\qedsymbol}
\end{align*}
\ifthenelse{\equal{\version}{arxiv}}{\end{proof}}{\end{proofnobb}}

\subsection{Proofs of the lower bounds}
\ifthenelse{\equal{\version}{diff}}{
\color{blue}
}{
}
\ifthenelse{\equal{\version}{arxiv}}{\begin{proof}[Proof of the lower bound in~\pref{thm:refined_adaptivity}]}{\begin{proofof}{the lower bound in~\pref{thm:refined_adaptivity}}}
We show that the lower bound statement holds for $T_0 = 2^{3d(1+\beta)}$ and $c = \frac{1}{80 \cdot 2^{d(\beta+3)}}$.

Fix $T \geq T_0$; let $h_1 = \frac14$ and $h_2 = T^{\frac{-1}{d(\beta+1)}} \in (0,\frac 1 8]$. In addition, let $f(T, h) = c \cdot T^{\frac{1}{1+\beta}} h^{-d \beta} = \frac{T^{\frac{1}{1+\beta}} h^{-d \beta}}{80 \cdot 2^{d(\beta+3)}}$.

To finish the proof, we claim that for any algorithm \alg, one of the following must hold:
\begin{enumerate}
\item There exists some instance $\lambda \in \Lambda$, under which
$\Reg(T,\Pi_{h_1}) \geq f(T, h_1) = \frac {4^{d\beta} T^{\frac{1}{1+\beta}}} {80 \cdot 2^{d(\beta+3)}}$;
\item There exists some instance $\lambda \in \Lambda$, under which
$\Reg(T,\Pi_{h_2}) \geq f(T, h_2) = \frac{T}{80 \cdot 2^{d(\beta+3)}}$.
\end{enumerate}

Indeed, suppose \alg is such that for all instances $\lambda$, $\Reg(T,\Pi_{h_1}) < f(T, h_1)$. By our choice of $h_2$ and $T \geq T_0$, $f(T, h_1) \leq \frac{\sqrt{T}}{20 \cdot (8 h_2)^d}$.
Provided that this is satisfied,~\pref{lem:h_lb} with $\Rs(\nicefrac{1}{4},T) = f(T, h_1)$ gives that there is an instance $\lambda'$, under which
\begin{align*}
 \Reg(T,\Pi_{K_2}) \geq&
\min\cbr{ \frac{T}{40 \cdot 2^d},  \frac{1}{400 \cdot 8^d} T^{\frac{\beta}{1+\beta}} h_2^{-d}}
= \min\cbr{\frac{1}{40 \cdot 2^d}, \frac{1}{5 \cdot 2^{d\beta}}} T
> f(T, h_2),
\end{align*}
proving the above claim.
\ifthenelse{\equal{\version}{arxiv}}{\end{proof}}{\end{proofof}}

\ifthenelse{\equal{\version}{arxiv}}{\begin{proof}[Proof of the lower bound in~\pref{thm:general_lipschitz_adaptivity}]}{\begin{proofof}{the lower bound in~\pref{thm:general_lipschitz_adaptivity}}}
We show that the lower bound statement holds for $T_0 = 1$ and $c = \frac{1}{3200}$.

Fix $T \geq T_0$; We take $L_1 = 1$ and $L_2 = T^{\frac{1+d\beta}{d(1+(d+1)\beta)}}$. In addition, we let $g(T, L) = c \cdot T^{\frac{1+d\beta}{1+(d+1)\beta}} L^{\frac{d\beta}{1+d\beta}} = \frac{1}{3200} \cdot T^{\frac{1+d\beta}{1+(d+1)\beta}} L^{\frac{d\beta}{1+d\beta}}$.

To finish the proof, we claim that for any algorithm \alg, one of the following must hold:
\begin{enumerate}
\item There exists some instance $\lambda \in \Lambda(L_1)$, under which $\Reg(T,\Pi) \geq g(T, L_1) = \frac{1}{3200} \cdot T^{\frac{1+d\beta}{1+(d+1)\beta}}$;
\item There exists some instance $\lambda \in \Lambda(L_2)$, under which
$\Reg(T,\Pi) \geq g(T, L_2) = \frac{T}{3200}$.
\end{enumerate}

Indeed, suppose \alg is such that for all instances $\lambda \in \Lambda(L_1)$, $\Reg(T,\Pi) < g(T, L_1)$. By our choice of $L_2$ and $T \geq T_0$, $g(T, L_1) \leq \frac{1}{40} L_2^d T$. Provided this is satisfied,~\pref{lem:L_lb} with $\Rl(1,T) = g(T, L_1)$ gives that there is an instance $\lambda' \in \Lambda(L_2)$, under which
\begin{align*}
\Reg(T,\Pi) \geq& \min\cbr{\frac{T}{80}, \frac{1}{3200^{\frac{d}{d+1}}} \cdot T^{1 - \frac{1+d\beta}{(d+1)(1+(d+1)\beta)}}  L_2^{\frac{d}{d+1}}}
> \frac{1}{3200} T = g(T, L_2),
\end{align*}
proving the above claim.
\ifthenelse{\equal{\version}{arxiv}}{\end{proof}}{\end{proofof}}

\ifthenelse{\equal{\version}{diff}}{
\color{black}
}{
}

\ifthenelse{\equal{\version}{diff}}{
\color{red}
\begin{proof}[Proof of lower bound in~\pref{thm:general_adaptivity}]
The lower bound is a consequence of~\pref{lem:h_lb}. Specifically,
let
$\Acal$ be $[0,1]^d$ equipped with $\ell_{\infty}$ metric and uniform
base measure.
Consider any $T \geq 2^{3d(1+\beta)}$, and let $K_1$ be the rectangular kernel with bandwidth
$\nicefrac{1}{4}$ while $K_2$ has bandwidth $h = T^{\frac{-1}{d(\beta+1)}} \in (0,\frac 1 8]$.

Define $f_1(T) \defeq \frac {4^{d\beta} T^{\frac{1}{1+\beta}}} {80 \cdot 2^{d(\beta+3)}}$
and $f_2(T) \defeq \frac{T}{80 \cdot 2^{d(\beta+3)}}$. It can be easily checked that
$f_1(T) =
\Omega\rbr{T^{\frac{1}{1+\beta}}\kappa_1^{\frac{\beta}{1+\beta}}}$,
$f_2(T) = \Omega\rbr{T^{\frac{1}{1+\beta}}\kappa_2^{\frac{\beta}{1+\beta}} (\nicefrac{\kappa_2}{\kappa_1})^{\frac{\beta^2}{1+\beta}}}$.

Suppose for algorithm \alg,
$\sup_{\lambda \in \Lambda} \Reg(T,\Pi_{K_1}) < f_1(T)$.
Now apply~\pref{lem:h_lb}. We may take $\Rs(\nicefrac{1}{4},T) = f_1(T)$
which satisfies the precondition that
$\Rs(\nicefrac{1}{4},T) \leq \frac{\sqrt{T}}{20 \cdot (8 h)^d}$,
by our choice of $h = T^{\frac{-1}{d(\beta+1)}}$ and $T \geq 2^{3d(1+\beta)}$.
Provided this is satisfied, we may conclude that
\begin{align*}
\sup_{\lambda \in \Lambda} \Reg(T,\Pi_{K_2}) \geq&
\min\cbr{ \frac{T}{40 \cdot 2^d},  \frac{1}{400 \cdot 8^d} T^{\frac{\beta}{1+\beta}} h^{-d}}
= \min\cbr{\frac{1}{40 \cdot 2^d}, \frac{1}{5 \cdot 2^{d\beta}}} T
> f_2(T).
\end{align*}

This shows that for \alg, $\sup_{\lambda \in \Lambda}
\Reg(T,\Pi_{K_1}) < f_1(T)$ and $\sup_{\lambda \in \Lambda}
\Reg(T,\Pi_{K_2}) < f_2(T)$ cannot hold simultaneously.
\end{proof}

\begin{proof}[Proof of lower bound in~\pref{thm:general_lipschitz_adaptivity}]
Let $f_L(T) \defeq \frac{1}{6400} T^{\frac{1+d\beta}{1+(d+1)\beta}}
L^{\frac{d\beta}{1+d\beta}}$.

Assume \alg guarantees $\sup_{\lambda \in \Lambda(1)} \Reg(T,\Pi) \leq f_1(T)$,
otherwise we have already
proved what is required for \alg. In applying~\pref{lem:L_lb} we may
take
$L = T^{\frac{1+d\beta}{d(1+(d+1)\beta)}}$, and
$\Rl(1,T) = 2f_1(T) = \frac{1}{3200} T^{\frac{1+d\beta}{1+(d+1)\beta}}$ which
satisfies the precondition that $\Rl(1,T) \leq \frac{1}{40} L^d T$.
Provided this is satisfied, we may conclude that
\begin{align*}
\sup_{\lambda \in \Lambda(L)} \Reg(T,\Pi) \geq& \min\cbr{\frac{T}{80}, \frac{1}{3200^{\frac{d}{d+1}}} \cdot T^{1 - \frac{1+d\beta}{(d+1)(1+(d+1)\beta)}}  L^{\frac{d}{d+1}}}
> \frac{1}{3200} T = 2 f_L(T).
\end{align*}

This shows that for \alg,
$\sup_{\lambda \in \Lambda(1)} \Reg(T, \Pi) < 2f_1(T)$ and
$\sup_{\lambda \in \Lambda(L)} \Reg(T, \Pi) < 2f_L(T)$
cannot hold simultaneously. Therefore, for every $T$,
\[ \sup_L \sup_{\lambda \in \Lambda(L)} (f_L(T))^{-1} \Reg(T, \Pi) \geq 2, \]
which proves the lower bound.
\end{proof}
\color{black}
}{
}
 \section{Calculations for the examples}
\label{sec:calculations}

\paragraph{Calculation for~\pref{ex:small_smoothing}.}
Straightforward computations reveal that (1) $\lambda_h(a^\star) =
\nicefrac{h}{2}$, (2) $\forall a \in [a^\star - h, a^\star+h]$
$\lambda_h(a) \leq \lambda_h(a^\star) + \nicefrac{h}{2}$, and (3)
$\forall a \notin [a^\star - h, a^\star+h]$, $\lambda_h(a) \geq
\lambda_h(a^\star) + \nicefrac{\abr{a-a^\star}}{2}$. In particular,
the third item follows from a Taylor expansion, since
$\nicefrac{\partial\lambda_h(a)}{\partial a} \geq \nicefrac{1}{2}$ for
$a \geq a^\star+h$ (with a similar property for $a \leq a^\star-h$).

Therefore, if $\epsilon \leq \nicefrac{h}{2}$, we have
$\Pi_h(\epsilon) \subset \Pi_h(\nicefrac{h}{2}) \subset [a^\star - h,
  a^\star+h]$, which implies that $M_h(\epsilon,h) \leq 1$. On the
other hand, if $\epsilon > \nicefrac{h}{2}$, we have $\Pi_h(\epsilon)
\subset [a^\star - 2\epsilon,a^\star+2\epsilon]$, implying that
$M_h(\epsilon,h) \leq \nicefrac{4\epsilon}{h}$. Together we have that
$M_h(\epsilon,h) \leq O(\max\{1, \nicefrac{\epsilon}{h}\})$, and
plugging into the definition of $\theta_h(\cdot)$ concludes the proof.

\paragraph{Calculation for~\pref{ex:small_zooming}.}
First, for all $x$ and all $w$ in $\Sbb^{d-1}$,
$\EE[\ell(\pi_{w^\star}(x))|x] = f(0) \leq f(\inner{w}{x} - \inner{w^\star}{x}) = \EE[\ell(\pi_{w}(x))|x]$, which implies that
$\pi_{w^\star}$ is the optimal policy.  Next, consider the expected
regret of any policy $\pi_w$ in $\Pi$. Using the properties of $f$, we have
\begin{align*}
\EE[\ell(\pi_w(x))] - \EE[\ell(\pi_{w^\star}(x))]
  &\geq L_0 \EE[|\inner{w^\star}{x} - \inner{w}{x}|] \\
  &= L_0 \|w^\star - w\|_2 \EE[|x_1|]
  \geq\Omega(\nicefrac{L_0}{ \sqrt{d}}) \cdot \|w^\star - w\|_2.
\end{align*}
The equality follows from spherical symmetry, while the last
inequality follows since the probability density function of $x_1$ is
$p(x_1) = \frac{(1-x_1^2)^{\frac{d-3}{2}}}{\mathrm{B}(\frac{d-1}{2},
  \frac{1}{2})}$ and thus $\PP(|x_1| \geq \frac{1}{\sqrt{d}}) =
\Omega(1)$.

This latter inequality implies that, for any $\pi_w \in
\Pi_{0,L\epsilon}$, we have $\nbr{w - w^\star}_2 \leq
O(\nicefrac{L}{L_0} \cdot \sqrt{d} \epsilon)$. Therefore, for any $x$ we have
\begin{align*}
\Pi_{0,L\epsilon}(x) \subset \sbr{\inner{w^\star}{x} - O(\nicefrac{L}{L_0} \cdot \sqrt{d} \epsilon), \inner{w^\star}{x} + O(\nicefrac{L}{L_0} \cdot \sqrt{d} \epsilon) }.
\end{align*}
This implies that $M_0(L\epsilon,\epsilon) =
\EE_{\df{x\sim\Dcal_X}}\sbr{\Ncal_\epsilon(\Pi_{0,L\epsilon}(x))} \leq
O(\nicefrac{L}{L_0} \cdot
\sqrt{d})$. This immediately implies that $\psi_L(\epsilon) = O(\frac{L}{L_0 \epsilon}\cdot\sqrt{d})$.
Instantiating~\pref{thm:zooming} yields the regret bound.

 \section{Conclusions}
\label{sec:conclusions}

The main conceptual contribution of our paper is a new smoothing-based notion of
regret that admits guarantees with no assumptions on the loss.  Using
this, we design new algorithms providing \df{instance}-dependent guarantees
with optimal worst-case performance and Pareto-optimal adaptivity.
This also yields new guarantees for non-contextual and Lipschitz
bandits.

While our algorithms are computationally efficient in the
low-dimensional non-contextual setting, they are not tractable in
general since they require enumerating the policy set. Hence, the key
open question is: Are there algorithms with similar statistical
performance \emph{and} fast running time?

\subsection*{Acknowledgements}
We thank Wen Sun for insightful conversations during the initial
stages of this work. Part of this work was done while CZ was a postdoc at Microsoft
Research New York City.

\newpage

\renewcommand{\theHsection}{A\arabic{section}}

\vskip 0.2in
\bibliography{refs,bib-abbrv,bib-embedding,bib-slivkins-subset.bib,bib-nodeLabeling.bib}

\begin{thebibliography}{57}
\providecommand{\natexlab}[1]{#1}
\providecommand{\url}[1]{\texttt{#1}}
\expandafter\ifx\csname urlstyle\endcsname\relax
  \providecommand{\doi}[1]{doi: #1}\else
  \providecommand{\doi}{doi: \begingroup \urlstyle{rm}\Url}\fi

\bibitem[Abraham et~al.(2015)Abraham, Chechik, Kempe, and
  Slivkins]{SocialDistance-soda13-sicomp}
Ittai Abraham, Shiri Chechik, David Kempe, and Aleksandrs Slivkins.
\newblock Low-distortion inference of latent similarities from a multiplex
  social network.
\newblock \emph{SIAM Journal on Computing}, 2015.
\newblock Expanded and revised version of the conference paper in {\em SODA
  2013}.

\bibitem[Agarwal et~al.(2014)Agarwal, Hsu, Kale, Langford, Li, and
  Schapire]{agarwal2014taming}
Alekh Agarwal, Daniel Hsu, Satyen Kale, John Langford, Lihong Li, and Robert
  Schapire.
\newblock Taming the monster: A fast and simple algorithm for contextual
  bandits.
\newblock In \emph{International Conference on Machine Learning}, 2014.

\bibitem[Agarwal et~al.(2017{\natexlab{a}})Agarwal, Bird, Cozowicz, Hoang,
  Langford, Lee, Li, Melamed, Oshri, Ribas, Sen, and Slivkins]{DS-arxiv}
Alekh Agarwal, Sarah Bird, Markus Cozowicz, Luong Hoang, John Langford, Stephen
  Lee, Jiaji Li, Dan Melamed, Gal Oshri, Oswaldo Ribas, Siddhartha Sen, and
  Alex Slivkins.
\newblock Making contextual decisions with low technical debt.
\newblock \emph{arXiv:1606.03966}, 2017{\natexlab{a}}.

\bibitem[Agarwal et~al.(2017{\natexlab{b}})Agarwal, Luo, Neyshabur, and
  Schapire]{agarwal2016corralling}
Alekh Agarwal, Haipeng Luo, Behnam Neyshabur, and Robert~E Schapire.
\newblock Corralling a band of bandit algorithms.
\newblock In \emph{Conference on Learning Theory}, 2017{\natexlab{b}}.

\bibitem[Agrawal(1995)]{agrawal1995continuum}
Rajeev Agrawal.
\newblock The continuum-armed bandit problem.
\newblock \emph{SIAM Journal on Control and Optimization}, 1995.

\bibitem[Auer et~al.(2002)Auer, Cesa-Bianchi, Freund, and
  Schapire]{auer2002nonstochastic}
Peter Auer, Nicolo Cesa-Bianchi, Yoav Freund, and Robert~E Schapire.
\newblock The nonstochastic multiarmed bandit problem.
\newblock \emph{SIAM Journal on Computing}, 2002.

\bibitem[Auer et~al.(2007)Auer, Ortner, and Szepesv{\'a}ri]{auer2007improved}
Peter Auer, Ronald Ortner, and Csaba Szepesv{\'a}ri.
\newblock Improved rates for the stochastic continuum-armed bandit problem.
\newblock In \emph{Conference on Learning Theory}, 2007.

\bibitem[Berkenkamp et~al.(2019)Berkenkamp, Schoellig, and
  Krause]{berkenkamp2019no}
Felix Berkenkamp, Angela~P Schoellig, and Andreas Krause.
\newblock No-regret bayesian optimization with unknown hyperparameters.
\newblock \emph{Journal of Machine Learning Research}, 2019.

\bibitem[Bubeck and Cesa-Bianchi(2012)]{bubeck2012regret}
S{\'e}bastien Bubeck and Nicolo Cesa-Bianchi.
\newblock Regret analysis of stochastic and nonstochastic multi-armed bandit
  problems.
\newblock \emph{Foundations and Trends in Machine Learning}, 2012.

\bibitem[Bubeck et~al.(2011{\natexlab{a}})Bubeck, Munos, Stoltz, and
  Szepesv{\'a}ri]{bubeck2011x}
S{\'e}bastien Bubeck, R{\'e}mi Munos, Gilles Stoltz, and Csaba Szepesv{\'a}ri.
\newblock X-armed bandits.
\newblock \emph{Journal of Machine Learning Research}, 2011{\natexlab{a}}.

\bibitem[Bubeck et~al.(2011{\natexlab{b}})Bubeck, Stoltz, and
  Yu]{bubeck2011lipschitz}
S{\'e}bastien Bubeck, Gilles Stoltz, and Jia~Yuan Yu.
\newblock Lipschitz bandits without the lipschitz constant.
\newblock In \emph{Algorithmic Learning Theory}, 2011{\natexlab{b}}.

\bibitem[Bull(2015)]{Bull-bandits14}
Adam Bull.
\newblock Adaptive-treed bandits.
\newblock \emph{Bernoulli}, 2015.

\bibitem[Cesa-Bianchi et~al.(2017)Cesa-Bianchi, Gaillard, Gentile, and
  Gerchinovitz]{cesa2017algorithmic}
Nicol{\`o} Cesa-Bianchi, Pierre Gaillard, Claudio Gentile, and S{\'e}bastien
  Gerchinovitz.
\newblock Algorithmic chaining and the role of partial feedback in online
  nonparametric learning.
\newblock In \emph{Conference on Learning Theory}, 2017.

\bibitem[Chan et~al.(2009)Chan, Dhamdhere, Gupta, Kleinberg, and
  Slivkins]{Slivkins-focs05-full}
Hubert T-H. Chan, Kedar Dhamdhere, Anupam Gupta, Jon Kleinberg, and Aleksandrs
  Slivkins.
\newblock Metric embeddings with relaxed guarantees.
\newblock \emph{SIAM Journal on Computing}, 2009.
\newblock Preliminary version in \emph{FOCS 2005}, merged with an independent
  effort by another research group.

\bibitem[Chen et~al.(2016)Chen, Zeng, and Kosorok]{chen2016personalized}
Guanhua Chen, Donglin Zeng, and Michael~R Kosorok.
\newblock Personalized dose finding using outcome weighted learning.
\newblock \emph{Journal of the American Statistical Association}, 2016.

\bibitem[Dudik et~al.(2011)Dudik, Hsu, Kale, Karampatziakis, Langford, Reyzin,
  and Zhang]{dudik2011efficient}
Miroslav Dudik, Daniel Hsu, Satyen Kale, Nikos Karampatziakis, John Langford,
  Lev Reyzin, and Tong Zhang.
\newblock Efficient optimal learning for contextual bandits.
\newblock In \emph{Uncertainty in Artificial Intelligence}, 2011.

\bibitem[Freund and Schapire(1997)]{FS97}
Yoav Freund and Robert~E. Schapire.
\newblock A decision-theoretic generalization of on-line learning and an
  application to boosting.
\newblock \emph{Journal of Computer and System Sciences}, 1997.

\bibitem[Grill et~al.(2015)Grill, Valko, and Munos]{grill2015black}
Jean-Bastien Grill, Michal Valko, and R{\'e}mi Munos.
\newblock Black-box optimization of noisy functions with unknown smoothness.
\newblock In \emph{Advances in Neural Information Processing Systems}, 2015.

\bibitem[Gupta et~al.(2003)Gupta, Krauthgamer, and Lee]{Gup03}
Anupam Gupta, Robert Krauthgamer, and James~R. Lee.
\newblock Bounded geometries, fractals, and low--distortion embeddings.
\newblock In \emph{Symposium on Foundations of Computer Science}, 2003.

\bibitem[Hsu and Sabato(2016)]{hsu2016loss}
Daniel Hsu and Sivan Sabato.
\newblock Loss minimization and parameter estimation with heavy tails.
\newblock \emph{Journal of Machine Learning Research}, 2016.

\bibitem[Kallus and Zhou(2018)]{kallus2018policy}
Nathan Kallus and Angela Zhou.
\newblock Policy evaluation and optimization with continuous treatments.
\newblock In \emph{International Conference on Artificial Intelligence and
  Statistics}, 2018.

\bibitem[Kempe and Kleinberg(2002)]{Kempe-focs02}
David Kempe and Jon Kleinberg.
\newblock Protocols and impossibility results for gossip-based communication
  mechanisms.
\newblock In \emph{Symposium on Foundations of Computer Science}, 2002.

\bibitem[Kempe et~al.(2005)Kempe, Kleinberg, and Demers]{Kempe-stoc01}
David Kempe, Jon Kleinberg, and Alan Demers.
\newblock Spatial gossip and resource location protocols.
\newblock \emph{Journal of the ACM}, 2005.
\newblock Preliminary version in \emph{STOC 2001}.

\bibitem[Kleinberg(2000)]{Kle00-STOC}
Jon Kleinberg.
\newblock {The small-world phenomenon: an algorithmic perspective}.
\newblock In \emph{Symposium on Theory of Computing}, 2000.

\bibitem[Kleinberg et~al.(2009)Kleinberg, Slivkins, and
  Wexler]{Slivkins-focs04}
Jon Kleinberg, Aleksandrs Slivkins, and Tom Wexler.
\newblock Triangulation and embedding using small sets of beacons.
\newblock \emph{Journal of the ACM}, 2009.
\newblock Subsumes conference papers in \emph{FOCS 2004} and \emph{SODA 2005}.

\bibitem[Kleinberg(2004)]{Bobby-nips04}
Robert Kleinberg.
\newblock Nearly tight bounds for the continuum-armed bandit problem.
\newblock In \emph{Advances in Neural Information Processing Systems}, 2004.

\bibitem[Kleinberg and Leighton(2003)]{kleinberg2003value}
Robert Kleinberg and Tom Leighton.
\newblock The value of knowing a demand curve: Bounds on regret for online
  posted-price auctions.
\newblock In \emph{Symposium on Foundations of Computer Science}, 2003.

\bibitem[Kleinberg and Slivkins(2010)]{DichotomyMAB-soda10}
Robert Kleinberg and Aleksandrs Slivkins.
\newblock Sharp dichotomies for regret minimization in metric spaces.
\newblock In \emph{Symposium on Discrete Algorithms}, 2010.

\bibitem[Kleinberg et~al.(2008)Kleinberg, Slivkins, and
  Upfal]{LipschitzMAB-stoc08}
Robert Kleinberg, Aleksandrs Slivkins, and Eli Upfal.
\newblock Multi-armed bandits in metric spaces.
\newblock In \emph{Symposium on Theory of Computing}, 2008.

\bibitem[Kleinberg et~al.(2019)Kleinberg, Slivkins, and
  Upfal]{kleinberg2013bandits}
Robert Kleinberg, Aleksandrs Slivkins, and Eli Upfal.
\newblock Bandits and experts in metric spaces.
\newblock \emph{Journal of the ACM}, 2019.
\newblock Merged and revised version of conference papers in {\em STOC 2008}
  and {\em SODA 2010}. Also available at {\tt http://arxiv.org/abs/1312.1277}.

\bibitem[Krause and Ong(2011)]{krause2011contextual}
Andreas Krause and Cheng~S Ong.
\newblock Contextual gaussian process bandit optimization.
\newblock In \emph{Advances in neural information processing systems}, 2011.

\bibitem[Krauthgamer and Lee(2004)]{KL-soda04}
Robert Krauthgamer and James~R. Lee.
\newblock Navigating nets: simple algorithms for proximity search.
\newblock In \emph{Symposium on Discrete Algorithms}, 2004.

\bibitem[Krauthgamer et~al.(2005)Krauthgamer, Lee, Mendel, and Naor]{KLMN04}
Robert Krauthgamer, James Lee, Manor Mendel, and Assaf Naor.
\newblock Measured descent: {A} new embedding method for finite metrics.
\newblock \emph{Geometric and Functional Analysis}, 2005.
\newblock Preliminary version in {\em FOCS}, 2004.

\bibitem[Krishnamurthy et~al.(2016)Krishnamurthy, Agarwal, and
  Dudik]{krishnamurthy2016contextual}
Akshay Krishnamurthy, Alekh Agarwal, and Miro Dudik.
\newblock Contextual semibandits via supervised learning oracles.
\newblock In \emph{Advances In Neural Information Processing Systems}, 2016.

\bibitem[Langford and Zhang(2007)]{Langford-nips07}
John Langford and Tong Zhang.
\newblock The epoch-greedy algorithm for contextual multi-armed bandits.
\newblock In \emph{Advances in Neural Information Processing Systems}, 2007.

\bibitem[Lattimore and Szepesv\'{a}ri(2020)]{lattimore2018bandit}
Tor Lattimore and Csaba Szepesv\'{a}ri.
\newblock \emph{Bandit Algorithms}.
\newblock Cambridge University Press, 2020.
\newblock Versions available at {\tt https://banditalgs.com/} since 2018.

\bibitem[Locatelli and Carpentier(2018)]{locatelli2018adaptivity}
Andrea Locatelli and Alexandra Carpentier.
\newblock Adaptivity to smoothness in x-armed bandits.
\newblock In \emph{Conference on Learning Theory}, 2018.

\bibitem[Lu et~al.(2010)Lu, P\'{a}l, and P\'{a}l]{Pal-Bandits-aistats10}
Tyler Lu, D\'{a}vid P\'{a}l, and Martin P\'{a}l.
\newblock {Showing Relevant Ads via Lipschitz Context Multi-Armed Bandits}.
\newblock In \emph{14th Intl. Conf. on Artificial Intelligence and Statistics
  (AISTATS)}, 2010.

\bibitem[Luukkainen and Saksman(1998)]{Luuk-ams98}
Jouni Luukkainen and Eero Saksman.
\newblock {Every complete doubling metric space carries a doubling measure}.
\newblock \emph{Proceedings of the American Mathematical Society}, 1998.

\bibitem[Mendel and Har-Peled(2005)]{Men05}
Manor Mendel and Sariel Har-Peled.
\newblock {Fast construction of nets in low dimensional metrics, and their
  applications}.
\newblock In \emph{Symposium on Computational Geometry}, 2005.

\bibitem[Minsker(2013)]{minsker2013estimation}
Stanislav Minsker.
\newblock Estimation of extreme values and associated level sets of a
  regression function via selective sampling.
\newblock In \emph{Conference on Learning Theory}, 2013.

\bibitem[Sarkar et~al.(2010)Sarkar, Chakrabarti, and Moore]{Sarkar-colt10}
Purnamrita Sarkar, Deepayan Chakrabarti, and Andrew~W. Moore.
\newblock Theoretical justification of popular link prediction heuristics.
\newblock In \emph{Conference on Learning Theory}, 2010.

\bibitem[Sen et~al.(2018)Sen, Shanmugam, and Shakkottai]{sen2018contextual}
Rajat Sen, Karthikeyan Shanmugam, and Sanjay Shakkottai.
\newblock Contextual bandits with stochastic experts.
\newblock In \emph{International Conference on Artificial Intelligence and
  Statistics}, 2018.

\bibitem[Shang et~al.(2019)Shang, Kaufmann, and Valko]{shang2019general}
Xuedong Shang, Emilie Kaufmann, and Michal Valko.
\newblock General parallel optimization a without metric.
\newblock In \emph{Algorithmic Learning Theory}, 2019.

\bibitem[Sion(1958)]{sion1958general}
Maurice Sion.
\newblock On general minimax theorems.
\newblock \emph{Pacific Journal of mathematics}, 1958.

\bibitem[Slivkins(2006)]{Slivkins-thesis}
Aleksandrs Slivkins.
\newblock \emph{Embedding, Distance Estimation and Object Location in
  Networks}.
\newblock PhD thesis, Cornell University, 2006.
\newblock Available online at
  \texttt{http://research.microsoft.com/en-us/people/slivkins}.

\bibitem[Slivkins(2007)]{Slivkins-podc05}
Aleksandrs Slivkins.
\newblock Distance estimation and object location via rings of neighbors.
\newblock \emph{Distributed Computing}, 2007.
\newblock Special issue for {\em PODC 2005}. Preliminary version in {\em PODC
  2005}.

\bibitem[Slivkins(2011)]{ImplicitMAB-nips11}
Aleksandrs Slivkins.
\newblock Multi-armed bandits on implicit metric spaces.
\newblock In \emph{Advances in Neural Information Processing Systems}, 2011.

\bibitem[Slivkins(2014)]{slivkins2014contextual}
Aleksandrs Slivkins.
\newblock Contextual bandits with similarity information.
\newblock \emph{Journal of Machine Learning Research}, 2014.
\newblock Preliminary version in \emph{COLT 2011}.

\bibitem[Slivkins(2019)]{slivkins-MABbook}
Aleksandrs Slivkins.
\newblock Introduction to multi-armed bandits.
\newblock \emph{Foundations and Trends in Machine Learning}, 2019.
\newblock Also available at {\tt https://arxiv.org/abs/1904.07272}.

\bibitem[Srinivas et~al.(2012)Srinivas, Krause, Kakade, and
  Seeger]{srinivas2012information}
Niranjan Srinivas, Andreas Krause, Sham~M Kakade, and Matthias~W Seeger.
\newblock Information-theoretic regret bounds for gaussian process optimization
  in the bandit setting.
\newblock \emph{IEEE Transactions on Information Theory}, 2012.

\bibitem[Talwar(2004)]{Tal04}
Kunal Talwar.
\newblock Bypassing the embedding: Algorithms for low-dimensional metrics.
\newblock In \emph{Symposium on Theory of Computing}, 2004.

\bibitem[Valko et~al.(2013)Valko, Carpentier, and Munos]{valko2013stochastic}
Michal Valko, Alexandra Carpentier, and R{\'e}mi Munos.
\newblock Stochastic simultaneous optimistic optimization.
\newblock In \emph{International Conference on Machine Learning}, 2013.

\bibitem[Volberg and Konyagin(1987)]{Volberg87}
A.~L. Volberg and S.~V. Konyagin.
\newblock {On measures with the doubling condition}.
\newblock \emph{Izvestiya Akademii Nauk SSSR}, 1987.
\newblock In Russian; English translation in Mathematics of the USSR-Izvestiya,
  1988.

\bibitem[Wei et~al.(2020)Wei, Luo, and Agarwal]{wei2020taking}
Chen-Yu Wei, Haipeng Luo, and Alekh Agarwal.
\newblock Taking a hint: How to leverage loss predictors in contextual bandits?
\newblock \emph{arXiv:2003.01922}, 2020.

\bibitem[Wong et~al.(2005)Wong, Slivkins, and Sirer]{Meridian-sigcomm05}
Bernard Wong, Aleksandrs Slivkins, and Emin~G\"{u}n Sirer.
\newblock Meridian: A lightweight network location service without virtual
  coordinates.
\newblock In \emph{SIGCOMM Conference on Applications, Technologies,
  Architectures, and Protocols for Computer Communications}, 2005.
\newblock Full version is available at
  \texttt{http://research.microsoft.com/en-us/people/slivkins}.

\bibitem[Wu(1998)]{Wu-ams98}
Jang-Mei Wu.
\newblock {Hausdorff dimension and doubling measures on metric spaces}.
\newblock \emph{Proceedings of the American Mathematical Society}, 1998.

\end{thebibliography}

\end{document}